\newcommand{\theHalgorithm}{\arabic{algorithm}}
\theoremstyle{plain}
\newtheorem{theorem}{Theorem}[section]
\newtheorem{proposition}[theorem]{Proposition}
\newtheorem{lemma}[theorem]{Lemma}
\theoremstyle{definition}
\newtheorem{definition}[theorem]{Definition}
\theoremstyle{remark}
\newcommand{\pure}{{\tt pure}}
\newcommand{\Ex}{{\cal E}}
\newcommand{\desc}{{\tt desc}}
\newcommand{\leaves}{{\tt leaves}}
\newcommand{\dist}{{\tt dist}}
\newcommand{\avg}{{\tt avg}}
\newcommand{\radius}{{\tt radius}}
\newcommand{\avgdist}{{\tt AvgDist}}
\newcommand{\diamset}{{\tt diam}}
\newcommand{\maxdiamset}{{\tt max\mbox{-}diam}}
\newcommand{\avgdiamset}{{\tt avg\mbox{-}diam}}
\newcommand{\complink}{{\tt complete-linkage}}
\newcommand{\singlelink}{{\tt single-linkage}}
\newcommand{\avglink}{{\tt average-linkage}}
\newcommand{\minimax}{{\tt minimax}}
\newcommand{\cost}{{\tt cost}}
\newcommand{\link}{{\tt Link}}
\newcommand{\Pts}{{\tt Pts}}
\newcommand{\phisum}{\phi_{\Sigma}}
\newcommand{\avec}{\mathbf{a}}
\newcommand{\remove}[1]{}
\newcommand{\red}[1]{{\color{red} #1}}
\newcommand{\OPTAVG}{\textrm{OPT$_{{\tt AV}}$}\xspace}
\newcommand{\OPTDIM}{\textrm{OPT$_{{\tt DM}}$}\xspace}
\newcommand{\OPTDIMK}{\textrm{OPT$_{{\tt DM}}(k)$}\xspace}
\newcommand{\OPTAVGK}{\textrm{OPT$_{{\tt AV}}(k)$}\xspace}
\newcommand{\A}{\mathcal{A}}
\newcommand{\mom}[1]{{\left\vert\kern-0.25ex\left\vert\kern-0.25ex\left\vert #1 \right\vert\kern-0.25ex\right\vert\kern-0.25ex\right\vert}}
\renewcommand{\L}{\mathcal{L}}
\newcommand{\C}{\mathcal{C}}
\newcommand{\T}{\mathcal{T}}
\newcommand{\Diam}{{\tt DM}}
\title{New bounds on the cohesion of complete-link and other linkage methods for agglomeration clustering}
\author{
  Sanjoy Dasgupta \\ sadasgupta@ucsd.edu \\  University of California San Diego  \\
\And
  Eduardo S. Laber \\ laber@inf.puc-rio.br \\ PUC-RIO \\
  }
\begin{document}
\maketitle

\remove{

\twocolumn[
\icmltitle{New bounds on the cohesion of complete-Link and other linkage methods}



\icmlsetsymbol{equal}{*}

\begin{icmlauthorlist}
\icmlauthor{Firstname1 Lastname1}{equal,yyy}
\icmlauthor{Firstname2 Lastname2}{equal,yyy,comp}
\icmlauthor{Firstname3 Lastname3}{comp}
\icmlauthor{Firstname4 Lastname4}{sch}
\icmlauthor{Firstname5 Lastname5}{yyy}
\icmlauthor{Firstname6 Lastname6}{sch,yyy,comp}
\icmlauthor{Firstname7 Lastname7}{comp}
\icmlauthor{Firstname8 Lastname8}{sch}
\icmlauthor{Firstname8 Lastname8}{yyy,comp}
\end{icmlauthorlist}

\icmlaffiliation{yyy}{Department of XXX, University of YYY, Location, Country}
\icmlaffiliation{comp}{Company Name, Location, Country}
\icmlaffiliation{sch}{School of ZZZ, Institute of WWW, Location, Country}

\icmlcorrespondingauthor{Firstname1 Lastname1}{first1.last1@xxx.edu}
\icmlcorrespondingauthor{Firstname2 Lastname2}{first2.last2@www.uk}

\icmlkeywords{Machine Learning, ICML}

\vskip 0.3in
]

}



\remove{
\printAffiliationsAndNotice{\icmlEqualContribution} 
}

\begin{abstract}
\remove{ Linkage methods such as \complink \,  are
among the most popular algorithms for hierarchical clustering.
Despite their relevance the current knowledge regarding the quality
of the clustering produced by these methods is quite limited.
Here, we improve the currently available
bounds on the maximum diameter of the clustering obtained by
\complink \, for metric spaces.

One of our new bounds, in contrast to the existing ones, allows us to 
separate \complink \, from \singlelink \, in terms
of approximation w.r.t. to the diameter,
aligning with the common perception 
that the former is more suitable than the latter
when the goal is producing compact clusters.
}

 Linkage methods are
among the most popular algorithms for hierarchical clustering.
Despite their relevance the current knowledge regarding the quality
of the clustering produced by these methods is limited.
Here, we improve the currently available
bounds on the maximum diameter of the clustering obtained by
\complink \, for metric spaces.

One of our new bounds, in contrast to the existing ones, allows us to 
separate \complink \, from \singlelink \, in terms
of approximation for the diameter,
which corroborates the common perception 
that the former is more suitable than the latter
when the goal is producing compact clusters.

We also show that our techniques 
can be employed to derive upper bounds on the cohesion of a class of linkage methods that includes the quite popular \avglink. 

\end{abstract}

\remove{
Moreover, 
our new bounds allow us to 
separate \complink \, from \singlelink \, in terms
of approximation w.r.t. to the diameter,
aligning with the common perception 
that the former is more suitable than the latter
when the goal is producing compact clusters.
}

\onecolumn

\section{Introduction}

Clustering is the problem of partitioning a set of items so that similar items are grouped together and dissimilar items are separated. It is a fundamental tool in machine learning that is commonly used for exploratory analysis and for reducing the computational resources required to handle large datasets. For comprehensive descriptions of different clustering methods and their applications, we refer to \citep{Jain:1999,HMMR15}. 

One important type of clustering is hierarchical clustering.
Given a set  of $n$ points, a hierarchical clustering is a sequence of
clusterings $({\cal C}^0,{\cal C}^{1},\ldots,\C^{n-1})$,
where ${\cal C}^0$ is a clustering with $n$ unitary clusters, 
each of them corresponding to one of the $n$ points,
and  ${\cal C}^{i}$, for $i \ge 1$, is obtained from ${\cal C}^{i-1}$ 
 by replacing two clusters of ${\cal C}^{i-1}$ with their union.
Hierarchical clustering algorithms are implemented in widely used machine learning libraries such as {\tt scipy} and they have applications in many contexts such as in the study of evolution through phylogenetic trees \citep{b684a7bdefca4810bdaaf10bf8196c46}. 

There is a significant literature on hierarchical clustering; for  good surveys we refer to
\citep{10.1093/comjnl/26.4.354,DBLP:journals/widm/MurtaghC12}.
With regards to more theoretical work, one important line of research consists of designing algorithms for hierarchical clustering
 with provable guarantees for natural optimization criteria such as cluster diameter and the sum of quadratic errors \citep{DASGUPTA2005555,DBLP:journals/siamcomp/CharikarCFM04,DBLP:journals/siamcomp/LinNRW10,DBLP:conf/esa/ArutyunovaR22}.
Another relevant line aims
to understand the theoretical properties (e.g. approximation guarantees)
of algorithms widely used in practice, such as linkage methods
\citep{DASGUPTA2005555,DBLP:journals/corr/abs-1012-3697,DBLP:conf/esa/GrosswendtR15,arutyunova_et_al:LIPIcs.APPROX/RANDOM.2021.18,DBLP:conf/soda/GrosswendtRS19}.

Here, we contribute to this second line of research by
 giving new and improved analysis for the
  \complink  \,\citep{DBLP:journals/jmlr/AckermanB16}
 and also for a class of linkage methods
 that includes 
\avglink\, \citep{DBLP:journals/jmlr/AckermanB16} and \minimax \, \citep{BienTib2011}.




 
\subsection{Our Results}
Let  $({\cal X},dist)$ be a metric space, where ${\cal X}$ is a set of $n$ points.
The diameter $\diamset(S)$ of a set of points $S$ is  given by
$\diamset(S)=\max\{dist(x,y)|x,y \in S\}$.
A \emph{$k$-clustering} $\C=\{ C_i | 1 \le i \le k \}$ is a partition of ${\cal X}$ into $k$ groups. We define 
$$\maxdiamset(\C):=\max\{\diamset(C_i)|1 \le i \le k\} \, \, \mbox{ and }
\,\,  \avgdiamset(\C):=\frac{1}{k} \sum_{i=1}^k \diamset(C_i).$$
Moreover, let \OPTDIMK and \OPTAVGK be, respectively, the minimum possible  \maxdiamset \,  and \avgdiamset \, of a $k$-clustering
for $({\cal X},dist)$.


\noindent {\bf Arbitrary $k$.}
 First, in Section \ref{sec:first-bound}, we
 prove that for all $k$ the maximum diameter of the $k$-clustering
produced by \complink \, is at most
$k^{1.59} \OPTAVGK $. 
Since
$ \OPTAVGK \le \OPTDIMK$,
our result 
 is an improvement
 over  $O(k^{1.59}\OPTDIMK)$,  the best known  upper bound
on the maximum diameter of \complink \, \citep{arutyunova2023upper}.
Indeed, our bound can improve the previous one by up to a factor of $k$ 
since there are instances in which  $\OPTAVGK$ is $\Theta( \frac{\OPTDIMK}{k})$.


It is noteworthy that by using $\OPTAVG$ rather than $\OPTDIM$,
we can corroborate with the intuition that \complink \, produces
clusters with smaller diameters than those produced by \singlelink \,
since,
in addition to the  $k^{1.59} \OPTAVGK$ upper bound for the former,
we show an instance in which the maximum diameter of the latter is
$\Omega(k^2 \OPTAVGK)$. 
When  $\OPTDIM$ is employed, unexpectedly, as pointed out in \citep{arutyunova2023upper}, this separation is not possible
since the maximum diameter of \complink \, is 
  $\Omega(k \OPTDIMK)$ while that of \singlelink \, is $\Theta(k \OPTDIMK)$. 

To obtain the aforementioned upper bound, our main technique consists of carefully defining a partition 
of the clusters built by \complink \,  along its execution and then bounding
the diameter of the families in the partition. This technique
yields an arguably simpler analysis than that of \citep{arutyunova_et_al:LIPIcs.APPROX/RANDOM.2021.18,arutyunova2023upper}.

\remove{Moreover, it allows to show that $O(k^{1.59} \OPTAVG )$   is also an upper bound
on the  average distance among the points  of every cluster in the 
$k$-clustering produced by \avglink.	
From the best of our knowledge,
our bound  is the first one with respect to a traditional
cost function for this popular 
method.
}

Next, in Section \ref{sec:better-bound}, by using our technique in a significantly more involved way, we show that the maximum diameter of the $k$-clustering
produced by \complink \, is at most 
$(2k-2) \OPTDIMK$ for $k \le 4$
and at most $k^{1.30} \OPTDIMK,$ for $k > 4$.
Thus, we considerably narrow the gap
between the current upper bound and $\Omega(k \OPTDIMK ),$ the best known lower bound.

Finally, in Section \ref{sec:other-link}, we show that
our  techniques can be employed  to obtain upper
bounds on cohesion criteria 
of the clustering built by methods that  belong to a class of linkage methods that
includes \avglink \, and \minimax.
In particular, we show that the average
pairwise distance of every cluster in  the $k$-clustering
produced  by \avglink \, is at most $k^{1,59} \OPTAVGK$. 
To the best of our knowledge,
our analysis of the \avglink \, 
is the first one regarding to a cohesion criterion.

\noindent {\bf Low values of $k$ and practical applications.}
For large $k$,  the upper bounds of \complink, though close to the lower bound, are high and, thus, are not informative in the context of practical applications. However, as argued below, we have a different scenario for the very relevant case in which $k$ is small. The relevance of small $k$
 is that, in general, people have difficulties in analyzing a partition containing many groups (large $k$).

\citep{DBLP:journals/siamcomp/CharikarCFM04,DASGUPTA2005555} propose algorithms that obtain a hierarchical clustering  
that guarantees an 8-approximation to the diameter for every $k$.
The analysis  from
\citep{arutyunova2023upper} give, respectively, the following upper bounds on the approximation factor of 
 \singlelink \, and \complink \,
regarding the diameter: $4$ and $3$ for $k=2$; $6$ and $5.71$, for $k=3$
and $8$ and $9$ for $k=4$.
Our analysis gives 
an approximation factor 
of $2k-2$ for $k \le 4$, 
which improves these bounds.
 For an assessment of the quality of these bounds, one should take into account that, unless $P=NP$, the problem of finding the $k$-clustering that minimizes the maximum diameter, for $k \ge 3$, does not admit an approximation better than 2 in polytime \citep{DBLP:journals/jsc/Megiddo90}.

 For $k \ge 5$, the $k^{1.30} \OPTDIMK$ upper bound does not improve the factor of 8. However, our $k^{1.59} \OPTAVGK$ upper bound improves it for instances in which 
$\OPTAVGK \le \frac{8}{k^{1.59}}  \OPTDIMK$. Since $ \frac{\OPTDIMK}{k} \le \OPTAVGK \le
\OPTDIMK$ , we can have improvements for $k \le 34$.

An interesting aspect of our results
is that they point in the opposite direction
of the common intuition that
bottom-up methods for hierarchical clustering do not work well
for small $k$ and, hence, are 
less preferable than bottom-up methods.

\remove{
There exists  an intuition that
top-down methods are preferable for small
$k$, since it is not expected 
that bottom-up methods work well in this setting. Our theoretical results
point in the opposite direction by 
providing evidence that the performance
of bottom-up methods is good for small $k$.
}

\remove{We note that the quality of an approximation bound (e.g. factor 8 or 10) should be evaluated in light of the result from [Arutyunova and Röglin. 2022] which shows that no hierarchical compatible clustering can achieve an approximation smaller than 5.82 w.r.t the diameter (simultaneously for every $k$).
}

\subsection{Related Work}
Linkage methods are discussed  in a number of research papers and books on data mining
and machine learning.
Here, we discuss  works that provide provable guarantees for 
some of the most popular linkage methods.

\noindent {\bf Complete-link and Variants.} Several upper and lower bounds are known on the approximation factor for \complink \, with respect to the maximum diameter. 
When ${\cal X}= \mathbb{R}^d$, $d$ is constant and $dist$ is the Euclidean metric, 
\citep{DBLP:journals/corr/abs-1012-3697} proved that \complink \, is 
an $O(\log k \cdot \OPTDIMK)$ approximation.
This was improved by \citep{DBLP:conf/esa/GrosswendtR15} to $O(\OPTDIMK)$.
The dependence on $d$ is doubly exponential.

For general metric spaces,
\citep{DASGUPTA2005555} showed that there are instances
for which the maximum diameter
of the    $k$-clustering built by \complink \, is  $\Omega(\log k \cdot \OPTDIMK)$. 
In  \citep{arutyunova_et_al:LIPIcs.APPROX/RANDOM.2021.18} this lower bound
was improved to $\Omega(k \cdot \OPTDIMK)$. Moreover, the same paper
showed that the maximum diameter of
\complink's  \, $k$-clustering is  $O(k^{2} \OPTDIMK)$.  This result was recently improved by the same authors to 
$O(k^{1.59} \OPTDIMK)$ \citep{arutyunova2023upper}.  
We note that the version of \complink,  analyzed in \citep{arutyunova_et_al:LIPIcs.APPROX/RANDOM.2021.18,arutyunova2023upper},
merges at each iteration the two clusters $A$ and $B$ for which
$\diamset(A \cup B)$ is minimum. A consequence of Proposition \ref{prop:monotonic}, presented
here, is that this rule is equivalent to the classical definition of
\complink \, presented at the beginning of Section \ref{sec:preliminaries}.

\citep{arutyunova2023upper} also analysed \minimax \, \citep{BienTib2011}, a linkage method 
related to \complink, that merges at each iteration the two clusters $A$ and $B$
for which $A \cup B$ has the minimum ratio. They show that the
$\maxdiamset$ of the $k$-clustering built by  \minimax \, is  $\Theta(k \OPTDIMK)$.
In Section \ref{sec:other-link}, we show that
the $\maxdiamset$ is also $O(k^{1,59} \OPTAVGK)$.
One disadvantage of this method is that while \complink \, admits an $O(n^2)$ implementation
\citep{DBLP:journals/cj/Defays77}, no sub-cubic time implementation for minimax method 
is known \citep{BienTib2011}.

\noindent {\bf Single-link}. 
Among linkage methods, \singlelink \, is likely the one with the most extensive theoretical analysis \citep{DBLP:books/daglib/0015106,DASGUPTA2005555,arutyunova2023upper,LM23-Nips}.

The works of \citep{DASGUPTA2005555,arutyunova2023upper} are
those that are more related to ours.
The former  shows that  $\Omega( k  \cdot  \OPTDIMK)$ is a lower bound
on   the maximum diameter of {\tt Single-Link} while 
the latter proves that this bound is tight. 
We note that our $\Omega( k^2  \cdot  \OPTAVGK)$ lower bound improves over
that of \citep{DASGUPTA2005555} since $k \OPTAVGK \ge \OPTDIMK$. 
 
\remove{FINAL VERSION? It is well-known that \singlelink \, maximizes the minimum spacing
among different clusters \citep{DBLP:books/daglib/0015106}.
Recently, it was shown in \citep{LM23-Nips} that it also maximizes the minimum
spanning tree spacing, a criterion that is stronger than the maximum spacing.}



\remove{ 
 \citep{arutyunova_et_al:LIPIcs.APPROX/RANDOM.2021.18}
also considers the a variation of \complink \, where instead
of joining the two clusters $A$ and $B$ for which 
$\max_{x \in A} \max_{y \in B} dist(x,y)$ is minimized, it chooses the cluster $A$ and $B$
for which $\min_{x \in A} \max_{y \in B} dist(x,y)$ is minimized.
For this variant, known in the literature \citep{10.1093/bioinformatics/bti201} as minimax, they  proved a tight $\Theta(k \OPTDIM)$ bound.
}

\noindent {\bf Average-link}.
 \citep{DBLP:conf/stoc/Dasgupta16} introduced a global cost function defined over the tree
induced by a hierarchical clustering and proposed algorithms to optimize it.
 \citep{DBLP:journals/jacm/Cohen-AddadKMM19,DBLP:journals/jmlr/MoseleyW23} show that \avglink \, achieves constant approximation with respect to variants of the cost functions proposed by \citep{DBLP:conf/stoc/Dasgupta16}.
\citep{DBLP:conf/soda/CharikarCN19} proved that these analyses are tight.


\remove{\citep{DBLP:journals/jacm/Cohen-AddadKMM19} show that \avglink \, attains a 2-approximation for the cost function proposed by \citep{DBLP:conf/stoc/Dasgupta16} and the
 proximity between points of ${\cal X}$ are given by a dissimilarity measure.
In  \citep{DBLP:journals/jmlr/MoseleyW23}
introduced a cost function that can be seen as a dual of the one  proposed
in \citep{DBLP:conf/stoc/Dasgupta16}.  For this cost function they
 show that \avglink \, has  a constant factor optimization while
\complink\, and \singlelink\, have super-constant worst-case approximations.
In \citep{DBLP:conf/soda/CharikarCN19} it is shown that the bound of 
\citep{DBLP:journals/jmlr/MoseleyW23} for the \avglink \, is tight.
}

\noindent {\bf Ward}. Another popular linkage method was proposed by \citep{Ward63}.  
\citep{DBLP:conf/soda/GrosswendtRS19}
shows that Ward's method gives a 2-approximation for $k$-means when the optimal
clusters are well-separated.

\section{Preliminaries}
\label{sec:preliminaries}
Pseudo-code for \complink \, is shown in Algorithm \ref{alg:hac}. 
The function $\dist_{CL}(A,B)$ that measures the distance between clusters $A$ and $B$ is given by
$${\tt dist_{CL}}(A,B):=\max \{ dist(a,b)| (a,b)\in A \times B  \}.$$


\small
\begin{algorithm}
\small

  \caption{{\sc H\complink}(${\cal X}$,dist,dist$_{\L}$) }
   \begin{algorithmic}[1]

\STATE 
 $\C^{0} \gets$ clustering with $n$ unitary clusters, each one containing a point of
${\cal X}$

\STATE 
 {\bf For}  $i=1,\ldots,n-1$     
 \STATE  \hspace{0.2cm} $(A,B) \gets$ clusters in $\C_{i-1}$ s.t. ${\tt dist_{CL}}(A,B)$ is minimum 
 \STATE  \hspace{0.2cm} $\C^{i} \gets \C^{i-1}  \cup \{A \cup B\} - \{A,B\} $
   \end{algorithmic}
   \caption{Complete Link}
   \label{alg:hac}
\end{algorithm}

\normalsize

\normalsize

\remove{
We analyse certain algorithms 
from a natural  class of linkage algorithms  defined below.

\begin{definition}[well-behaved linkage algorithms] We say that a linkage algorithm $\L$ 
is {\em well-behaved} if it satisfies the following conditions:
(i) it can be implemented
through Algorithm \ref{alg:hac} and (ii) for every iteration $t$ of
every execution of $\L$,  if  $\L$ merges clusters $A$ and $B$ at iteration $t$, then 
 we have
\begin{equation}\min\{dist(a,b)|(a,b) \in A \times B\} \le  \diamset(C \cup D),
\label{eq:well-behaved-definition}
\end{equation}
for every two clusters $C$ and $D$ that are available to be merged at the beginning of   $t$.
\end{definition}

It is not difficult to see that \complink \, and \avglink \, are well-behaved.
The proof can be found in  Appendix \ref{sec:app-well-behaved}.
}


 The following property of \complink,
 whose proof can be found in the Appendix \ref{sec:monotonic}, will be useful for our analysis.
 In particular, it implies
 that the rule employed
 by \complink \, is equivalent
 to the rule analysed in \citep{arutyunova2023upper}
that merges at each iteration the two clusters $A$ and $B$ for which $\diamset(A \cup B)$ is minimum. 

\begin{proposition}
\label{prop:monotonic}
Let $A_j$ and $A'_j$ be the clusters merged
at  the $j$th iteration of \complink.
Then, for every $j \ge 1$, 
$$\diamset(A_j \cup A'_j) = \max \{ dist(x,y)| (x,y) \in A_j \times A'_j \}$$
and for every $j \ge 2$
$$\diamset(A_j \cup A'_j) \ge \diamset(A_{j-1} \cup A'_{j-1}).$$
\end{proposition}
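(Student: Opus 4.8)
The plan is to work with the sequence of merge distances $d_j := {\tt dist_{CL}}(A_j,A'_j)$, i.e.\ the minimum complete-link distance over all pairs of clusters available at the start of iteration $j$. Since the diameter of a union decomposes as
\[
\diamset(A_j \cup A'_j) = \max\{\diamset(A_j),\,\diamset(A'_j),\,{\tt dist_{CL}}(A_j,A'_j)\},
\]
the first identity is equivalent to the two inequalities $\diamset(A_j)\le d_j$ and $\diamset(A'_j)\le d_j$; and, once the first identity is known for all indices, the second claim reduces to $d_j \ge d_{j-1}$. I would therefore prove, in order, (i) that the merge distances are non-decreasing, and (ii) the diameter bounds $\diamset(A_j),\diamset(A'_j)\le d_j$, after which both displayed statements follow.

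For the monotonicity step (i), the workhorse is the elementary identity ${\tt dist_{CL}}(P\cup P',Q)=\max\{{\tt dist_{CL}}(P,Q),{\tt dist_{CL}}(P',Q)\}$, which is immediate from the $\max$-definition of ${\tt dist_{CL}}$. Fix $j\ge 2$ and let $(P,Q)$ be any pair of clusters available at the start of iteration $j$. The cluster set at iteration $j$ differs from that at iteration $j-1$ only in that $A_{j-1}$ and $A'_{j-1}$ have been replaced by $A_{j-1}\cup A'_{j-1}$. If neither $P$ nor $Q$ is this new cluster, both were already available at iteration $j-1$, so ${\tt dist_{CL}}(P,Q)\ge d_{j-1}$ by the minimality defining $d_{j-1}$. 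If instead, say, $P=A_{j-1}\cup A'_{j-1}$, the identity above rewrites ${\tt dist_{CL}}(P,Q)$ as $\max\{{\tt dist_{CL}}(A_{j-1},Q),{\tt dist_{CL}}(A'_{j-1},Q)\}$, and each of these two pairs was available at iteration $j-1$, hence contributes a value $\ge d_{j-1}$. In every case ${\tt dist_{CL}}(P,Q)\ge d_{j-1}$; taking the minimum over available pairs at iteration $j$ gives $d_j\ge d_{j-1}$.

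For step (ii) I would induct on $j$. When $j=1$ all clusters are singletons, so $\diamset(A_1)=\diamset(A'_1)=0\le d_1$. For the inductive step, consider $A_j$ (the argument for $A'_j$ is identical). If $A_j$ is a singleton the bound is trivial. Otherwise $A_j$ was created at some earlier iteration $i<j$ as the union of the two clusters merged then, so $A_j=A_i\cup A'_i$. The induction hypothesis gives $\diamset(A_i),\diamset(A'_i)\le d_i$, so the decomposition above yields $\diamset(A_j)=\diamset(A_i\cup A'_i)=d_i$, and $d_i\le d_j$ by the monotonicity established in step (i). Hence $\diamset(A_j)\le d_j$, and likewise $\diamset(A'_j)\le d_j$. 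Feeding these back into the decomposition gives $\diamset(A_j\cup A'_j)=d_j={\tt dist_{CL}}(A_j,A'_j)$, which is the first identity; combining it at indices $j$ and $j-1$ with $d_j\ge d_{j-1}$ then gives the second.

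The argument is elementary, and the only delicate points are organizational. First, in step (ii) one must correctly identify, for a non-singleton $A_j$, the iteration $i<j$ at which it was formed and observe that its diameter was effectively \emph{frozen} at the value $d_i$ at that moment. Second, I expect the main thing to watch is keeping steps (i) and (ii) logically separate to avoid circularity: monotonicity must be proved purely in terms of the distances $d_j$, with no reference to diameters, so that it can safely be used as an input to the diameter induction rather than depending on it.
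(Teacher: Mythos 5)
Your proposal is correct, but it is organized differently from the paper's proof, and the difference is worth noting. The paper proves both claims \emph{simultaneously} by a single induction on $j$: inside the induction it splits into cases according to whether one of $A_j,A'_j$ equals the cluster $A_{j-1}\cup A'_{j-1}$ created at the previous iteration, and it closes the argument for the identity $\diamset(A_j\cup A'_j)=\max\{\dist(x,y)\,|\,(x,y)\in A_j\times A'_j\}$ by a contradiction against the decomposition $\diamset(A\cup B)=\max\{\diamset(A),\diamset(B),\dist_{CL}(A,B)\}$. You instead extract a cleaner intermediate statement: the monotonicity $d_j\ge d_{j-1}$ of the \emph{merge distances} $d_j=\dist_{CL}(A_j,A'_j)$, proved directly (no induction over the proposition's claims) from the greedy minimality of \complink's choice together with the identity $\dist_{CL}(P\cup P',Q)=\max\{\dist_{CL}(P,Q),\dist_{CL}(P',Q)\}$; only then do you run a short induction for the diameter bounds $\diamset(A_j),\diamset(A'_j)\le d_j$, after which both displayed statements fall out mechanically. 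This is the classical ``reducibility / no-inversion'' property of complete linkage, and isolating it makes your step (ii) induction much lighter than the paper's: you avoid the paper's case analysis on whether a merged cluster is the newly created one (that distinction is absorbed into your purely distance-level argument in step (i)), and you avoid the contradiction step entirely. The trade-off is that the paper works throughout with the quantities appearing in the statement (diameters of unions), whereas your route passes through an auxiliary quantity; both are sound, and your modular version is arguably easier to verify and reuse.
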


We conclude this section with some useful
notation. The term \emph{family} is used to denote a set of clusters. 
For a family  $F$, 
we use $|F|$ and $\Pts(F)$, respectively,  to denote the number
of clusters in $F$ and the set of points that belong to some
cluster in $F$, that is, $\Pts(F)=\bigcup_{ g \in F} g$ . Moreover,  we use $\diamset(F)$ to denote 
the maximum distance between points that belong
to $\Pts(F)$.

\section{A first bound on the diameter of complete-link}
\label{sec:first-bound}

In this section, we prove that the maximum diameter of
the $k$-clustering built by \complink \, is at most $k^{1.59}\OPTAVGK$. 

\remove{
Our proof consists of keeping a dynamic partition of  the clusters produced by 
\complink \, into families and then bounding
the diameter of each family $F$
  as (essentially) a function of 
the clusters that $F$ touches in a  target $k$-clustering $\T=(T_1,\ldots,T_k)$.
 We note that our bounds will depend on the choice $\T$ and we
can take the best possible $\T$ according to our objective.
In this section, we use $\T$ as the $k$-clustering with minimum \avgdiamset.
}

Fix a target $k$-clustering $\T = (T_1, \ldots, T_k)$. Our proof maintains a dynamic partition of the clusters produced by \complink\, into families, where the diameter of each such family $F$ can be bounded in terms of the diameters of some of the $T_i$'s that it touches. We note that our bounds will depend on the choice of $\T$ and we
can take the best possible $\T$ according to our objective.
In this section, we take $\T$ to be the $k$-clustering with minimum \avgdiamset.

In Algorithm \ref{alg:FamilyGeneration1}, we define how the families evolve along the execution of \complink. 
  At the beginning, each of the $|{\cal X}|$ points is a cluster. 
We then define our first partition as $(F_1,\ldots,F_k)$, where  $F_i$ is a family that contains $|T_i|$ clusters, each one being a point from
$T_i$. Along the algorithm's execution, the families are organized in a directed forest $D$. Initially, the forest $D$ consists of $k$ isolated nodes,
where the $i$th node corresponds to family $F_i$.

When \complink \, merges the clusters  $g$ and $g'$ belonging
to the families $F$ and $F'$, respectively, a new
family $F^{new}$ is created and, in case (a) of Algorithm \ref{alg:FamilyGeneration1}, a second new family $F^{new'}$ is also created. These new families contain all the clusters in $F$ and $F'$, except for
$g$ and $g'$ that are replaced by the cluster $g \cup g'$.
Moreover, $F^{new}$ and $F^{new'}$ (when it is created)
become parents of $F$ and $F'$ in $D$. The precise definition of the new families 
and how the forest $D$ is updated are given by cases $(a)$ and $(b)$ in Algorithm \ref{alg:FamilyGeneration1}.



\begin{algorithm*}
\small

  \caption{{\sc Partitioning the Clusters of \complink} }
   \begin{algorithmic}[1]

\STATE 
 Create a clustering $\C^{0}$ with $n$ unitary clusters, each one containing a point of
${\cal X}$

\STATE  $\T= \{T_i|1 \le i \le k\} \gets$   $k$-clustering that satisfies
$\avgdiamset(\T)=\OPTAVGK$ \label{line:target} \\

\STATE   $F_i \gets \{ \{ x\} | x \in T_i\}, \, \forall i$ \\

\STATE  $D \gets $ forest comprised of $k$ isolated nodes $F_1,\ldots,F_k$. \\

\STATE  {\bf For} $t:=1,\ldots,n-k$ \\

\STATE \hspace{1cm}	 $(g,g') \gets$ next clusters to be merged by \complink \\

\STATE \hspace{1cm}	 $\C^{t} \gets \C^{t-1} \cup \{ g \cup g'\} - \{g, g'\} $ \\
 
\STATE \hspace{1cm}	Let $F$ and $F'$ be the families associated with the roots of $D$
 that respectively contain $g$ and $g'$. Assume w.l.o.g. $|F| \ge |F'|$. \\

\STATE \hspace{1cm} Proceed according to the following exclusive cases: \\

\STATE \hspace{1cm} ({\bf case a}) $|F'|=1$ and $|F|>1$  \\

\STATE \hspace{2cm} $F^{new} \gets F-\{g\}$; $F^{new'} \gets \{g \cup g'\}$ \\

\STATE \hspace{2cm} $F$.parent $\gets$ $F^{new}$ ;  $F'$.parent $\gets$ $F^{new'}$\\

\STATE \hspace{1cm} ({\bf case b})  $|F'|>1$ or $|F|=1$ \\

\STATE \hspace{2cm} $F^{new} \gets (F \cup F' \cup \{g \cup g'\})-g -g'$ \\

\STATE \hspace{2cm} $F$.parent $\gets$ $F^{new}$; $F'$.parent $\gets$ $F^{new}$ \\

  \end{algorithmic}
\label{alg:FamilyGeneration1}
\end{algorithm*}

\normalsize

To prove our bound,
we first show (Proposition \ref{prop:reg-family-number}) that at the beginning
of each iteration, there exists a family, among
those associated with some root of $D$, that
contains at least two clusters.
Then, we show an upper bound (Proposition \ref{prop:diameter-expansion}) on the diameter of every family, with at least two clusters, created by Algorithm \ref{alg:FamilyGeneration1}.
Finally, in Theorem \ref{thm:main1},  
this last result is used to upper bound the diameter
of every cluster created by \complink, based on a simple idea: if a cluster $g \cup g'$ is created at iteration $t$
 and $H$  is a family  containing two clusters, say $h$ and $h'$, at the beginning
of $t$, then \complink \, rule guarantees that $\diamset(g \cup g') \le
\diamset(h \cup h') \le \diamset(H)$.

\remove{For our analysis, we need
some extra terminology.
Let 
 $\desc(F):=\{i | 1 \le i \le k \mbox{ and }  F_i \mbox{ is a descendant of } F \mbox{ in } D \}$,
  $\phi(F):=|\desc(F)|$
  and $\phisum(F):=\sum_{i \in \desc(F)} \diamset(F_i)$.   
Note that if a family $F^{new}$ is parent of both families $F$ and $F'$ in $D$
then $\phi(F^{new})=\phi(F)+\phi(F')$ and
$\phisum(F^{new})=\phisum(F)+\phisum(F')$    
}

For our analysis, we need
some extra terminology.
Let 
 $\leaves(F)$ be the set of leaves
 of the subtree of $D$ rooted at node/family $F$. We define
  $\phi(F):=|\leaves(F)|$
  and $\phisum(F):=\sum_{H \in \leaves(F)} \diamset(H)$.   
Note that if a family $F^{new}$ is parent of both families $F$ and $F'$ in $D$
then $\phi(F^{new})=\phi(F)+\phi(F')$ and
$\phisum(F^{new})=\phisum(F)+\phisum(F')$    
Moreover, we say that  a family $F$ is \emph{regular} if $|F|>1$ and it is a \emph{singleton}
if $|F|=1$.

\begin{proposition} 
 At the beginning of each iteration of Algorithm \ref{alg:FamilyGeneration1}, 
 at least one of the roots of $D$ corresponds to a regular family. 
\label{prop:reg-family-number}
\end{proposition}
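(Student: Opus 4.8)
The plan is to combine a structural invariant about the forest $D$ with a pigeonhole count. For brevity, call the families sitting at the roots of $D$ (equivalently, those not yet assigned a parent) the \emph{active} families; these are exactly the families that partition the clustering at each moment.

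First I would prove, by induction on $t$, the invariant that at the beginning of iteration $t$ the active families partition $\C^{t-1}$, in the sense that every cluster of $\C^{t-1}$ belongs to exactly one active family. The base case holds since $F_1,\dots,F_k$ are the initial roots and $\bigcup_i F_i = \C^0$ because $\T$ partitions ${\cal X}$. For the inductive step I would check that cases (a) and (b) both preserve the property: in each case $F$ and $F'$ cease to be roots and are replaced by the freshly created families, whose clusters are exactly those of $F \cup F'$ with $g$ and $g'$ removed and the merged cluster $g \cup g'$ inserted. Hence after the update the active families cover precisely $(\C^{t-1} - \{g,g'\}) \cup \{g \cup g'\} = \C^t$ and remain pairwise disjoint.

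Next I would track the number of active families, which starts at $k$. In case (a) the two roots $F,F'$ are replaced by the two new roots $F^{new}$ and $F^{new'}$, leaving the count unchanged; in case (b) the roots $F$ and $F'$ are replaced by the single new root $F^{new}$, so the count never increases. Thus there are at most $k$ active families at the start of every iteration. Since the loop runs for $t=1,\dots,n-k$, the clustering $\C^{t-1}$ has $n-(t-1) \ge k+1$ clusters. By the partition invariant these $\ge k+1$ clusters are distributed among at most $k$ active families, so if every active family were a singleton the total number of clusters would be at most $k$, a contradiction; therefore some active family contains at least two clusters and is regular.

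The only delicate part is the bookkeeping in the first two steps. One must confirm that $g$ and $g'$ always lie in active families, treat the possibility $F=F'$ (which can occur only under case (b), since case (a) requires $|F|>1=|F'|$ and hence $F\neq F'$), and observe that the new families are created without a parent so that they are precisely the new roots. Once these case checks are settled, the counting argument in the last step is immediate, so I expect the main effort to be the careful verification of the partition invariant and the root count across all cases rather than any substantive inequality.
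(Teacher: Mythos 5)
Your proposal is correct and follows essentially the same argument as the paper: bound the number of roots of $D$ by $k$ (since merges never increase the root count), note that for $t \le n-k$ the clustering has more than $k$ clusters each lying in a root family, and apply pigeonhole to find a root with at least two clusters. The paper states the partition invariant and the root-count monotonicity without proof, whereas you verify them explicitly by induction and case analysis, but the substance of the argument is identical.
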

\begin{proof}
Initially, the total number of roots of $D$
is $k$. Since the number of roots either decreases or
remains the same, the number of roots
at the beginning of each iteration is at most
$k$. At the beginning of iteration $t$, for $t \le n-k$,
the \complink\, clustering $\C^{t}$ has more than $k$ clusters, each of them belonging
to one family that is a root of $D$. Since the number of roots
is at most $k$, then there will be two different clusters associated
with the same root, so that this root corresponds to a regular family.
\end{proof}

\begin{proposition}
 At the beginning of each iteration 
of Algorithm  \ref{alg:FamilyGeneration1} the diameter of every regular family $F$ satisfies    $\diamset(F) \le \phisum(F) \cdot \phi(F)^{(\log_2 3)-1} \le k^{\log_2 3} \OPTAVGK$.
 \label{prop:diameter-expansion}
\end{proposition}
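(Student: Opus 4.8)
The plan is to prove the left-hand inequality $\diamset(F) \le \phisum(F)\cdot\phi(F)^{\alpha}$, where $\alpha := (\log_2 3)-1$, by induction following the construction of the forest $D$ in Algorithm \ref{alg:FamilyGeneration1}; the right-hand inequality then follows immediately. The first thing I would record is a structural observation: a regular family is created only from regular families. Indeed, inspecting the cases, the family $F^{new'}=\{g\cup g'\}$ of case (a) is always a singleton, the family $F^{new}$ of case (a) is regular only when $|F|>2$ (and then its unique child $F$ is regular), and in case (b) the two merged families are either both regular or both singletons, since the case condition together with $|F|\ge|F'|$ forces this. Hence every regular family either is an initial family $F_i$ with $|T_i|>1$ or is obtained from regular families, so the induction is self-contained; moreover every leaf in $\leaves(F)$ of a regular family $F$ is an initial regular family, for which $\diamset(F_i)=\diamset(T_i)$.

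Next I would dispatch the base case and the easy inductive cases. For an initial regular family $F_i$ we have $\phi(F_i)=1$ and $\phisum(F_i)=\diamset(F_i)$, so the bound holds with equality. When $F^{new}=F-\{g\}$ is created in case (a) with $|F|>2$, or when $g,g'$ lie in the same regular family (the internal subcase of case (b)), the forest gains a node with a single child, so $\phi$ and $\phisum$ are inherited unchanged, while $\Pts(F^{new})\subseteq \Pts(F)$ gives $\diamset(F^{new})\le\diamset(F)$; the inductive hypothesis closes these cases.

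The substantive case is case (b) with two distinct regular families $F\ne F'$ merging into $F^{new}$, where $\phi(F^{new})=\phi(F)+\phi(F')$ and $\phisum(F^{new})=\phisum(F)+\phisum(F')$. Here $\Pts(F^{new})=\Pts(F)\cup\Pts(F')$, and for a cross pair $x\in\Pts(F)$, $y\in\Pts(F')$ the triangle inequality through any $a\in g$, $b\in g'$ gives $\dist(x,y)\le \diamset(F)+\diamset(g\cup g')+\diamset(F')$. By Proposition \ref{prop:monotonic} the merge cost equals $\diamset(g\cup g')=\dist_{CL}(g,g')$, and since $g,g'$ minimize $\dist_{CL}$ among the pairs available at that iteration, taking two clusters of the regular family $F$ (resp.\ of $F'$) bounds it by $\diamset(F)$ (resp.\ by $\diamset(F')$); thus $\diamset(F^{new})\le \diamset(F)+\diamset(F')+\min\{\diamset(F),\diamset(F')\}$. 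Applying the inductive hypothesis and writing $P=\phisum(F)$, $Q=\phisum(F')$, $a=\phi(F)$, $b=\phi(F')$, it remains to prove the numeric inequality
$$Pa^{\alpha}+Qb^{\alpha}+\min\{Pa^{\alpha},Qb^{\alpha}\}\le (P+Q)(a+b)^{\alpha}.$$

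I expect this last inequality to be the main obstacle. After a short monotonicity check (splitting on the sign of $2a^{\alpha}-(a+b)^{\alpha}$) the extremal configuration is $Pa^{\alpha}=Qb^{\alpha}$; substituting $P=Q(b/a)^{\alpha}$ and setting $r=b/a$ reduces the claim to the single-variable inequality $3r^{\alpha}\le (1+r)^{\alpha}(1+r^{\alpha})$ for all $r>0$. The exponent $\alpha=(\log_2 3)-1$ is exactly what makes this tight at $r=1$, where $2^{1+\alpha}=3$. I would prove the one-variable inequality by exploiting the symmetry $r\mapsto 1/r$, which preserves it, so that it suffices to treat $r\ge 1$, and then showing that $r=1$ is the global minimum of $(1+r)^{\alpha}(1+r^{\alpha})-3r^{\alpha}$ by a monotonicity argument (equivalently, convexity in $\log r$). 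Finally the right-hand inequality is immediate: $\phi(F)=|\leaves(F)|\le k$, and by the structural observation $\phisum(F)=\sum_{F_i\in\leaves(F)}\diamset(T_i)\le \sum_{i=1}^{k}\diamset(T_i)=k\cdot\avgdiamset(\T)=k\,\OPTAVGK$, whence $\phisum(F)\phi(F)^{\alpha}\le k\,\OPTAVGK\cdot k^{\alpha}=k^{\log_2 3}\,\OPTAVGK$.
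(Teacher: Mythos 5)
Your proposal is correct and takes essentially the same route as the paper's proof: the same induction with the same case analysis, the same triangle-inequality bound $\diamset(F^{new})\le \diamset(F)+\diamset(F')+\min\{\diamset(F),\diamset(F')\}$ obtained from the \complink\ rule (the paper derives it via a closest pair $a'\in g$, $b'\in g'$ instead of invoking Proposition \ref{prop:monotonic}, and breaks the symmetry with a w.l.o.g.\ assumption instead of carrying the $\min$), and the same reduction of the resulting numeric inequality to the extremal configuration $\phisum(F)\phi(F)^{\alpha}=\phisum(F')\phi(F')^{\alpha}$. The only real divergence is how the one-variable inequality $3r^{\alpha}\le(1+r)^{\alpha}(1+r^{\alpha})$ is finished: the paper's Proposition \ref{prop:calculations-avg} closes it in one line by AM--GM ($1+r^{\alpha}\ge 2r^{\alpha/2}$ and $(1+r)^{\alpha}\ge 2^{\alpha}r^{\alpha/2}$, whose product is $2^{1+\alpha}r^{\alpha}=3r^{\alpha}$), which is simpler than your symmetry-plus-convexity argument, though yours is also valid.
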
 
\begin{proof}
We  have that $\phi(F) \le k$, Moreover,  
the choice of the target clustering
$\T$ ensures that $\phisum(F) \le k \OPTAVGK$. Hence, the inequality $\phisum(F)  \phi(F)^{(\log_2 3)-1} \le k^{\log_2 3} \OPTAVGK$ holds. Thus, we focus on the first inequality.

 The proof is by induction on the iteration of \complink\ (and, in parallel, of Algorithm~\ref{alg:FamilyGeneration1}). 
For every initial family $F_i$, $\phi(F_i)=1$ and $\phisum(F_i)=\diamset(F_i)$. 
Thus, for every $F_i$, 
$\diamset(F_i) \le \phisum(F_i) \phi(F_i)^{(\log_2 3) -1} $.

Let us assume by induction that the result at the beginning of iteration 
$t$. We consider what happens in  iteration $t$ according to the possible cases:

\noindent {\bf case (a)}. In this case, $F^{new'}$ is a singleton so 
we do not need to argue about it since the property is about regular
families. 
Moreover, we have that 
\begin{align*}
\diamset(F^{new})=\diamset(F-\{g\}) \le \diamset(F) \le \\
\phisum(F) \phi(F)^{\log_2 3 -1} = \phisum(F^{new}) \phi(F^{new})^{\log_2 3 -1},
\end{align*}
where the last inequality holds by induction and the last identity holds
because $\phisum(F^{new})= \phisum(F)$ and  $\phi(F^{new})= \phi(F)$.


\noindent {\bf case (b)} We split  the proof into 3 subcases:

\noindent {\bf subcase 1.} $|F|=1$ and $|F'|=1$.
In the case  $F^{new}=\{g \cup g' \}$, so it is a singleton and, thus,  there
is nothing to argue since the property is about regular families.

\noindent {\bf subcase 2.} $|F'|>1$ and $F$=$F'$.
In this case, we have
\begin{align*}
\diamset(F^{new})=\diamset(F) \le \\
\phisum(F) \phi(F)^{\log_2 3 -1} = \phisum(F^{new}) \phi(F^{new})^{\log_2 3 -1},
\end{align*}
where the inequality holds by induction and the last identity holds
because $\phisum(F^{new})= \phisum(F)$ and  $\phi(F^{new})= \phi(F)$.


\noindent {\bf subcase 3.} $|F'|>1$ and $F \ne F'$.
This case is the most interesting one. In this case, \complink \, creates
a new family $F^{new}$ by merging
two  clusters $g$ and $g'$ from two distinct  regular families $F$ and $F'$.
Let $a$ and $b$ be two farthest points in $\Pts(F^{new})$.
If $a,b \in \Pts(F)$ or $a,b \in \Pts(F')$ the result holds for $F^{new}$ since 
\begin{align*} \diamset(F^{new}) \le \max\{\diamset(F),\diamset(F')\} \le \\
 \max \{\phisum(F) \cdot \phi(F)^{\log_2 3 -1},\phisum(F') \cdot \phi(F')^{\log_2 3 -1} \} \le \\
\le \phisum(F^{new}) \phi(F^{new})^{\log_2 3 -1}  
\end{align*}

Let  
$a \in \Pts(F)$, $b \in \Pts(F')$. 
We can assume  w.l.o.g. that
$$\phisum(F')\cdot \phi(F')^{(\log_2 3) -1} \le \phisum(F)\cdot \phi(F)^{(\log_2 3) -1}.$$
Note that this assumption will not conflict with the assumption $|F|\ge |F'|$ that was made to facilitate the presentation of Algorithm \ref{alg:FamilyGeneration1}. Indeed, we do not use the assumption $|F|\ge |F'|$ in what follows.

Let
$a' \in g$  and  $b' \in g'$ be points that
satisfy $dist(a',b')=\min \{ dist(x,y)
| (x,y) \in g \times g' \}$.
Moreover, let $h$ and $h'$ be any two clusters in $F$.
We have that
\begin{align}
dist(a',b') \le \max\{dist(x,y)|(x,y) \in g \times g'\} \le \label{eq:7apr24-1} \\
\max\{dist(x,y)|(x,y) \in h \times h'\} \le \diamset(h \cup h') \le \\
\diamset(F), \label{eq:7apr24-3}
\end{align}
where the second inequality follows from \complink \, rule.

By symmetry we also have $dist(a',b') \le\diamset(F')$ and,
hence 
\begin{equation}
dist(a',b') \le \min\{\diamset(F),\diamset(F')\}
\label{eq:well-behaved}
\end{equation}

\remove{
Let
$a'$ be a point in $g$ and $b'$ be a point in $g'$
such that $dist(a',b')= \min\{dist(x,y)|(x,y) \in g \times g'\}$,
that is, $dist(a',b')$ is the closest distance among  points in $g$ and $g'$.
We can  show that 
\begin{equation} 
dist(a',b') \le \min\{\diamset(F),\diamset(F')\}.
\label{eq:well-behaved}
\end{equation}
In fact, pick  two clusters $h$ and $h'$ from $F$.
\red{Since  \link \, is well-behaved,} inequality
\ref{eq:well-behaved-definition} guarantees that
 $dist(a',b') \le \diamset(h \cup h') \le \diamset(F)$.
The same argument shows that  $dist(a',b') \le \diamset(F')$.
}

Consider the sequence of points $a,a',b',b$. It
follows from the triangle inequality that  
\begin{align}
\diamset(F^{new})=dist(a,b) \le \label{eq:0} \\ \
 dist(a,a')+dist(a',b')+dist(b',b) \leq \label{eq:1}\\ 
  \diamset(F) + \diamset(F')+\diamset(F') \le  \label{eq:2} \\ 
 \phisum(F) \phi(F)^{\log_2 3 -1} + 2\phisum(F') \phi(F')^{\log_2 3 -1} \le  \label{eq:3} \\
(\phisum(F) +\phisum(F')  )(\phi(F')+\phi(F))^{\log_2 3 -1}=  \label{eq:4}\\ 
\phisum(F^{new}) \phi(F^{new})^{\log_2 3 -1},
\end{align}
where inequality (\ref{eq:1}) follows from (\ref{eq:well-behaved}), inequality (\ref{eq:2}) follows from the inductive hypothesis,  inequality  (\ref{eq:3}) follows from  Proposition \ref{prop:calculations-avg} (with $a=\phisum(F)$,
$b=\phisum(F')$, $x=\phi(F)$ and $y=\phi(F')$)
 and (\ref{eq:4}) holds because $\phi(F^{new})=
\phi(F)+\phi(F')$ and $\phisum(F^{new})=\phisum(F)+\phisum(F')$.
 \end{proof}

\remove{
Let us assume w.l.o.g that 
$a \in F$, $b \in F'$. 
 We must have 
 \begin{equation}
 \diamset( g \cup g') \le \min \{\diamset(F),\diamset(F')\},
\label{eq:alg-inequality}
  \end{equation}
 
otherwise {\tt Complete-Link} would have  merged in this iteration
two clusters from $F$ or two clusters from $F'$.

We can assume  w.l.o.g. that
$$\phisum(F')\cdot \phi(F')^{(\log_2 3) -1} \le \phisum(F)\cdot \phi(F)^{(\log_2 3) -1}.$$

Let
$a'$ be a point in $g$ and $b'$ be a point in $g'$.
Consider the sequence of points $a,a',b',b$. It
follows from the triangle inequality that  
\begin{align}
\diamset(F^{new})=dist(a,b) \le dist(a,a')+dist(a',b')+dist(b',b) \leq \\ \diamset(F)+\diamset(g \cup g')+\diamset(F') \le 
  \diamset(F) + \diamset(F')+\diamset(F') \le \\ 
 \phisum(F) \phi(F)^{\log_2 3 -1} + 2 \phisum(F') \phi(F')^{\log_2 3 -1} \le \\
(\phisum(F) +\phisum(F')  )(\phi(F')+\phi(F))^{\log_2 3 -1}= \\ 
\phisum(F^{new}) \phi(F^{new})^{\log_2 3 -1},
\end{align}
where the inequality (2) follows from the inductive hypothesis, the inequality (3) follows from  Proposition \ref{prop:calculations-avg} (with $a=\phisum(F)$,
$b=\Sigma\mbox{-Diam}(F')$, $x=\phi(F)$ and $y=\phi(F')$)
 and the last identity holds because $\phi(F^{new})=
\phi(F)+\phi(F')$ and $\phisum(F^{new})=\phisum(F)+\phisum(F')$.}

Now, we state and prove the main result of this section.

\begin{theorem}
For every $k$, the maximum diameter of the $k$-clustering built by \complink \, is at most 
$  k^{\log_2 3} \OPTAVGK$.
\label{thm:main1}
\end{theorem}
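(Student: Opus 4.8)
The plan is to derive Theorem~\ref{thm:main1} as a short consequence of the two propositions already established, together with the greedy rule of \complink\ and Proposition~\ref{prop:monotonic}. The guiding idea, already flagged before the propositions, is that every non-singleton cluster of the final $k$-clustering is produced by some merge, and the greedy rule forces the diameter of that merge to be no larger than the diameter of \emph{any} competing merge available at that moment; Proposition~\ref{prop:reg-family-number} supplies a regular family whose two clusters form such a competing pair, and Proposition~\ref{prop:diameter-expansion} controls the diameter of that family.

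Concretely, I would fix an arbitrary cluster $C$ in the $k$-clustering $\C^{n-k}$ produced by \complink. If $C$ is a single point then $\diamset(C)=0$ and there is nothing to prove, so assume $C=g\cup g'$ is created at some iteration $t\le n-k$ by merging clusters $g$ and $g'$. The first step is to record the invariant that throughout Algorithm~\ref{alg:FamilyGeneration1} the clusters contained in the root families partition exactly the current \complink\ clustering; reading off the update rules in cases (a) and (b) shows that merging $g,g'$ and replacing them by $g\cup g'$ preserves this correspondence. Hence, at the start of iteration $t$, the clusters of any root family are genuine clusters of $\C^{t-1}$ and are therefore available to be merged.

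Next I would invoke Proposition~\ref{prop:reg-family-number} to obtain a regular root family $H$ at the beginning of iteration $t$, and select two of its (distinct, available) clusters $h,h'$. Since $(g,g')$ is the pair chosen by the greedy rule while $(h,h')$ is a competing available pair, we have $\dist_{CL}(g,g')\le \dist_{CL}(h,h')$. Using Proposition~\ref{prop:monotonic} to rewrite both sides as diameters, and then bounding by the family diameter, gives
\begin{align*}
\diamset(C)=\diamset(g\cup g')=\dist_{CL}(g,g') \le \dist_{CL}(h,h') \\
= \diamset(h\cup h') \le \diamset(H),
\end{align*}
where the last inequality holds because $h\cup h'\subseteq \Pts(H)$. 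Finally, Proposition~\ref{prop:diameter-expansion} yields $\diamset(H)\le k^{\log_2 3}\OPTAVGK$, so $\diamset(C)\le k^{\log_2 3}\OPTAVGK$. Since $C$ was arbitrary, $\maxdiamset(\C^{n-k})\le k^{\log_2 3}\OPTAVGK$, which is the claim.

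I do not expect a substantive obstacle at this stage, since the genuine difficulty has been absorbed into the two propositions (in particular the inductive $\log_2 3$ bound of Proposition~\ref{prop:diameter-expansion}). The one point demanding care is the bookkeeping in the first step: one must justify that the two clusters $h,h'$ of the regular family $H$ really are clusters of $\C^{t-1}$, so that the greedy inequality $\dist_{CL}(g,g')\le \dist_{CL}(h,h')$ is legitimate. This relies entirely on the root-family-tracks-the-clustering invariant, which should follow directly from the definitions of $F^{new}$ and $F^{new'}$ in Algorithm~\ref{alg:FamilyGeneration1}.
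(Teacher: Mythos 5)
Your proposal is correct and takes essentially the same approach as the paper: both obtain a regular family $H$ from Proposition~\ref{prop:reg-family-number}, use the greedy rule to compare the merged pair $(g,g')$ against two clusters $h,h'$ of $H$, and close with Proposition~\ref{prop:diameter-expansion}; the only (immaterial) difference is that you invoke Proposition~\ref{prop:monotonic} to write $\diamset(g\cup g')=\dist_{CL}(g,g')$ directly, whereas the paper writes $\diamset(g\cup g')=\max\{\diamset(g),\diamset(g'),\dist_{CL}(g,g')\}$ and disposes of $\diamset(g),\diamset(g')$ by induction on the iterations. One cosmetic slip: your step $\dist_{CL}(h,h')=\diamset(h\cup h')$ should be the inequality $\dist_{CL}(h,h')\le\diamset(h\cup h')$ (the diameter of $h\cup h'$ may be realized within $h$ or within $h'$), but since only the inequality is used, nothing breaks.
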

\begin{proof}
We prove by induction
 on the iteration of \complink\ (and, in parallel, of Algorithm~\ref{alg:FamilyGeneration1})
that the diameter of 
each cluster created by \complink \, is
at most $k^{\log_2 3} \OPTAVGK$.
At the beginning, we have $n$ clusters,
each of them corresponding to a point, so that
for every initial cluster $A$,
$\diamset(A)=0 \le k^{\log_2 3} \OPTAVGK$.
We assume by induction that at the beginning
of iteration $t$ every cluster satisfies the desired property, 

Let $g$ and $g'$ be two clusters merged at iteration $t$.  By Proposition \ref{prop:reg-family-number}
there is a regular family $F$ at the beginning of the $t$-th iteration.
Let $h$ and $h'$ be two clusters in $F$.
Therefore,  
\begin{align}
\diamset(g \cup g') =  \label{eq:apr11-1}\\
\max\{ \diamset(g),\diamset(g'),\dist_{CL}(g,g') \} \le \label{eq:apr11-2} \\
\max\{ \diamset(g),\diamset(g'),\dist_{CL}(h,h') \} \le  \label{eq:apr11-3} \\
\max\{ \diamset(g),\diamset(g'),\diamset(h \cup h') \} \le \label{eq:apr11-4} \\
\max\{ \diamset(g),\diamset(g'),\diamset(F) \}  \le \label{eq:apr11-5} \\
 k^{1.59} \OPTAVGK, \label{eq:apr11-7}
    \end{align}
where the first inequality holds
due to the choice
of \complink \, and the last one
from  the induction hypothesis and Proposition \ref{prop:diameter-expansion}.
\end{proof}

\remove{

We prove that the diameter of the cluster created in the $t$-th iteration, for  every $t$, is 
at most $k^{\log_2 3} \OPTAVGK$.

Let $g$ and $g'$ be the clusters that are merged at iteration $t$.
By Proposition \ref{prop:monotonic} $\diamset(g \cup g')=\max \{dist(x,y)| (x,y) \in g \times g' \}$. 
By Proposition \ref{prop:reg-family-number}
there is a regular family $F$ at the beginning of the t-th iteration.
Let $h$ and $h'$ be two clusters in $F$.
Therefore,  
\begin{align}
\diamset(g \cup g') =\max \{dist(x,y)| (x,y) \in g \times g' \} \le \label{lin:thm1} \\
\max \{dist(x,y)| (x,y) \in h \times h' \}   \le \diamset(F) \le  \label{lin:thm2} \\
\phisum(F) \phi(F)^{\log_2 3-1} \le  k^{\log_2 3} \OPTAVGK, \label{lin:thm3}
\end{align}
where (\ref{lin:thm1}) follows from    \complink \, rule;
the bound on $\diamset(F)$ holds due to Proposition \ref{prop:diameter-expansion}
 and  the inequality in (\ref{lin:thm3}) follows 
because  $\phi(F) \le k$ and, due
to the choice of the target clustering $\T$, $\phisum(F) \le k \OPTAVGK$.
\end{proof}
}

\remove{Since $\phi(F) \le k$, $\phisum(F) \le k \OPTAVGK$ and   Proposition \ref{prop:diameter-expansion} assures that $\diamset(F) \le \phisum(F)  \cdot \phi(F)^{(\log_2 3)-1}$  we conclude that 
$ \diamset(g \cup g') \le    \cdot k^{\log_2 3} \OPTAVGK$}

\singlelink \, is a popular linkage method whose pseudo-code is obtained by replacing
${\tt dist_{CL}}$ with ${\tt dist_{SL}}$   in Algorithm \ref{alg:hac},
where
$$ {\tt dist_{SL}}(A,B):= \min \{dist(a,b)|(x,y) \in A \times B\},$$

The rule employed by \singlelink, in contrast to that
of \complink, is not greedy with respect
to the minimization of the diameter.
Thus, it is expected that the latter presents better bounds than the former.
However, perhaps surprisingly, this is not the case when we consider
approximation regarding to $\OPTDIM$ since
 the  maximum diameter of the latter is $\Omega( \OPTDIMK)$ while that
 of the former is  $\Theta(\OPTDIMK)$ \citep{arutyunova2023upper}.

The use of $\OPTAVG$, instead of \OPTDIM, allows a separation between 
\complink \, and \singlelink \, in terms of worst-case approximation.
In fact, Theorem \ref{thm:main1}
shows that the maximum diameter
of \complink \,  is at most  $  k^{1.59} \OPTAVGK$ while the next result shows that
the maximum diameter of  \singlelink \, is $\Omega(  k^2 \OPTAVGK)$.

\begin{theorem}
 \label{thm:singlelink}
 There is an instance in which the $k$-clustering produced by \singlelink\, includes a cluster of diameter $\Omega(k^2  \OPTAVGK)$.
\end{theorem}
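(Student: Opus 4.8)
The plan is to exhibit an explicit metric space $({\cal X},dist)$ on $n=2k-1$ points for which \singlelink\ builds a cluster of diameter $k-1$, while some $k$-clustering has average diameter $O(1/k)$; together these give a cluster of diameter $\Omega(k^2)\cdot\OPTAVGK$. The instance has two parts: a \emph{path} $P=\{v_0,\dots,v_{k-1}\}$ with $dist(v_i,v_j)=|i-j|$ (so consecutive points are at distance $1$ and $\diamset(P)=k-1$), and a \emph{blocker} $W=\{w_1,\dots,w_{k-1}\}$ that is equilateral, $dist(w_a,w_b)=2$ for $a\neq b$, placed far from the path via $dist(v_i,w_a)=Z$ for a large value $Z=\Theta(k)$. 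First I would check that these prescribed distances form a metric, which is immediate: the only short distances ($1$ and $2$) live inside $P$ and inside $W$, every cross distance equals the single value $Z$, and with $Z\ge k$ all triangle inequalities reduce to trivialities such as $Z\le Z+2$ and $|i-j|\le 2Z$.

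Next I would pin down \singlelink's output using the standard fact that its merge order coincides with Kruskal's algorithm, so its $k$-clustering is exactly the family of connected components obtained after deleting the $k-1$ heaviest edges of a minimum spanning tree. Here the edge weights occupy three strictly separated scales $1<2<Z$: the $k-1$ weight-$1$ edges are precisely the consecutive pairs of $P$ and they already span $P$, while the $k-2$ spanning edges of $W$ have weight $2$ and the single path-to-$W$ connector has weight $Z$. Since \singlelink\ performs exactly $n-k=k-1$ merges, it merges only the weight-$1$ edges; equivalently the $k-1$ deleted edges are the $k-2$ weight-$2$ edges of $W$ together with the weight-$Z$ connector. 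Either way its $k$-clustering is $\{P\}\cup\{\{w_1\},\dots,\{w_{k-1}\}\}$, so the path collapses into one cluster of diameter $k-1$ and each blocker point stays a singleton. This is the crux and the step I would write most carefully, verifying that the strict scale separation leaves no room for tie-breaking to rescue \singlelink.

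To bound $\OPTAVGK$ from above I would simply display one good clustering $\T$: gather all of $W$ into a single cluster (diameter $2$), merge $v_0$ with $v_1$ (diameter $1$), and leave $v_2,\dots,v_{k-1}$ as singletons (diameter $0$). This uses $1+1+(k-2)=k$ clusters and gives $\avgdiamset(\T)=\frac1k(2+1+0)=3/k$, hence $\OPTAVGK\le 3/k$. Combining the two steps, \singlelink\ produces a cluster of diameter $k-1\ge \frac{k(k-1)}{3}\,\OPTAVGK=\Omega(k^2)\,\OPTAVGK$, which is the claim.

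The main obstacle is the design of the instance, not any computation: one must force \singlelink\ to keep the entire path together while its $k-1$ deletions fall elsewhere, \emph{and} simultaneously keep $\OPTAVGK$ as small as $\Theta(1/k)$. These pull in opposite directions, since any point cheap enough for the optimum to absorb for free sits close to $P$ and would then be merged early by \singlelink. The equilateral blocker $W$ resolves this tension: its mutual distances ($=2$) exceed the path edges, so its spanning edges are among the $k-1$ heaviest and get deleted (shattering $W$ into singletons and diverting \singlelink\ away from $P$), yet are small enough that the optimum collects all of $W$ into one cluster of diameter $2$. Thus a single gadget supplies the $k-1$ ``expensive'' deletions that trap \singlelink\ while costing the optimum only one cluster of $O(1)$ diameter, and this is exactly what breaks the factor-$k$ barrier to yield $k^2$.
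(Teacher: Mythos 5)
Your proof is correct and follows essentially the same construction as the paper: a $k$-point path whose consecutive edges are the strictly smallest distances (so \singlelink\ spends all $n-k=k-1$ merges collapsing it into one cluster of diameter $\Theta(k)$ times the edge length), together with an equilateral set of $k-1$ points at slightly larger mutual distance, which the target clustering gathers into one cheap cluster while splitting the path into a single pair plus singletons, giving $\OPTAVGK = O(1/k)$ times the edge length. One detail worth noting: your choice of cross-distance $Z=\Theta(k)$ with the explicit check $|i-j|\le 2Z$ actually makes the triangle inequality hold, whereas the paper's instance sets all cross-distances to $2B$, which violates the triangle inequality once the path diameter $(k-3)(B-\epsilon)$ exceeds $4B$ (i.e.\ for $k\ge 8$), so your version quietly repairs a flaw in the paper's own instance.
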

\begin{proof}
We present 
a simple instance for which
the $k$-clustering produced by \singlelink \, has
a cluster of diameter $\Omega(k^2 \OPTAVGK)$.
Let $B$ be a large positive number and let us consider $k$ groups $G_1,\ldots,G_k$:
$G_1$ consists of 2 points $a$ and $b$, with $dist(a,b)=B$; the group $G_i$, for $1<i< k$ is a singleton,
containing only the point $x_i$; and 
the group $G_k$ consists of $k-1$ points $y_1,\ldots,y_{k-1}$,
with $dist(y_i,y_j)=B+\epsilon$ for all $i$ and $j$.
 
Moreover, we have that $dist(x_i,a)=dist(x_i,b)=(i-1)\times (B-\epsilon)$ 
for $i=2,\ldots,k-1$ and $dist(x_i,x_j)=(j-i)(B-\epsilon)$,
for $1 < i < j < k$.
Finally, the distance of any point in $G_k$ to a point outside $G_k$ is $2B$.

Note that $(G_1,\ldots,G_k)$ is a $k$-clustering   and
the average diameter of its clusters is $ (2B+\epsilon)/k$.
On the other hand, \singlelink \, builds the $k-$clustering
$(G_1 \cup \cdots \cup  G_{k-1}, \{y_1\}, \ldots, \{y_{k-1}\})$
and the cluster ($G_1 \cup \cdots \cup  G_{k-1}$) has 
diameter   $(k-1)(B-\epsilon)$.
\end{proof}

\remove{Sanjoy's suggestion for Theorem statement: There is an instance in which the $k$-clustering produced by \singlelink\, includes a cluster of diameter $\Omega(k^2  \OPTAVGK)$.}

\remove{
\begin{theorem}
 \label{prop:lowerbound}
There is an instance $({\cal X},dist)$ for which \singlelink \, 
builds a $k$-clustering with maximum diameter 
$\Omega( k^2  \OPTAVGK)$.
\end{theorem}
\begin{proof}
We show a simple instance in which
the $k$-clustering produced by \singlelink \, is $\Omega(k^2 \OPTAVGK)$.
Let $B$ be a large positive number and let us consider $k$ groups $G_1,\ldots,G_k$:
$G_1$ consists of 2 points $a$ and $b$, with $dist(a,b)=B$; the group $G_i$, for $1<i< k$ is a singleton,
containing only the point $x_i$; and 
the group $G_k$ consists of $k-1$ points $y_1,\ldots,y_{k-1}$,
with $dist(y_i,y_j)=B+\epsilon$ for all $i$ and $j$.
 
Moreover, we have that $dist(x_i,a)=dist(x_i,b)=(i-1)\times (B-\epsilon)$ 
for $i=2,\ldots,k-1$ and $dist(x_i,x_j)=(j-i)(B-\epsilon)$,
for $1 < i < j < k$.
Finally, the distance of any point in $G_k$ to a point outside $G_k$ is $2B$.

Note that $(G_1,\ldots,G_k)$ is a $k$-clustering   and
the average diameter of its clusters is $ (2B+\epsilon)/k$.
On the other hand, \singlelink \, builds the $k-$clustering
$(G_1 \cup \cdots \cup  G_{k-1}, \{y_1\}, \ldots, \{y_{k-1}\})$
of diameter  at least $(k-1)(B-\epsilon)$. 
\end{proof}
}

\remove{

Now, we turn to  \avglink.
For a  set of points $S$, let
$$\avgdist(S):=\frac{\sum_{x,y \in S} dist(x,y)}{|S| \cdot(|S|-1)/2}.$$
We show that  the same upper bound  of Theorem \ref{thm:main1} holds for the average
distance among the points  of any cluster produced by  \avglink.

\begin{theorem}
For every $k$ and every cluster $S$ in the $k$-clustering built by \avglink \, 
we have 
 $ \avgdist(S) \le  k^{\log_2 3} \OPTAVGK.$
\label{thm:main-avglink}
\end{theorem}
\begin{proof}
We prove by induction  that the average distance of the  cluster created in the $t$-th iteration, for  every $t$, is at most $\OPTAVGK  \cdot k^{\log_2 3}$.
For that, again, we use as  the target clustering,
at line \ref{line:target} of Algorithm \ref{alg:FamilyGeneration1}, a $k$-clustering
$\T=\{T_i|1 \le i \le k\}$ that satisfies $\avgdiamset(\T)=\OPTAVGK$.

At the beginning all clusters have average diameter 0, so the result holds.
By Proposition \ref{prop:reg-family-number}
there is a regular family $F$ at the beginning of the $t$-th iteration.
Let $h$ and $h'$ be two clusters in $F$ and let $g$ and $g'$ be the clusters merged at iteration $t$.
By the \avglink \, rule, we have  
\begin{align*}
dist_{AL}(g,g') \le dist_{al}(h,h') \le \diamset(h \cup h') \le \\
\diamset(F) \le   \phisum(F) \cdot k^{(\log_2 3)-1}  \le \\
 \sum_{i=1}^k \diamset(T_i) k^{(\log_2 3)-1}  =  k \OPTAVGK  \cdot k^{\log_2 3-1} 
\end{align*}
where the bound on $\diamset(F)$  from Proposition \ref{prop:diameter-expansion}.
 
Moreover, by induction $\avgdist(g) \le  \OPTAVGK  \cdot k^{\log_2 3} $
and $\avgdist(g') \le  \OPTAVGK  \cdot k^{\log_2 3} $.

We note that
\begin{align*}
\avgdist(g \cup g') = \frac{\alpha}{\alpha+\beta+\gamma} \avgdist(g) +\\
  \frac{\beta}{\alpha+\beta+\gamma} \avgdist(g') +
\frac{\gamma}{\alpha+\beta+\gamma} dist_{AL}(g,g'),
\end{align*}
where $\alpha =  |g|(|g|-1)/2$
 $\beta =  |g'|(|g'|-1)/2$ and $\gamma=|g||g'|$.

The result then follows because $\avgdist(g \cup g')$ is a convex combination of
$\avgdist(g'),\avgdist(g)$ and
$dist_{AL}(g,g')$, and each of them is upper bounded by 
$\OPTAVGK  \cdot k^{\log_2 3} $.
\end{proof}

\remove{
In Appendix \ref{sec:sinlgelink} we show that our approach can be used
to recover the $O(k \OPTDIM)$ upper bound of \singlelink.
}

}
\section{A Better Bound for Complete-Link}
\label{sec:better-bound}

One of the  key ideas of the approach presented in the previous section
is to use the diameter of a regular family  to bound
the diameter of any cluster that is created.
Indeed, if at the beginning of  an iteration, there is a family $F$ with two
clusters, then the diameter of the cluster created at this iteration  is
at most $\diamset(F)$. However, Algorithm \ref{alg:FamilyGeneration1} and its analysis do not
 take full advantage of this idea. As an example, let us assume that 
 at the beginning of some 
iteration there are 3 regular families, say $F$, $F'$ and $F''$, 
with $\diamset(F) \ge \diamset(F') \ge \diamset(F'')$,
all of them corresponding to roots of $D$. 
If a cluster in $F$ is merged with one in $F'$ (case (b) of Algorithm \ref{alg:FamilyGeneration1}) then 
a new family is created and its diameter is used as a bound, which is 
not desirable since it is larger than
that of $F''$.

To obtain a better bound, instead of creating a new family whenever
clusters from different families, say $F$ and $F'$, are merged, we create an
edge between $F$ and $F'$ in a dynamic graph $G$ that keeps track of the merges
among different families. When a connected component of $G$ has
at most one family that can still be used as a bounding tool, we
replace all families in the component with a new family.
The motivation for doing so  is to use a better bound
as much as possible, which contrasts with the approach taken by 
Algorithm \ref{alg:FamilyGeneration1}.

This new approach is presented in Algorithm \ref{alg:Families2}. 
 The algorithm  maintains  a set of excluded clusters $\Ex$;
 clusters in this set are 
 never included in the families that the algorithm creates (line \ref{line:FC-creation}).
Moreover, it maintains
both a direct forest $D$ and
a graph $G$. 
Each  node of $G$ as well as each node of $D$ is associated with a family; an edge is created in $G$ between nodes/families $F$ and $F'$ if \complink  \, merges two clusters $g \notin \Ex$ and  $g' \notin \Ex$ 
that, respectively, contain
points from  $\Pts(F)$ and $\Pts(F')$. 
The graph and the forest may be updated at each iteration of Algorithm \ref{alg:Families2}.


Before giving extra details regarding Algorithm \ref{alg:Families2}, we need to explain the concept of
a pure cluster. A family $F$  is created (lines \ref{line:initial-families} and  \ref{line:FC-creation})
by specifying the clusters that it contains.
 If a cluster $g$ is one of them,
 we say that $g$ is {\em pure} w.r.t. $F$ or, alternatively,
 $F$ has the pure cluster $g$.
Moreover, if a cluster $g$ is obtained by merging
two clusters that are pure with respect to some family $F$
then $g$ is also pure w.r.t.  $F$. If a cluster is not
pure w.r.t. any family, we say that it is {\em non-pure}.  
We use $\pure_t(F)$ to denote the number of pure clusters w.r.t. $F$ that
belong to $\C^t$.
Note that if $pure_{t-1}(F) \ge 2$
then  $\diamset(F)$ is an upper bound on the diameter
of the cluster that is created at iteration $t$, so that
families with at least two pure clusters play a role similar
to that of regular families in the analysis of Algorithm \ref{alg:FamilyGeneration1}.

In contrast to Algorithm \ref{alg:FamilyGeneration1},
where each cluster belongs to one family,
in Algorithm \ref{alg:Families2} every cluster 
that does not belong to $\Ex$ is either pure w.r.t. some family $F$ in $G$ 
(this would be equivalent of belonging to $F$) or it is contained
in $ \bigcup_{H \in C} \Pts(H)$ for some connected component $C$ in $G$. 
For our analysis, we note that $\Pts(F)$, $\diamset(F)$ and $|F|$ refer,
respectively, to the set of points of $F$, the diameter of $F$
and the number of clusters in $F$ {\bf at the moment} that $F$ is created.

The algorithm starts (lines \ref{line:t0}-\ref{line:end-init}) with the initialization
of the set $\Ex$, the forest $D$ and the graph $G$.
Then, in the  loop, two clusters are merged
following the \complink \, rule.
  In terms of the graph, each merge may lead to the
  addition of new edges and also to the
union of two connected components.
In terms of the families,
a merge can reduce by one unit the number of
pure clusters of one or two families.
If this happens pure clusters 
may be added to set $\Ex$ (lines \ref{line:addLr1} and \ref{line:addLr2}) and this may also trigger one of the cases (a), (b) or (c).
If either (a) or (b) occurs a new family $F_C$, associated with the component $C$
that satisfies one of these cases, is created
to replace all families in $C$ (line \ref{line:FC-creation}).
If case (c) occurs the component $C$ is removed from $G$.

The main loop was carefully designed to guarantee that (i) at the beginning of each iteration there exists a family that has at least two pure clusters associated with it and (ii)  the diameter of  family $F_C$
 is slightly smaller than twice the sum of the diameters of the families in the 
 underlying connected component $C$.

\begin{algorithm*}
\small

  \caption{{\sc Tighter Bound for \complink}  }
   \begin{algorithmic}[1]

\STATE $\C^{0} \gets$ clustering with $n$ unitary clusters, each one containing a point of
${\cal X}$ \label{line:t0} \\


\STATE $(T^*_1,\ldots,T^*_k) \gets$ a $k$-clustering with maximum diameter equal to \OPTDIMK  \\


\STATE  For each $i$, with $|T^*_i| > 1$,  $F_i \gets \{ \{x\} | x \in T^*_i\}$  \label{line:initial-families}\\


\STATE Create a forest $D$ with no edges and  vertex set  $\{F_i|T^*_i \mbox{ has  at least two points}\}$  \\

\STATE Create a graph $G$ with no edges and  vertex set  $\{F_i|T^*_i \mbox{ has  at least two points}\}$     \label{line:end-init} \\

\STATE $\Ex \gets $ set of clusters $T^*_i$ with exactly one point  \label{line:test} \\

\STATE  {\bf For} $t:= 1 \ldots n-k$  \\


\STATE	\hspace{0.5cm} $(g,g') \gets$ next clusters to be merged by {\tt Complete-Link} \\

\STATE	\hspace{0.5cm} $\C^{t} \gets \C^{t-1} \cup \{g \cup g'\} - \{g,g'\}$


\STATE \hspace{0.5cm} {\bf If}  $g$ or $g'$ is a cluster in $\Ex$ \\
  
\STATE \hspace{1cm} Add $g \cup g'$ to $\Ex$ and  remove from $\Ex$ the clusters in $\{g,g'\}$ that belong to $\Ex$   \\ \label{line:additionL-1}

\STATE \hspace{0.5cm} {\bf Else} 

\STATE \hspace{1cm} Create edges between all families $F$ and $F'$ such that $\Pts(F)$ has a point in $g$ and $\Pts(F')$ has a point in $g'$ \label{line:merge} \\
 
\STATE \hspace{0.5cm} Consider the following exclusive cases: \\

\STATE \hspace{0.5cm} (a) $\exists$ connected component $C$ in  $G$, with $|C|>1$, that has exactly one  family $F$ such that $\pure_t(F) >1$ 
 \label{line:case-a} \\

\STATE \hspace{0.5cm}  (b) $\exists$ connected component $C$ in  $G$, with $|C|>1$, such
that every family $F$ in $C$ satisfies $\pure_t(F) \le 1$  \label{line:case-b}
 \\

\STATE \hspace{0.5cm} (c) $\exists$ connected component $C$ in $G$, with $|C|=1$, and its only  family $F$
satisfies $\pure_t(F)  \le 1$\\

\STATE \hspace{0.5cm} {\bf If} $(b)$ does not  occur \label{line:block1}\\

\STATE \hspace{1cm} For each  family $H$ in $G$  that
satisifies $\pure_{t-1}(H)>1$ and $\pure_{t}(H)=1$    \\

\STATE \hspace{1.5cm}   Add the pure cluster in $H$ to $\Ex$ \label{line:addLr1} \\

\STATE \hspace{0.5cm}  {\bf If} (b) occurs \label{line:block2} \\

\STATE \hspace{1cm} $H \gets$ some family in $C$ such that
$\pure_{t-1}(H)>1 \mbox{ and } \pure_{t}(H)=1 $\\

\STATE \hspace{1cm}  Add the pure cluster in $H$ to $\Ex$ \label{line:addLr2}\\

\STATE \hspace{0.5cm}  {\bf If} either $(a)$ or $(b)$ occurs 
\label{line:block3-beg} \\

 
\STATE \hspace{1cm} Create family  
$F_C:=\{h | h \in \C^t \mbox{ and } h \subseteq \bigcup_{H \in C} \Pts(H)\} \setminus
\Ex$ \label{line:FC-creation} \\

\STATE \hspace{1cm} Set $F_C$ as the parent, in the forest $D$, of  every  family of $C$ 
\label{line:parent}  
\\

\STATE \hspace{1cm} Add to $G$ a node corresponding to $F_C$ \\

\STATE \hspace{1cm} Remove all families in the connected component $C$ from $G$

\STATE \hspace{0.5cm}  {\bf If} $(c)$  occurs \\

\STATE \hspace{1cm} Remove all families in the connected component $C$ from $G$
\label{line:block4-end}
\\
  \end{algorithmic}
\label{alg:Families2}

\end{algorithm*}

\normalsize



The roadmap to establish our improved bound (Theorem \ref{thm:better-bound}) consists of 
first showing  that (i) holds (Lemma \ref{lem:2pureclusters}) and, then, 
showing an upper bound on the diameters of the families $F_C$ that are created in line \ref{line:FC-creation}.
This upper bound will be used to bound the diameter of every cluster that is
created by \complink. 
Note that our strategy is similar to that employed to prove Theorem \ref{thm:main1}.
However, the proofs here are significantly more involved.

We start with Lemma \ref{lem:2pureclusters}. We present an overview of the proof
and we refer to Appendix \ref{sec:app-lem:2pureclusters} for the
full proof.

\begin{lemma} 
 For $t \le n-k$, at the beginning of iteration $t$ of Algorithm \ref{alg:Families2},
 each connected component $C$ of $G$ satisfies
one of the following properties:
(i) $|C|=1$ and the only family of $C$ has at least
two pure clusters  or (ii) $|C|>1$ and there exist  two  families in $C$
such that
each of them has at least two pure clusters.
\label{lem:2pureclusters}
\end{lemma}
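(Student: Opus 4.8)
I would prove the statement as an invariant maintained by Algorithm~\ref{alg:Families2}, by induction on the iteration $t$. Call a family \emph{good} if it has at least two pure clusters ($\pure_t(F)\ge 2$) and \emph{bad} otherwise; in this language the claim is that every component of $G$ is either a good singleton or a multi-family component with at least two good families. Before the induction I would record four structural facts. (1) For each fixed $F$, $\pure_t(F)$ is non-increasing in $t$, since a merge either fuses two pure clusters of $F$ into one or absorbs a single pure cluster of $F$ into a non-pure cluster, and no step creates a new pure cluster of an \emph{existing} family (the clusters of a freshly created $F_C$ are pure only w.r.t.\ $F_C$). (2) Hence a good family never has a pure cluster in $\Ex$, because clusters enter $\Ex$ through lines~\ref{line:addLr1} and~\ref{line:addLr2} only when a family drops to exactly one pure cluster. (3) Every non-pure cluster lies in $\Pts(C)$ for a unique component $C$ with $|C|>1$, because non-pure clusters arise exactly when \complink\ joins clusters meeting two different families, and the same merge (line~\ref{line:merge}) edges those families together. (4) At most one component becomes unhealthy per iteration: a merge drops $\pure_t$ for at most the two families owning $g$ and $g'$; a family's status changes only when it is good and drops to one; in the branch where one of $g,g'$ lies in $\Ex$ that cluster is, by fact~(2), not a pure cluster of a good family, so only the other endpoint can turn a good family bad; and if two good families both lose a pure cluster then those clusters are $g,g'$ and line~\ref{line:merge} puts both families in one component.

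For the base case $t=1$, $G$ is the isolated set $\{F_i : |T^*_i|\ge 2\}$ with $\pure_0(F_i)=|T^*_i|\ge 2$, so every component is a good singleton. For the inductive step I assume the invariant at the start of iteration $t$ and trace the merge of $g,g'$, splitting first on whether one of them lies in $\Ex$ (no edge drawn; by fact~(4) at most one good family in a single pre-existing component turns bad) or neither does (edges drawn; I subdivide on whether $g,g'$ are pure w.r.t.\ the same family, pure w.r.t.\ distinct families, or non-pure, using facts (3)-(4) to pin down which families drop and which components fuse). In every subcase I count the good families of the resulting component $C$: if $C$ is still a good singleton or still has at least two good families, no case of the algorithm fires and the invariant is immediate; if $|C|>1$ with exactly one good family, case~(a) fires; if $|C|>1$ with no good family, case~(b) fires; a singleton whose family just turned bad triggers case~(c), which deletes it. Deletion and all untouched components are handled at once, so everything reduces to proving that after cases (a) and~(b) the new family $F_C$ is good, i.e.\ $|F_C|\ge 2$.

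That bound is the crux. In case~(a) it is easy: the unique surviving good family $F^\ast\in C$ has $\pure_t(F^\ast)\ge 2$, its pure clusters are not in $\Ex$ by fact~(2), and they are not removed by the block at line~\ref{line:block1} (which only excludes clusters of families that dropped to exactly one), so all of them lie in $F_C$. Case~(b) is the main obstacle, and it is precisely why the algorithm excludes a \emph{single} cluster (line~\ref{line:block2}, line~\ref{line:addLr2}) rather than all of them. Using facts~(1) and~(4), I would first argue that a component can reach zero good families only by draining two good families that each had \emph{exactly} two pure clusters --- either $g,g'$ are pure clusters of two distinct good families already inside $C$, or they come from two good singleton components that this merge fuses. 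By the induction hypothesis together with fact~(2), $\Pts(C)$ then holds at least four non-excluded clusters at the start of the iteration; the merge lowers that count by one and the one exclusion in case~(b) lowers it by one more, so $|F_C|=\bigl|\{h\in\C^t : h\subseteq\Pts(C)\}\setminus\Ex\bigr|\ge 2$. Had case~(b) instead excluded the pure cluster of \emph{every} dropping family, this count could reach zero, which is exactly the delicacy forcing the asymmetric treatment in blocks (b) and non-(b).

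Finally, the invariant is vacuous when $G$ is empty, so I would close with the easy counting remark (parallel to Proposition~\ref{prop:reg-family-number}) that for $t\le n-k$ the clustering $\C^t$ still has more than $k$ clusters; combined with the invariant this makes $G$ nonempty and guarantees a good family, which is the ``two pure clusters bounding the next merge'' property that the subsequent diameter argument needs.
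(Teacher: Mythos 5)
Your main induction is, in substance, the paper's own argument packaged differently: your structural facts (1)--(4) are Proposition~\ref{prop:families-evol}, your case-(a)/(b) crux that the new family $F_C$ is born with at least two clusters is Proposition~\ref{prop:FC} (including the key observations that case (b) can only arise from exactly two families holding exactly two pure clusters each, and that $g\cup g'$ itself escapes $\Ex$), and the invariant you maintain is Proposition~\ref{prop:2pureclusters-p1}. That part of the proposal is sound, modulo one small repair: your fact (2) only accounts for clusters entering $\Ex$ at lines~\ref{line:addLr1} and~\ref{line:addLr2}, but clusters also enter $\Ex$ at line~\ref{line:additionL-1}; one must argue (as the paper does in item 4 of Proposition~\ref{prop:families-evol}) that such clusters contain a previously excluded cluster and hence are never pure w.r.t.\ any family still in $G$.

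The genuine gap is your final sentence. You dismiss non-vacuousness --- that $G$ still has nodes at the beginning of every iteration $t\le n-k$, which is exactly the part of the lemma that Theorem~\ref{thm:better-bound} uses --- as an ``easy counting remark parallel to Proposition~\ref{prop:reg-family-number}.'' It is not, and roughly half of the paper's appendix proof lives there. In Algorithm~\ref{alg:FamilyGeneration1} every cluster always belongs to one of at most $k$ families, so pigeonhole on ``more than $k$ clusters'' is immediate; in Algorithm~\ref{alg:Families2} clusters are siphoned into $\Ex$, so having more than $k$ clusters in $\C^{t-1}$ yields nothing unless you also prove (a) every cluster outside $\Ex$ is still attached to $G$ (pure w.r.t.\ a family of $G$, or contained in $\bigcup_{H\in C}\Pts(H)$ for some component $C$), which is Proposition~\ref{prop:clusters-structure} and needs its own induction, and (b) $|\Ex|\le k$ at all times. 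Point (b) is the real obstruction: clusters are added to $\Ex$ once per family that drops to exactly one pure cluster, and the number of families ever created can approach $2k$ (the initial ones plus one new $F_C$ per collapse), so the naive count gives a bound near $2k$, which produces no contradiction when only $k+1$ clusters remain. The paper gets $|\Ex|\le k$ only through the finer fact that each collapse of a component $C$ excludes exactly $|C|-1$ clusters, one per constituent family except one (Proposition~\ref{prop:LS-addition}), followed by a telescoping count over the forest $D$ (Proposition~\ref{prop:L-Bound}); and even then, concluding that the iteration at which $G$ empties satisfies $t'=n-k$ requires the case analysis of Proposition~\ref{prop:2pureclusters-p2}. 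None of this machinery appears in your proposal, so as written the proof is missing the step that makes the lemma usable downstream.
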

\begin{proof}[Proof Sketch]
We first argue that if all components of $G$
satisfy the desired properties at the beginning of iteration $t$
then all components of $G$ also satisfy them at the beginning of  iteration $t+1$.
Next, we argue that  $G$ does not have all its nodes removed
at some iteration.

At the beginning of Algorithm \ref{alg:Families2}, all  the components in $G$ satisfy 
property (i) because, by line \ref{line:end-init}, all the families $F_i$ in $G$ have  
at least two clusters and all their clusters are pure. 

When two clusters are merged at some iteration $t$, then at most two
distinct families have their number of pure clusters
decreased by one unit (Proposition \ref{prop:families-evol}).  As a result, one connected component, say $C$, 
where these families lie in the updated graph may not respect the conditions of the lemma anymore.
However, in this case, we can show that either (a), (b) or (c) occurs. In the
case (c), the component  $C$ is removed from $G$, so we do not have a problem
with $C$ at the next iteration. If either (a) or (b) occurs,
$C$ is replaced with a new component that only has the family $F_C$. 
Proposition \ref{prop:FC} shows 
that there are two pure clusters w.r.t. $F_C$,  so this new component satisfies the condition (i). 

Now, assume that $G$ has all its nodes removed at
some iteration $t'$.
It is possible to conclude that at the beginning of $t'$,  $G$ has just one component, this component has just one family and this family has exactly 2 pure clusters. This together with the fact that at most $k$ clusters
are added to $\Ex$ (Proposition \ref{prop:L-Bound}) allows the conclusion that there are at most $k+1$ clusters at the beginning of $t'$. But this is not a problem since $t' \ge n-k$ in this case. 
\end{proof}

Now, we bound the diameter of the families $F_C$ at the moment they are created by Algorithm \ref{alg:Families2}.
To this end, we define a spanning tree $T_C$ for
 $C$ and use its paths to bound the diameter of $F_C$.
 Consider the sequence
of merges $m_1,\ldots,m_{|C|-1}$, between clusters,
 that builds the connected component
$C$, that is, right after each merge at least
two families in $C$ that were not connected become connected.
Moreover, let $g_i$ be cluster produced by merging $m_i$.
The nodes of $T_C$ are the families in $C$ and the edges of $T_C$ are defined  as follows:
for each merge $m_i$ we create an edge $e_i$ between two arbitrarily chosen families, say $F^1$ and $F^2$, among those that were not connected before merge $m_i$ and also
have points in $g_i$, that is, 
$\Pts(F^1) \cap g_i \ne \emptyset$ and $\Pts(F^2) \cap g_i \ne \emptyset$.
The weight of $e_i$ is given by the diameter of $g_i$.

For the following results, let \Diam$_i$ be the $i$th smallest diameter among the families
 that belong to $C$.

\remove{
The proof of  Proposition \ref{prop:spanning-tree} can 
be found in the appendix \ref{sec:proof:spanning-tree}. The key observation is that
the weight of $e_i$ is not larger than the diameter of families
that have at least two pure clusters right before  merge $m_i$
and it is also not larger than the diameter of families in $C$ that have not
been created when the merge $m_i$ occurs. By arguing that there
at least $|C|-i+2$ families that satisfy one of these conditions, we establish the proof.
}

\begin{proposition}
\label{prop:spanning-tree}
The weight of the cheapest edge of $T_C$ is at most \Diam$_{1}$ and,
for $i>1$, the weight of its $i$th cheapest edge is at most \Diam$_{i-1}$.
\end{proposition}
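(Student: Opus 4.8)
The plan is to first reduce the statement about the \emph{sorted} edge weights to a statement about the edges in their \emph{merge order}, and then to bound each merge-order weight by counting how many families of $C$ are "heavy enough". Let $t_i$ denote the iteration of \complink\ at which $m_i$ occurs and write $w_i:=\diamset(g_i)$ for the weight of $e_i$. Since $t_1<t_2<\cdots<t_{|C|-1}$, Proposition~\ref{prop:monotonic} gives $w_1\le w_2\le\cdots\le w_{|C|-1}$, so the merge order already lists the edges from cheapest to most expensive; the $i$-th cheapest edge is exactly $e_i$, and it suffices to prove $w_1\le\Diam_1$ and $w_i\le\Diam_{i-1}$ for $i>1$. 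I will establish that at least $|C|$ families $F\in C$ satisfy $w_1\le\diamset(F)$, and that at least $|C|-i+2$ families satisfy $w_i\le\diamset(F)$ when $i>1$. This finishes the proof after converting the count to an order statistic: for $i>1$, if $w_i\le\diamset(F)$ for at least $|C|-i+2$ families, then at most $i-2$ families of $C$ have diameter below $w_i$, so $\Diam_{i-1}\ge w_i$; and for $i=1$, all families have diameter at least $w_1$, so $\Diam_1\ge w_1$.

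Next I would isolate two sufficient conditions under which a family $F\in C$ satisfies $w_i\le\diamset(F)$. \textbf{(A)} If $F$ has two pure clusters $h,h'$ at the beginning of iteration $t_i$, then, since every pure cluster of $F$ lies in $\Pts(F)$, the pair $h,h'$ was available to \complink\ at that iteration; its greedy rule together with Proposition~\ref{prop:monotonic} gives $w_i=\dist_{CL}(g,g')\le\dist_{CL}(h,h')\le\diamset(h\cup h')\le\diamset(F)$, where $(g,g')$ is the pair merged by $m_i$. \textbf{(B)} If $F=F_{C'}$ is created at an iteration $t'>t_i$, then the triggering merge of $F_{C'}$ goes through the \emph{Else} branch (so neither merged cluster is in $\Ex$ and the resulting cluster $g^{*}$ is not added to $\Ex$), and $g^{*}\subseteq\bigcup_{H\in C'}\Pts(H)$, so $g^{*}\in F_{C'}$; monotonicity then yields $w_i\le\diamset(g^{*})\le\diamset(F_{C'})$ because $t'>t_i$.

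It remains to show that, just before $m_i$, at least $|C|-i+2$ families of $C$ satisfy (A) or (B) (all $|C|$ of them when $i=1$), and this counting step is where essentially all the difficulty lies. The placed edges $e_1,\ldots,e_{i-1}$ form a spanning forest of $T_C$ with $|C|-i+1$ trees; a family created after $m_i$ is isolated in this forest and is covered by (B), while an already-created family may or may not still carry two pure clusters. The plan is to charge the "poor" families (already created, at most one pure cluster) to the edges $e_1,\ldots,e_{i-1}$ and to the clusters moved into $\Ex$, using Proposition~\ref{prop:families-evol} (a single merge lowers the pure count of at most two families), Lemma~\ref{lem:2pureclusters} (every component of $G$ keeps two families with at least two pure clusters when its size exceeds one, and one such family when its size is one), and Proposition~\ref{prop:L-Bound}. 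Concretely, I expect to argue that each of the $|C|-i+1$ forest trees contributes at least one qualifying family — through richness supplied by Lemma~\ref{lem:2pureclusters} or through (B) — and that the tree containing the pair merged by $m_i$ contributes a second one, giving the extra unit and the desired bound $|C|-i+2$. The main obstacle, and the part requiring the most care, is precisely this bookkeeping: reconciling the partial-forest structure of $T_C$ with the actual connected components of $G$ at iteration $t_i$, since node-families of $T_C$ that already exist may sit in the same component as families destined to be collapsed into an as-yet-uncreated $F_{C'}$.
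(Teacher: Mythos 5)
Your skeleton matches the paper's: reduce the sorted-order statement to merge order via Proposition~\ref{prop:monotonic}, then show that right before $m_i$ at least $|C|-i+2$ families of $C$ either \textbf{(A)} still hold two pure clusters or \textbf{(B)} have not yet been created, with every such family satisfying $w_i\le\diamset(F)$, where $w_i=\diamset(g_i)$. But two of your steps do not hold up. First, your justification of (B) rests on the claim that the merge triggering the creation of $F_{C'}$ always takes the \emph{Else} branch, so that the newly created cluster $g^{*}$ avoids $\Ex$ and lands inside $F_{C'}$. That claim is false: a merge of a cluster $g\in\Ex$ with a pure cluster $g'$ of a family $H$ takes the \emph{If} branch and creates no edges, yet it still decreases $\pure(H)$ by one (Proposition~\ref{prop:families-evol}) and can thereby trigger case (a); in that event $g\cup g'$ is added to $\Ex$ by line~\ref{line:additionL-1} and is excluded from $F_{C'}$ by line~\ref{line:FC-creation}, so your witness $g^{*}$ does not exist. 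The conclusion of (B) is still true, but it needs a different argument, which is the paper's: by Proposition~\ref{prop:FC} (equivalently, condition (i) of Lemma~\ref{lem:2pureclusters}) the new family is created with at least two clusters $h,h'$; these are simultaneously available to \complink\ at an iteration later than $t_i$, so greediness plus monotonicity give $w_i\le\dist_{CL}(h,h')\le\diamset(h\cup h')\le\diamset(F_{C'})$, with no reference to the cluster produced by the triggering merge.

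Second, the counting step --- which is the heart of the proposition --- is only a plan in your write-up: you name the target $|C|-i+2$ and ``expect to argue'' a charging scheme, and the obstacle you flag (reconciling the partial forest with the components of $G$) is exactly what must be resolved. The paper resolves it by counting over connected components of $G$ rather than over forest trees of $\{e_1,\dots,e_{i-1}\}$. Right before $m_i$, the existing families of $C$ sit in exactly $(|C|-b)-i+1$ components of $G$, where $b$ is the number of families of $C$ not yet created: every cluster outside $\Ex$ touches families of only one component (Proposition~\ref{prop:clusters-structure}), so each $m_j$ merges exactly two components, and none of these components can contain a family outside $C$, because components of $G$ only grow until they are collapsed or removed wholesale, so any family sharing a component with a family of $C$ before $F_C$ is formed must itself end up in $C$. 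Lemma~\ref{lem:2pureclusters} then does all the work with no further charging: a singleton component contributes one family with two pure clusters (condition (i)), a component with two or more families contributes two such families (condition (ii)), and since $i>1$ at least one component has two or more families; hence $a\ge(|C|-b)-i+2$ and $a+b\ge|C|-i+2$. Your forest-tree version can be made to coincide with this, but only after proving the tree/component correspondence --- that is, after doing the component argument anyway --- so as written the proposal stops short of a proof precisely where the paper's argument begins.
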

\begin{proof}

Let $F$ be a family that has at least two pure clusters right before the merge $m_i$
and let $h$ and $h'$ be two pure clusters w.r.t. $F$.
We first note that 
$$\diamset(g_i) \le \max\{dist(x,y)|(x,y) \in h \times h' \} \le \diamset(h \cup h') \le \diamset(F),$$
where the first inequality holds due to
the choice of {\tt Complete-Link} and Proposition \ref{prop:monotonic}.
Moreover, $$\diamset(g_i) \le \diamset(F')$$ for any family in $C$
that does not exist before the merge. In fact,
by condition (i) of  Lemma \ref{lem:2pureclusters}
$F'$ is created with at least two clusters, say $h$ and $h'$, and
$\diamset(g_i) \le \diamset(h \cup h') \le \diamset(F')$, where  
the first inequality follows from Proposition \ref{prop:monotonic}.
Hence, we can conclude that $\diamset(g_1) \le \Diam_1$ because before
the first merging each family in $C$ that already exists
is an isolated node in $G$ and has at least two pure clusters (condition (i) of  Lemma \ref{lem:2pureclusters}).

Now, we consider the case $i>1$. 
 Let $a$ be the number of families in $C$ that have at least two pure clusters right before the merge $m_i$
and let $b$ be the number of families in $C$ that have not been created yet. 
It is enough to show that $a+b \ge |C|-i+2$ (claim below).
In fact, in this case $|C|-i+2$ families in $C$ have
diameter not smaller than $\diamset(g_i)$ so that $\diamset(g_i) \le \Diam_{i-1}$ 

\noindent{\bf Claim.} $a+b \ge |C|-i+2$

\noindent {\it Proof}. Right before $m_i$, the families in $C$ are distributed in $(|C|-b)-i+1$ connected
 components in the graph $G$.
If one of these components has just one family, it follows from condition (i) of Lemma \ref{lem:2pureclusters} that this family must have at least two pure clusters.
If one of these components has at least two families, then
it follows from condition (ii) of Lemma \ref{lem:2pureclusters}
that there are two families in this component, each of them with at least
two pure clusters.

Since $i>1$,  at least one component has at least two families. Thus, there
are at least $(|C|-b)-i+2$ families with at least two pure clusters right before $m_i$.
We conclude that  $a \ge (|C|-b)-i+2$ and, hence,  
$a+b  \ge |C|-i+2$

\noindent {\it End of Proof}.
\end{proof}

The next proposition gives an upper bound
on the diameter of $F_C$ as a
function of the diameters
of the families in the component $C$ associated with $F_C$.
In high-level, its proof  considers
the path $P$ in $T_C$ between the families where the two farthest points in $\Pts(F_C)$ lie
and then use the triangle inequality  to show that
the distance between these points is upper bounded by the sum
of the weights of the edges in $P$ plus the sum of the diameters
of the nodes/families in $P$. This
sum, however, is  upper bounded by the sum of the diameters
of all the families in $C$ plus the sum of the weights of the edges in $T_C$,
so that
$$\diamset(F_C) \le \sum_{i=1}^{|C|} \Diam_i+\left( \Diam_1+ \sum_{i=1}^{|C|-2} \Diam_i\right ).$$
The proposition, in fact,  shaves $\Diam_1$ from the above upper bound via a more careful analysis.

\begin{proposition}
\label{prop:sum-diam}
Let $F_C$ be a family associated with the connected
component $C$ of $G$ in line \ref{line:FC-creation} of Algorithm \ref{alg:Families2}.
Then, when $F_C$ is created, we have 
$$ \diamset(F_C) \le \sum_{i=1}^{|C|} \Diam_i + \sum_{i=1}^{|C|-2} \Diam_i$$
\end{proposition}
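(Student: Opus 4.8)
The plan is to bound $\diamset(F_C)$ by exhibiting a short walk, in the underlying metric, between the two farthest points of $\Pts(F_C)$, routed along a single path of the spanning tree $T_C$. Let $a,b\in\Pts(F_C)$ realize $\diamset(F_C)=dist(a,b)$, let $F_a,F_b\in C$ be the families with $a\in\Pts(F_a)$, $b\in\Pts(F_b)$, and let $P=(H_0,\dots,H_\ell)$ be the path in $T_C$ from $F_a=H_0$ to $F_b=H_\ell$, with edges $e_1,\dots,e_\ell$ where $e_j$ joins $H_{j-1}$ to $H_j$ and has weight $w(e_j)=\diamset(g_{i_j})$, $g_{i_j}$ being the merge cluster that created it. The backbone is a telescoping argument: by the construction of $T_C$ each $e_j$ carries witnesses $x_j\in\Pts(H_{j-1})\cap g_{i_j}$ and $y_j\in\Pts(H_j)\cap g_{i_j}$, so $dist(x_j,y_j)\le\diamset(g_{i_j})=w(e_j)$, while any two points of a common $\Pts(H_j)$ are within $\diamset(H_j)$. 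Walking $a\to x_1\to y_1\to x_2\to\cdots\to y_\ell\to b$ and applying the triangle inequality gives
$$\diamset(F_C)=dist(a,b)\le\sum_{j=0}^{\ell}\diamset(H_j)+\sum_{j=1}^{\ell}w(e_j).$$
Bounding the family sum by the sum over all of $C$ and, since the path edges are a subset of $T_C$, the edge sum by $\sum_{e\in T_C}w(e)$, which Proposition~\ref{prop:spanning-tree} caps at $\Diam_1+\sum_{i=1}^{|C|-2}\Diam_i$, already yields the claimed bound plus one surplus $\Diam_1$.

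To remove this surplus I would split on whether $P$ spans all of $T_C$. If $P$ omits at least one family of $C$, then it uses at most $|C|-1$ families, so $\sum_{j=0}^{\ell}\diamset(H_j)$ is at most the sum of the $|C|-1$ largest family diameters, namely $\sum_{i=2}^{|C|}\Diam_i=\sum_{i=1}^{|C|}\Diam_i-\Diam_1$; combined with the edge bound above, the surplus $\Diam_1$ is exactly absorbed and the proposition follows. This disposes of every case except the one in which $T_C$ is itself a path and $P$ equals it, so that no family and no edge is spared and both crude bounds are simultaneously tight.

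In that remaining case the endpoints $H_0,H_\ell$ are precisely the two leaves of $T_C$. Here I would exploit that a leaf family is an isolated node of $G$ immediately before the single merge $m_{i_1}$ that attaches it, so by Lemma~\ref{lem:2pureclusters}(i) it then has at least two pure clusters; the reasoning in the proof of Proposition~\ref{prop:spanning-tree} therefore bounds its incident edge by its own diameter, $w(e_1)\le\diamset(H_0)$ and $w(e_\ell)\le\diamset(H_\ell)$. The plan is to combine this leaf-edge bound with the effect of the exclusion step in line~\ref{line:FC-creation}, where the clusters placed in $\Ex$ are deleted from $F_C$: excluding a leaf's pure cluster forces the farthest surviving endpoint point to lie inside an incident merge cluster $g_{i_j}$, which collapses one endpoint hop ($dist(a,y_1)\le w(e_1)$ with no $\diamset(H_0)$ term) and thereby absorbs one $\Diam_1$, as already happens in the smallest instance $|C|=2$ handled by case (b).

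I expect this last step to be the crux. The delicate points are to pin down, using a careful accounting of $\pure_t(\cdot)$ and $\Ex$ over the iterations $m_1,\dots,m_{|C|-1}$ that build $C$, exactly which pure cluster is removed at the relevant leaf, and to verify that after this removal the farthest surviving point of that endpoint family genuinely sits in an incident merge cluster so that the corresponding hop vanishes. Establishing this collapse in the path-graph case — rather than the routine triangle inequality and subset bounds that drive the rest — is where essentially all the difficulty of the proposition lies.
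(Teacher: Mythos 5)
Your case split and your treatment of the easy case are exactly right and coincide with the paper: when the $T_C$-path between the families of the two farthest points misses at least one family of $C$, the family sum is bounded by the $|C|-1$ largest diameters and the edge sum, via Proposition~\ref{prop:spanning-tree}, by $\sum_{i=1}^{|C|-2}\Diam_i$ (plus at most $\Diam_1$, which the omitted family absorbs), giving the claim. The genuine gap is the remaining case, where $T_C$ is a path and the walk spans every family: there you have only a plan, and the plan rests on two claims the construction does not support. First, a leaf $H_0$ of $T_C$ need not be an isolated node of $G$ just before the merge that creates its unique tree edge: edges of $T_C$ are placed between \emph{arbitrarily chosen} newly connected families, so $H_0$ may already have $G$-neighbours, and may already have lost pure clusters, when its tree edge appears; hence neither Lemma~\ref{lem:2pureclusters}(i) nor the argument of Proposition~\ref{prop:spanning-tree} yields $w(e_1)\le\diamset(H_0)$. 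Second, and more seriously, the surviving endpoint point $a$ need not lie in any tree-edge merge cluster $g_{i_j}$: at the moment the last pure cluster of $F_a$ is sent to $\Ex$, the point $a$ sits in \emph{some} non-pure cluster mixing $F_a$ with another family, but that cluster can be entirely unrelated to the (arbitrarily chosen) edges of $T_C$, so the collapse $dist(a,y_1)\le w(e_1)$ does not follow. Since you yourself flag this step as where essentially all the difficulty lies, the proof is incomplete precisely at its crux.

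For comparison, the paper closes this case without ever placing $a$ inside a tree-edge cluster. By Proposition~\ref{prop:LS-addition}, one endpoint family, say $F_a$, has a pure cluster added to $\Ex$ at some iteration $t'$; since $a$ survives into $F_C$, at that iteration $a$ lies in a non-pure cluster containing a point $x$ of a \emph{different} family $F_x$, and --- this is the key observation --- that cluster was created while $F_a$ still had at least two pure clusters, so the complete-link rule (with Proposition~\ref{prop:monotonic}) gives $dist(a,x)\le\diamset(F_a)$. The walk is then rerouted as $a\to x$ followed by the $T_C$-path from $F_x$ to $F_b$, which avoids $F_a$ because $F_a$ is an endpoint of the path graph $T_C$; the family sum therefore omits $\diamset(F_a)$, and this omitted term absorbs $dist(a,x)$. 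Note that the quantity absorbing the surplus is $\diamset(F_a)$ itself, not $\Diam_1$ and not an edge weight; replacing your incident-cluster claim with this family-level bound is what makes the bookkeeping go through.
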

\begin{proof}
For a given point $x$, we use $F_x$ to denote the family in connected
component $C$ where $x$ 
 lies right before the families in $C$ are replaced with $F_C$.
Let $a$ and $b$ be the  two farthest points of
$F_C$.
 We split the proof into two cases:

\noindent {\bf Case i)}
The path from the family  $F_a$ to $F_b$ in $T_C$
has less than $|C|-1$ edges.

Let  $u_1,\ldots,u_t$, with  $u_1=F_a$
 and $u_t=F_b$, be such a path. Note that $t <|C|$. 
Recall that in the construction of  $T_C$, an edge between families $u_i$ and $u_{u+1}$
is associated with some cluster $g$. Let 
$p'_i$ and $p_{i+1}$ be, respectively, arbitrarily chosen 
points in $\Pts(u_i)$ and $\Pts(u_{i+1})$ that belong to $g$.
 Now, consider the sequence of points
$(a=p_1,p'_1,p_2,p'_2,\ldots,p_t, p'_t=b)$.
We have that 
\begin{align*}
\diamset(F_C)= dist(a,b) \le \\ 
\sum_{i=1}^t dist(p_i,p'_i)+ \sum_{i=1}^{t-1} dist(p'_i,p_{i+1}) \le \\
 \sum_{i=2}^{|C|} \Diam_i +  \sum_{i=1}^{|C|-2} \Diam_i, 
\end{align*}
where the first  inequality holds due to the triangle
inequality and for the  second one we use the fact that
$\sum_{i=1}^t dist(p_i,p'_i)$ can be upper bounded
by the $|C|-1$ largest diameters of the families in $C$ and
 Proposition \ref{prop:spanning-tree} assures that 
 $\sum_{i=1}^{t-1} dist(p'_i,p_{i+1})$
can be upper bounded by the sum of the weights
of the $
|C|-2$ most expensive edges of $T_C$.

\noindent {\bf Case ii)}
The path from $F_a$ to $F_b$ in $T_C$
has  $|C|-1$ edges.

Since $|C|>1$ we have that $F_a \ne F_b$.
It follows from Proposition \ref{prop:LS-addition} that  there is $y \in \{a,b\}$ such that
a pure cluster w.r.t. family $F_y$  is added to $\Ex$, before the creation of $F_C$,
by either line  \ref{line:addLr1}
or line  \ref{line:addLr2}.

We assume w.l.o.g. that $y=a$. Let
$g$ be the pure cluster  w.r.t. $F_a$ that is added to $\Ex$.
We assume that $F_C$ is created at iteration $t$
and the addition of $g$ to $\Ex$ happened at iteration $t'$, so that
$t' \le t$. 
We cannot have $a \in g$ because points that belong to clusters in $\Ex$
are not in $\Pts(F_C)$. Moreover, $a$ cannot be in a pure cluster w.r.t. $F_a$ after
the $t'$-th merge,  otherwise we would  have $\pure_{t'}(F_a)\ge 2$ and
$g$ would not have been added to $\Ex$. Thus, right after the $t'$-th merge,     
 $a$  belongs to a cluster that contains a point, say $x$, from a
 family $F_x$ different from $F_a$.

\remove{\noindent {\it Claim.} There is $y \in \{a,b\}$ that
satisfies the following: $y$ does not lie in a pure cluster when
$F_C$ is created and at some iteration, a pure cluster from
$F_y$  is added to $\Ex$.

\noindent {\it Proof of the Claim.}
By construction, when $|C|>1$, every family in $C$ but one has
a pure cluster  added to the set $\Ex$ of singleton families (lines \ref{line:addLr1}
and \ref{line:addLr2}).
Then, either $a$ or $b$ belongs to a family  whose a pure cluster was added to
$\Ex$. We assume w.l.o.g. that $a$ belongs to one of these families.
When a pure cluster of $F_a$ is added to $\Ex$, then, by design,
there is only one pure cluster in $F_a$ and $a$ does not belong to this cluster,
otherwise, it could not belong to $F_C$ (by definition $F_C$ does not contain clusters in $\Ex$). \\
\noindent {\it End of the Proof }

\medskip

Let us assume w.l.o.g. that $y=a$ in the above claim. 
At the iteration in which a pure cluster from $F_a$ is added to $\Ex$,  
 $a$  belongs to a cluster that contains a point, say $x$, from a
regular family $F_x$ different from $F_a$.
}

We must  have
\begin{equation}
\label{eq:dist-ineq}
 dist(a,x)  \le \diamset(F_a)
 \end{equation}
  since the cluster that contains $a$ and $x$
 was created when $F_a$ still had at least two pure clusters.

 Now consider the path ($F_x=v_1,\ldots,v_t=F_b$) from $F_x$ to $F_b$ in $T_C$.
 This path does not include $F_a$, otherwise the path from $F_a$ to
 $F_b$ would have at most $|C|-2$ edges, which is not possible since
 we are in case (ii).
If the edge in $ T_C$ that connects families $v_i$ to $v_{u+1}$ corresponds
 to cluster $g$ then choose 
$p'_i$ and $p_{i+1}$ as 
points in $v_i$ and $v_{i+1}$, respectively, that belong to $g$.
 Now, consider a sequence of points
$(a,p_1,p'_1,p_2,p'_2,\ldots,p_t, p'_t)$, where
$p_1=x$ and $p'_t=b$.
From the triangle inequality,  
$$dist(a,b) \le dist(a,x) + \sum_{i=1}^t dist(p_i,p'_i)+ \sum_{i=1}^{t-1} dist(p'_i,p_{i+1}) .$$
Moreover, we have 
$$ \sum_{i=1}^t dist(p_i,p'_i) \le  \sum_{i=1}^{|C|} \Diam_i - \diamset(F_a)$$ 
and due to Proposition \ref{prop:spanning-tree}
$$\sum_{i=1}^{t-1} dist(p'_i,p_{i+1}) \le \sum_{i=2}^{|C|-1} \Diam_{i-1},$$
Hence,
\begin{align*}
dist(a,b) \le \\ 
dist(a,x) - \diamset(F_a) + \sum_{i=1}^{|C|} \Diam_i  + \sum_{i=2}^{|C|-1} \Diam_{i-1} \le \\
 \sum_{i=1}^{|C|} \Diam_i +\sum_{i=2}^{|C|-1} \Diam_{i-1}= 
 \sum_{i=1}^{|C|} \Diam_i +\sum_{i=1}^{|C|-2} \Diam_{i}, 
 \end{align*}
where the last inequality follows from 
(\ref{eq:dist-ineq}).
\end{proof}

For the next lemma 
recall that   
$\phi(F)=|\leaves(F)|$, 
where $\leaves(F)$ is the set of  leaves in
the subtree of $D$ rooted at node/family $F$.

\remove{at initial families
that are descendant of $F$ in the directed forest $D$
maintained by Algorithm \ref{alg:FamilyGeneration1}.
The next lemma makes use of this definition but note  that in Algorithm \ref{alg:Families2}
only the initial families $F_i$ with at least two clusters are added to
$D$.
}
Let $\alpha= \max\{\frac{\log (2i-2)}{  \log i}  | i \mbox{ is a natural number larger than 1} \}$.
Proposition \ref{prop:calculo-alpha}  shows  that $\alpha = \frac{\log 6}{\log 4} < 1.30$.
Moreover, we define $\alpha_k=\log_k (2k-2)$, if $k\le 4$, and 
$\alpha_k=\alpha$ for $k >4$.


\begin{lemma}
Every family  $F$ created by 
Algorithm \ref{alg:Families2} satisfies
  $\diamset(F) \le \OPTDIMK \phi(F)^{\alpha_k}$.
\label{lem:main-bound-F}
\end{lemma}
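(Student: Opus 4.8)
The plan is to prove the bound by induction on the order in which Algorithm~\ref{alg:Families2} creates families (equivalently, on the iterations of \complink), exactly paralleling the proof of Theorem~\ref{thm:main1} but now feeding the sharper estimate of Proposition~\ref{prop:sum-diam} into the recursion. For the base case, the families present initially are the $F_i$ associated with targets $T^*_i$ having at least two points; each is a leaf of $D$, so $\phi(F_i)=1$, and $\diamset(F_i)=\diamset(T^*_i)\le\OPTDIMK=\OPTDIMK\cdot 1^{\alpha_k}$, as required.

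For the inductive step, let $F_C$ be created in line~\ref{line:FC-creation} from a component $C$ consisting of families $H_1,\dots,H_m$ (so $m=|C|$). Since $F_C$ becomes the parent in $D$ of precisely the families of $C$ (line~\ref{line:parent}), its leaf set is their disjoint union, so $\phi(F_C)=\sum_{j=1}^m\phi(H_j)=:N$; moreover $N\le k$ because the leaves of $D$ are among the at most $k$ initial families. Writing $w_j:=\phi(H_j)$ and invoking the inductive hypothesis gives $\diamset(H_j)\le\OPTDIMK\,w_j^{\alpha_k}$. Substituting these bounds into Proposition~\ref{prop:sum-diam} — in which the two largest family diameters receive coefficient $1$ and the remaining $m-2$ receive coefficient $2$ — reduces the claim to the numerical inequality
\begin{equation*}
2\sum_{i=1}^{m-2} w_{(i)}^{\alpha_k}+w_{(m-1)}^{\alpha_k}+w_{(m)}^{\alpha_k}\le N^{\alpha_k},
\end{equation*}
where $w_{(1)}\le\cdots\le w_{(m)}$ is the nondecreasing rearrangement of the $w_j$. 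The one point requiring care in reaching this form is that the diameter order of the $H_j$ need not agree with their $\phi$ order; but since the $\Diam_i$ are sorted and each obeys $\diamset(H_j)\le\OPTDIMK\,w_j^{\alpha_k}$, the right-hand side of Proposition~\ref{prop:sum-diam} is largest exactly when larger $\phi$-values occupy the (largest-diameter) coefficient-$1$ slots, which is precisely the pairing used above.

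The crux is this numerical inequality, which I would establish separately in the spirit of Proposition~\ref{prop:calculations-avg}. Rewriting its left-hand side as $2\sum_{j=1}^m w_{(j)}^{\alpha_k}-w_{(m-1)}^{\alpha_k}-w_{(m)}^{\alpha_k}$ and using the superadditivity of $t\mapsto t^{\alpha_k}$ (valid since $\alpha_k\ge 1$), a smoothing argument shows that for a fixed number of parts the ratio of the left-hand side to $N^{\alpha_k}$ is maximized when all parts are equal (the subtraction of the two largest terms exactly counteracts the convex push toward extreme partitions). For a balanced partition into $m$ equal parts the inequality collapses to $2m-2\le m^{\alpha_k}$. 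Since every part is a positive integer and $N\le k$, we have $m\le k$, and $2m-2\le m^{\alpha_k}$ holds for all such $m$ by the choice of $\alpha_k$: for $k>4$, $\alpha_k=\alpha=\max_{i>1}\tfrac{\log(2i-2)}{\log i}=\tfrac{\log 6}{\log 4}$ by Proposition~\ref{prop:calculo-alpha}, while for $k\le 4$ the ratio $\tfrac{\log(2i-2)}{\log i}$ increases in $i$, so over $i\le k$ it is maximized at $i=k$, where it equals $\alpha_k=\log_k(2k-2)$.

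I expect the hardest part to be precisely this numerical inequality, for two reasons. First, the reordering step is essential rather than cosmetic: if the coefficient-$2$ weight were allowed to fall on the largest $\phi$-values the inequality would be \emph{false}, so one must genuinely exploit the sortedness of the family diameters (together with the recursive bound $\diamset(H_j)\le\OPTDIMK\,\phi(H_j)^{\alpha_k}$) to force the favorable pairing. Second, the extremal configuration is neither the most concentrated partition such as $(1,N-1)$, which is controlled by superadditivity alone, nor the all-ones partition, but the balanced partition whose optimal number of parts is exactly $4$ — the very value realizing $\alpha=\tfrac{\log 6}{\log 4}$ in Proposition~\ref{prop:calculo-alpha} — so the smoothing/optimization over the number of parts is where the definition of $\alpha_k$ is pinned down.
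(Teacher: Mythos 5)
Your proposal follows the paper's proof of Lemma \ref{lem:main-bound-F} step for step: induction on family creations, the base case on the initial families $F_i$, the use of Proposition \ref{prop:sum-diam}, the reordering step (bounding the sum of the $|C|-2$ smallest diameters by the diameters of the $|C|-2$ families of smallest $\phi$, then applying the inductive hypothesis termwise), and the reduction to the numerical inequality $2\sum_{i=1}^{m-2}w_{(i)}^{\alpha_k}+w_{(m-1)}^{\alpha_k}+w_{(m)}^{\alpha_k}\le\bigl(\sum_i w_i\bigr)^{\alpha_k}$. The paper isolates exactly this inequality as Proposition \ref{prop:calculations2} and applies it with $a_i=\phi(F_C^i)$, $\ell=|C|$, $p=\alpha_k$.

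The genuine gap is in your proposed proof of that numerical inequality. Your smoothing claim --- that for fixed $m$ and fixed $N=\sum_i w_i$ the ratio of the left-hand side to $N^{\alpha_k}$ is maximized at the balanced partition --- is false, so verifying only the balanced case $2m-2\le m^{\alpha_k}$ does not establish the inequality. Concretely, take $m=3$, $p=\log_4 6\approx 1.2925$, $N=100$ (realizable in the lemma with, say, $k=100$, so $\alpha_k=p$): the balanced partition $(33,33,34)$ gives ratio $(3\cdot 33^{p}+34^{p})/100^{p}\approx 0.96$, while the concentrated partition $(1,1,98)$ gives $(3+98^{p})/100^{p}\approx 0.98$. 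In general, for fixed $m$ the partitions $(1,\dots,1,N-m+1)$ push the ratio toward $1$ as $N$ grows, exceeding the balanced value $(2m-2)/m^{p}$ whenever $p>\log_m(2m-2)$, i.e.\ for every $m\neq 4$ when $p=\log_4 6$; so the parenthetical ``the subtraction of the two largest terms exactly counteracts the convex push toward extreme partitions'' cannot be made rigorous. What is true, and what Proposition \ref{prop:calculations2} actually proves, is that the \emph{difference} $\bigl(\sum_i a_i\bigr)^p-\bigl(a_\ell^p+a_{\ell-1}^p+2\sum_{i\le \ell-2}a_i^p\bigr)$, viewed over the whole domain $1\le a_1\le\cdots\le a_\ell$ rather than on a fixed-sum slice, is increasing in each of the larger coordinates; pushing those coordinates down (which changes the sum $N$, and is why this does not contradict the ratio counterexample) drives the configuration to the all-equal one, where nonnegativity is precisely $2\ell-2\le \ell^{p}$. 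Note that this coordinatewise monotonicity needs $p\ge\log_i(2i-2)$ for the intermediate values $i\le\ell$ as well, not only $i=m$; for $k\le 4$ this holds because $\log_i(2i-2)$ is increasing for $i\le 4$, as you observe. Replacing your ratio-smoothing with this difference-smoothing, or simply invoking Proposition \ref{prop:calculations2}, repairs the argument and makes it identical to the paper's.
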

\begin{proof}
The initial  families  $F_i$ satisfies 
the property 
because $\diamset(F_i)=\diamset(T^*_i) \le \OPTDIMK \le  \OPTDIMK  \phi(F_i)^{\alpha_k} $
since $\phi(F_i)=1$.

Let us assume that the result holds at the beginning of iteration $t$.
If no family is created at iteration $t$ the result holds
at the beginning of iteration $t+1$. Otherwise,  
 a  family $F_C$,
 associated with a connected component
$C$,  is created. Let  $\{F_C^{i}| i=1,  \ldots,|C|\}$ be the nodes/families
in $C$ right before the creation of $F_C$.
Moreover, assume that $\phi(F_C^{i} )\le \phi(F_C^{i+1} )$. 
We have that

\begin{align}
\diamset(F_c) \le \label{eq:04Jan1} \\
\sum_{i=1}^{|C|} \Diam_i + \sum_{i=1}^{|C|-2} \Diam_i \le \label{eq:04Jan2}\\
 \sum_{i=1}^{|C|} \diamset(F_C^{i} ) + \sum_{i=1}^{|C|-2} \diamset(F_C^{i} )  \le \label{eq:04Jan3}\\ 
 \OPTDIMK \cdot \left ( \sum_{i=1}^{|C|}  \phi(F_C^{i})^{\alpha_k} + \sum_{i=1}^{|C|-2}   \phi(F_C^{i})^{\alpha_k}  \right )\le \label{eq:04Jan4} \\
\OPTDIMK \left ( \sum_{i=1}^{|C|} \phi(F_C^i) \right )^{\alpha_k} = \OPTDIMK \phi(F_C)^{\alpha_k}, \label{eq:04Jan5} 
\end{align}
where (\ref{eq:04Jan1}) follows from  Proposition \ref{prop:sum-diam};
(\ref{eq:04Jan2}) holds because $\Diam_1,\ldots,\Diam_{|C|-2}$ are
the $|C|-2$ smallest diameters among
the diameters of the families in $C$; 
(\ref{eq:04Jan3}) follows from the inductive hypothesis and
(\ref{eq:04Jan4})  follows from  Proposition  \ref{prop:calculations2}, using
$a_i = \phi(F_C^i)$, $\ell=|C|$ and $p=\alpha_k.$
\end{proof}

\remove{\begin{align*}
diam(F_c) \le \sum_{i=1}^{|C|} \Diam_i + \sum_{i=1}^{|C|-2} \Diam_i  \le \\ 
 \sum_{F \in C} OPT \cdot \phi(F)^\alpha +  \sum_{F \in C' }  OPT \cdot  \phi(F)^\alpha.
\end{align*}
where $C'$ is the set of families in $C$, excluding the two families
with the largest diameter.
However,  
\begin{align*}
\sum_{F \in C} OPT \cdot \phi(F)^\alpha +  \sum_{F \in C' }  OPT \cdot  \phi(F)^\alpha 
\le \\
 OPT \left ( \sum_{F \in C} \phi(F) \right )^\alpha = OPT \phi(F_C)^\alpha  , 
\end{align*}

where the  inequality follows from  Proposition  \ref{prop:calculations2}
since $p \ge \log_{|C|} (2|C|-2)$.}

We can state the main result of this section,
its proof is very similar to that of Theorem \ref{thm:main1}.

\begin{theorem}
The maximum diameter among the clusters of the $k$-clustering produced
by \complink \, is at most 
$(2k-2)\OPTDIMK$, if $k \le 4$, and  at most $ k^{1.30}\OPTDIMK$, if $k >4$.
\label{thm:better-bound}
\end{theorem}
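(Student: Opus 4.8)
The plan is to mirror the induction used for Theorem \ref{thm:main1}, substituting Lemma \ref{lem:2pureclusters} for Proposition \ref{prop:reg-family-number} and Lemma \ref{lem:main-bound-F} for Proposition \ref{prop:diameter-expansion}. First I would record that the two-part bound can be written uniformly as $k^{\alpha_k}\OPTDIMK$: since $\alpha_k=\log_k(2k-2)$ for $k\le 4$ we have $k^{\alpha_k}=2k-2$, and since $\alpha_k=\alpha<1.30$ for $k>4$ (Proposition \ref{prop:calculo-alpha}) we have $k^{\alpha_k}<k^{1.30}$. Thus it suffices to show that every cluster created by \complink\ at an iteration $t\le n-k$ has diameter at most $k^{\alpha_k}\OPTDIMK$. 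Every cluster of the output $k$-clustering is either an original singleton (diameter $0$) or was produced by such a merge, so this covers all output clusters.

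I would then argue by induction on $t$, in parallel with the execution of Algorithm \ref{alg:Families2}. The base case is immediate, as each cluster is initially a single point of diameter $0$. For the inductive step, let $g$ and $g'$ be the clusters merged at iteration $t\le n-k$, and assume the bound holds for all clusters present at the start of $t$. Lemma \ref{lem:2pureclusters} guarantees a family $F$ with at least two pure clusters at the beginning of $t$ (either the sole family of a singleton component, or one of the two furnished by a larger component); fix two such pure clusters $h,h'\in\C^{t-1}$.

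The heart of the argument is the same chain of inequalities as in Theorem \ref{thm:main1}:
\begin{align*}
\diamset(g\cup g') &= \max\{\diamset(g),\diamset(g'),\dist_{CL}(g,g')\} \\
&\le \max\{\diamset(g),\diamset(g'),\dist_{CL}(h,h')\} \\
&\le \max\{\diamset(g),\diamset(g'),\diamset(h\cup h')\} \\
&\le \max\{\diamset(g),\diamset(g'),\diamset(F)\}.
\end{align*}
The first inequality uses that \complink\ selects the pair minimizing $\dist_{CL}$, so $\dist_{CL}(g,g')\le\dist_{CL}(h,h')$; the third holds because $h$ and $h'$ are pure w.r.t.\ $F$, hence all their points lie in $\Pts(F)$ and $\diamset(h\cup h')\le\diamset(F)$. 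By induction $\diamset(g),\diamset(g')\le k^{\alpha_k}\OPTDIMK$, and by Lemma \ref{lem:main-bound-F} together with $\phi(F)\le k$ — since $\leaves(F)$ is contained in the set of initial families $F_i$, of which there are at most $k$ — we get $\diamset(F)\le\OPTDIMK\,\phi(F)^{\alpha_k}\le k^{\alpha_k}\OPTDIMK$. Combining yields $\diamset(g\cup g')\le k^{\alpha_k}\OPTDIMK$, closing the induction.

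I expect the theorem itself to be routine: structurally it is the same short induction as Theorem \ref{thm:main1}, with all the substance pushed into the supporting results. The genuinely hard part was establishing those, namely Lemma \ref{lem:2pureclusters} (the invariant that a family with two pure clusters always survives, requiring one to track how $\Ex$ and the component-collapsing cases (a)--(c) interact) and Lemma \ref{lem:main-bound-F} (the $\phi(F)^{\alpha_k}$ diameter bound, which rests on Propositions \ref{prop:spanning-tree}, \ref{prop:sum-diam}, and the numeric choice of $\alpha$). Granting those, the only small checks worth confirming are $\phi(F)\le k$ and that every output cluster is created at some $t\le n-k$, both immediate.
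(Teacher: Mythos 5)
Your proof is correct and follows essentially the same route as the paper's: the same induction run in parallel with Algorithm~\ref{alg:Families2}, the same appeal to Lemma~\ref{lem:2pureclusters} to obtain a family $F$ with two pure clusters $h,h'$, the identical chain of inequalities $\diamset(g\cup g')\le \max\{\diamset(g),\diamset(g'),\diamset(F)\}$, and the same final bound via Lemma~\ref{lem:main-bound-F} with $\phi(F)\le k$. Your added details — the explicit uniform rewriting of the two-part bound as $k^{\alpha_k}\OPTDIMK$ and the justification that $\phi(F)\le k$ because $\leaves(F)$ consists of initial families — are points the paper leaves implicit (it even has the typo ``$\phi(k)\le k$''), so nothing substantive differs.
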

\begin{proof}
Due to the definition of $\alpha_k$,
it is enough to show that the diameter of every cluster created
by \complink \, is at most $\OPTDIMK \cdot k^{\alpha_k} $.

We prove it  by induction on
the number of iterations of the Algorithm
\ref{alg:Families2}.
At the beginning, all $n$ clusters have a diameter of 0, so the result holds.

We assume by induction that the result holds at the beginning
of iteration $t$.
At the beginning of this iteration, by Lemma \ref{lem:2pureclusters}
there is a family, say $F$, with at least 2 pure
clusters. Let  $h$ and $h'$ be these clusters. 
Moreover, let  $g$ and $g'$  be the clusters merged at iteration $t$.
 We have that
\begin{align*} 
\diamset(g \cup g') = \\
 \max \{\diamset(g),\diamset(g'),\dist_{CL}(g,g') \} \le \\
  \max \{\diamset(g),\diamset(g'),\dist_{CL}(h,h') \} \le \\
  \max \{\diamset(g),\diamset(g'),\diamset(h \cup h') \} \le \\
  \max \{\diamset(g),\diamset(g'),\diamset(F) \} \le \\
  \OPTDIMK k^{\alpha_k}
\end{align*}
where 
the first  inequality is due to the choice of \complink \, and the
last inequality holds due to Lemma \ref{lem:main-bound-F},
the inductive hypothesis and the fact that $\phi(k) \le k$.
\end{proof}

We note that $k=2$, $k=3$ and $k=4$, we get approximation 
factors of $2$, $4$ and $6$, respectively. For $k >4$ the approximation
factor is $k^{\log_4 6}
 \le k^{1.30}.$
\remove{
\red{We remark that for $k=2$ and $k=3$ we can use, respectively, $\alpha=\log_2 2 $ and $\alpha=\log_3 4$, rather
than $\alpha=\log_4 6$ in Lemma 
\ref{lem:main-bound-F}.
Thus, the upper bounds on  the approximation factor of \complink \, for $k=2$ and $k=3$ are improved to 2 and 4, respectively.}
}
\section{Other linkage methods}
\label{sec:other-link}

In this last  section,
we show that  Theorem \ref{thm:main1} generalizes
to a class of linkage methods that
includes  \minimax \, and the quite popular \avglink.

Let $f$
be a distance function
that maps a pair of clusters into a non-negative real number
and let {\tt Link}$_f$ be a linkage method that  follows the pseudo-code of
 Algorithm \ref{alg:hac}, with the exception that it uses the function $f$,  rather than {\tt dist$_{CL}$},   to measure the distance between two clusters.
 Moreover, for a cluster $A$, let $\cost(A)$ 
 be a cohesion criterion (e.g. diameter).
We say that   $f$ and 
$\cost$ {\em align} if they
 satisfy the following conditions
 for every pair of disjoint clusters $A$ and $B$:

 \medskip

(i) $\min\{\dist(a,b)| (a,b) \in A \times B \} \le f(A,B) \le \diamset(A \cup B)$;

(ii) $\cost(A)=0$ if $|A|=1$;

(iii) $\cost(A \cup B) \le \min\{\cost(A),\cost(B), f(A,B)\}$

\medskip
Theorem \ref{thm:general} presented below is a generalization of 
Theorem \ref{thm:main1}. In fact,
from the former, we can recover the latter by
setting $\cost=\diamset$
and $f=\dist_{CL}$.
The proof of Theorem \ref{thm:general} is 
essentially the same as that of Theorem \ref{thm:main1}, but for a few
differences that we explain in
what follows.

\remove{
following differences: condition (i), rather than \complink's rule, 
is used to prove inequalities (\ref{eq:7apr24-1})-(\ref{eq:7apr24-3}) in Proposition \ref{prop:diameter-expansion} and
both conditions (i) and (ii) are used to prove inequalities (\ref{lin:thm1})-(\ref{lin:thm3}) in Theorem \ref{thm:main1}.
}

The proof of Theorem \ref{thm:general} is
based on the analysis of the families generated by the variation of Algorithm \ref{alg:FamilyGeneration1} 
that uses a distance function $f$ 
that satisfies (i), rather than $\dist_{CL}$,
to decide which  clusters are merged
at each iteration.
We use {\tt Algo}$_f$ to denote this modified
version of Algorithm  \ref{alg:FamilyGeneration1}.

Proposition \ref{prop:reg-family-number}
does not depend on the distance
function employed to decide
which clusters shall be merged at each iteration, so it is still valid
for {\tt Algo}$_f$.

The following  proposition
generalizes Proposition  \ref{prop:diameter-expansion} for
linkage methods whose underlying distances
satisfy condition (i).

\begin{proposition} 
If $f$ satisfies condition (i),
then
 at the beginning of each iteration 
of {\tt Algo}$_f$  the diameter of every regular family $F$ satisfies 
$$\diamset(F)\le   \phisum(F) \cdot \phi(F)^{(\log_2 3)-1} \le k^{\log_2 3} \OPTAVGK.$$
 \label{prop:diameter-expansion-general}
\end{proposition}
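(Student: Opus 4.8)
The plan is to run the same induction as in the proof of Proposition~\ref{prop:diameter-expansion}, since the only point at which that argument invoked \complink's rule is the derivation of inequality~(\ref{eq:well-behaved}); everything else is a purely combinatorial manipulation of $\phi$ and $\phisum$ together with the inductive hypothesis, and so transfers verbatim to {\tt Algo}$_f$. First I would note that the base case and cases (a), subcase~1, and subcase~2 of case~(b) are untouched: they only assert the additivity $\phi(F^{new})=\phi(F)$ and $\phisum(F^{new})=\phisum(F)$ (or the analogous identities for a union) and then apply induction, with no reference to how merges are selected.

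The heart of the proof is subcase~3, where $g$ and $g'$ come from two distinct regular families $F$ and $F'$ (both regular, since $|F|\ge|F'|>1$). Exactly as before, let $a\in\Pts(F)$, $b\in\Pts(F')$ be the two farthest points of $\Pts(F^{new})$, and let $a'\in g$, $b'\in g'$ realize $dist(a',b')=\min\{dist(x,y)\mid (x,y)\in g\times g'\}$. I would replace the chain~(\ref{eq:7apr24-1})--(\ref{eq:7apr24-3}) by an argument using condition~(i) and the merge rule of {\tt Algo}$_f$. Let $h,h'$ be any two clusters of $F$; since {\tt Algo}$_f$ merges the pair minimizing $f$, we have $f(g,g')\le f(h,h')$, and hence
$$dist(a',b') \le f(g,g') \le f(h,h') \le \diamset(h\cup h') \le \diamset(F),$$
where the first and third inequalities are the two halves of condition~(i) and the last holds because $h\cup h'\subseteq\Pts(F)$. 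Choosing $h,h'$ inside $F'$ instead (possible since $|F'|>1$) yields $dist(a',b')\le\diamset(F')$ by symmetry, so $dist(a',b')\le\min\{\diamset(F),\diamset(F')\}$, which is precisely~(\ref{eq:well-behaved}).

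With~(\ref{eq:well-behaved}) reestablished, the remainder of subcase~3 is unchanged: the triangle-inequality step~(\ref{eq:0})--(\ref{eq:2}) and the appeal to Proposition~\ref{prop:calculations-avg} in~(\ref{eq:3})--(\ref{eq:4}) go through word for word, and the final bound $\phisum(F)\phi(F)^{(\log_2 3)-1}\le k^{\log_2 3}\OPTAVGK$ follows from $\phi(F)\le k$ and the choice of $\T$ giving $\phisum(F)\le k\,\OPTAVGK$, exactly as in Proposition~\ref{prop:diameter-expansion}. The only genuinely new step is the three-link chain displayed above, and the point to watch there is that condition~(i) is a \emph{two-sided} bound: its lower half controls $dist(a',b')$ by $f(g,g')$ while its upper half controls $f(h,h')$ by $\diamset(h\cup h')$, and both directions are needed to splice the minimizing property of $f$ into the telescoping estimate. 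I expect this to be the only real, and quite mild, obstacle.
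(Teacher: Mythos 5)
Your proposal is correct and follows essentially the same route as the paper's own proof: the paper likewise observes that \complink's rule enters Proposition~\ref{prop:diameter-expansion} only through inequalities~(\ref{eq:7apr24-1})--(\ref{eq:7apr24-3}), and replaces them with the chain $dist(a',b') \le f(g,g') \le f(h,h') \le \diamset(h \cup h') \le \diamset(F)$, using the two halves of condition~(i) for the first and third inequalities and the merge rule of {\tt Link}$_f$ for the second. Your write-up is, if anything, slightly more careful than the paper's, since it also spells out the symmetry step for $F'$ and which cases of the induction are untouched.
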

\begin{proof}
In Proposition \ref{prop:diameter-expansion},
the \complink's rule is just used to
prove inequalities (\ref{eq:7apr24-1})-(\ref{eq:7apr24-3}). 
However, these inequalities
are valid if the function $f$
satisfies condition (i).
In fact, we have
$$ dist(a',b') \le f(g,g) \le f(h,h') 
\le \diamset(h \cup h') \le \\
\diamset(F),
$$
where the first and the third
inequality hold due to condition (i)
while the second holds due
to the choice of \link$_f$.
\end{proof}


\begin{theorem}
If $f$ and $\cost$ align, then 
the $k$-clustering $\C$ built by 
 {\tt Link}$_f$  satisfies
$$\max\{\cost(C)| C \in \C\} \le k^{1.59} \OPTAVGK$$
\label{thm:general}
\end{theorem}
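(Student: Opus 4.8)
The plan is to mirror the proof of Theorem~\ref{thm:main1} almost verbatim, running {\tt Algo}$_f$ (the variant of Algorithm~\ref{alg:FamilyGeneration1} that selects the pair to merge using $f$) in parallel with {\tt Link}$_f$, and proving by induction on the iteration $t$ that every cluster produced by {\tt Link}$_f$ has cost at most $k^{\log_2 3}\OPTAVGK$. For the base case, each initial cluster is a singleton, so condition (ii) gives $\cost(\{x\})=0\le k^{\log_2 3}\OPTAVGK$.

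For the inductive step, suppose the bound holds for every cluster present at the start of iteration $t$, and let $g,g'$ be the pair merged at that iteration. By the inductive hypothesis, $\cost(g)\le k^{\log_2 3}\OPTAVGK$ and $\cost(g')\le k^{\log_2 3}\OPTAVGK$, so it remains to bound $f(g,g')$. Proposition~\ref{prop:reg-family-number} is insensitive to the distance function and therefore still holds for {\tt Algo}$_f$; hence at the start of iteration $t$ some regular family $F$ contains two distinct clusters $h,h'$. Since $h$ and $h'$ are genuine clusters of the current clustering, they are available to be merged, and because {\tt Link}$_f$ greedily minimizes $f$ we obtain $f(g,g')\le f(h,h')$. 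Combining the upper bound in condition (i) with Proposition~\ref{prop:diameter-expansion-general} then gives
\begin{align*}
f(g,g') \le f(h,h') \le \diamset(h\cup h') \le \diamset(F) \le k^{\log_2 3}\OPTAVGK.
\end{align*}
Having bounded each of $\cost(g)$, $\cost(g')$ and $f(g,g')$ by $k^{\log_2 3}\OPTAVGK$, condition (iii) yields $\cost(g\cup g')\le k^{\log_2 3}\OPTAVGK$, completing the induction. (Bounding all three quantities, rather than only $f(g,g')$, keeps the argument robust to the exact form of the combining operation in condition (iii).)

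I do not expect a genuine obstacle here, since the substantive work is already carried out in Proposition~\ref{prop:diameter-expansion-general}; the one point requiring care is that the family apparatus transfers unchanged to {\tt Algo}$_f$. This holds because the family-creation and forest-update rules of Algorithm~\ref{alg:FamilyGeneration1} (cases (a) and (b), and the parent assignments in $D$) depend only on \emph{which} pair of clusters is merged and not on the rule used to pick it; consequently Propositions~\ref{prop:reg-family-number} and~\ref{prop:diameter-expansion-general} apply verbatim, the latter now invoking condition (i) exactly where the original argument invoked \complink's rule. Finally, I would note that instantiating $\cost=\diamset$ and $f=\dist_{CL}$ recovers Theorem~\ref{thm:main1}: condition (i) is then the content of Proposition~\ref{prop:monotonic}, condition (ii) is the triviality $\diamset(\{x\})=0$, and condition (iii) is the relation $\diamset(A\cup B)=\max\{\diamset(A),\diamset(B),\dist_{CL}(A,B)\}$.
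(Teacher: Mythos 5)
Your proposal is correct and follows essentially the same route as the paper's own proof: induction on the iterations of {\tt Link}$_f$ run in parallel with {\tt Algo}$_f$, the base case via condition (ii), the bound $f(g,g')\le f(h,h')\le \diamset(h\cup h')\le \diamset(F)$ via greediness, condition (i), and Proposition~\ref{prop:diameter-expansion-general}, and finally condition (iii) to combine (the paper chains these as a single sequence of $\max$-inequalities rather than bounding the three terms separately, a purely presentational difference). Your decision to bound all three of $\cost(g)$, $\cost(g')$, $f(g,g')$ is in fact the right reading, since condition (iii) as printed (with $\min$) is evidently a typo for $\max$ --- the paper's own proof and its verification for \avglink\ and \minimax\ both use the $\max$ form --- and your argument is valid under that corrected form.
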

\begin{proof}
We prove by induction on
the number of iterations
of \link$_f$ (in parallel on {\tt Algo}$_f$)
that  each cluster $A$ created by \link$_f$ satisfies $\cost(A) \le k^{\log_2 3} \OPTAVGK$.
At the beginning, this holds because
every cluster $A$ is a point,
so that $\cost(A)=0$ due to condition (ii).
We assume by induction that the desired
property holds
at the beginning of iteration $t$.

Let $g$ and $g'$ be two clusters merged at iteration $t$.
 By Proposition \ref{prop:reg-family-number}
there is a regular family $F$ at the beginning of the $t$-th iteration.
Let $h$ and $h'$ be two clusters in $F$.
Therefore,  
\begin{align}
\cost(g \cup g') \le  \\
\max\{ \cost(g),\cost(g'),f(g,g') \} \le \\
\max\{ \cost(g),\cost(g'),f(h,h') \} \le \\
\max\{ \cost(g),\cost(g'),\diamset(h \cup h') \} \le \\
\max\{ \cost(g),\cost(g'),\diamset(F) \}  \le \\
k^{1.59} \OPTAVGK 
 \end{align}
where the first inequality holds
due to condition (iii),
the second due to the choice
of \link$_{f}$,  the third
due to condition (i) and the last one follows
from induction and Proposition \ref{prop:diameter-expansion-general}.
   \end{proof}

Now, we specialize 
Theorem \ref{thm:general} for \avglink\, and \minimax.
\avglink \, employs
the distance function
$$\dist_{AL}(A,B)=\frac{1}{|A|\cdot |B|} \sum_{a \in A} \sum_{b \in B} dist(a,b)$$
to measure the distance between clusters $A$ and $B$.
Clearly, $\dist_{AL} $
satisfies condition (i).
For a  cluster $A$, we define  $\avg(A)$ as $0$  if $|A|=1$ and as the average pairwise distance of the points in  $A$ if $|A|>1$, that is, $$\avg(A):= \frac{2}{|A|(|A|-1)}\sum_{x,y \in A} dist(x,y).$$
Since $\avg(A \cup B)$ is a convex
combination of $\avg(A)$, $\avg(B)$ and $\dist_{AL}(A,B)$, condition (iii) is also satisfied and, therefore, $\dist_{AL}$ and $\avg$ align.  
We have the following result.

\begin{theorem}
For every $k$, the $k$-clustering $\C$ built by 
\avglink \, satisfies
$$\max\{\avg(C)| C \in \C\} \le k^{1.59} \OPTAVGK$$
\end{theorem}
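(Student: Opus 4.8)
The plan is to obtain this statement as a direct instantiation of Theorem \ref{thm:general}, taking the linkage distance $f=\dist_{AL}$ and the cohesion criterion $\cost=\avg$. The whole task then reduces to checking that this particular pair \emph{aligns}, i.e. satisfies conditions (i)--(iii); once that is done, Theorem \ref{thm:general} yields the bound $\max\{\avg(C)\mid C\in\C\}\le k^{1.59}\OPTAVGK$ with no further work.

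First I would verify condition (i). Since $\dist_{AL}(A,B)$ is the arithmetic mean of the cross distances $\{dist(a,b)\mid(a,b)\in A\times B\}$, it lies between the smallest and the largest of them; the smallest is at least $\min\{dist(a,b)\mid(a,b)\in A\times B\}$ and the largest is at most $\diamset(A\cup B)$, giving both required inequalities. Condition (ii) holds by the very definition of $\avg$, which sets $\avg(A)=0$ whenever $|A|=1$.

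The one step carrying any content is condition (iii), which I would establish through the convex-combination identity. Writing $p=|A|$ and $q=|B|$, the sum of all pairwise distances over $A\cup B$ splits into the within-$A$ sum, the within-$B$ sum, and the cross sum, namely $\binom{p}{2}\avg(A)+\binom{q}{2}\avg(B)+pq\,\dist_{AL}(A,B)$. Dividing by $\binom{p+q}{2}$ and using $\binom{p}{2}+\binom{q}{2}+pq=\binom{p+q}{2}$ exhibits $\avg(A\cup B)$ as a convex combination of $\avg(A)$, $\avg(B)$, and $\dist_{AL}(A,B)$, hence as a quantity at most the largest of the three. This is exactly the max-bound that the proof of Theorem \ref{thm:general} invokes near its end, so alignment in the sense needed by that proof is secured and the theorem applies.

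I expect the main obstacle to be purely the bookkeeping inside condition (iii): getting the pairing weights $\binom{p}{2},\binom{q}{2},pq$ correct and confirming they sum to $\binom{p+q}{2}$, so that the combination is genuinely convex. Everything else follows immediately from the definitions and from the already-proved general theorem, so no separate induction over the execution of \avglink\, is needed here: that induction has been entirely absorbed into Theorem \ref{thm:general}.
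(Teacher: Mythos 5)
Your proposal is correct and takes essentially the same route as the paper: the paper likewise obtains this theorem by specializing Theorem~\ref{thm:general}, observing that $\dist_{AL}$ satisfies condition (i) and that $\avg(A \cup B)$ is a convex combination of $\avg(A)$, $\avg(B)$, and $\dist_{AL}(A,B)$. Your explicit remark that what is really needed (and what the proof of Theorem~\ref{thm:general} actually invokes) is the bound $\cost(A\cup B) \le \max\{\cost(A),\cost(B),f(A,B)\}$ is well taken, since the ``$\min$'' in the paper's stated condition (iii) is evidently a typo --- $\avg$ satisfies only the max version, which is the one used --- so your verification is exactly the right one.
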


Now we consider the  \minimax \, linkage method.
This method
employs the function 
$$\dist_{MM}(A,B):=\min_{ x \in A \cup B} \max_{y \in A \cup B} dist(x,y). $$ 
to measure the distance between clusters.

We have that $\dist_{MM}$ satisfies (i).
Consider the cohesion criterion
$\radius(A)$ that has value 0 if $|A|=1$ and
when $|A|>1$,
$$\radius(A):=
\min_{ x \in A } \max_{y \in A } dist(x,y).
$$
Since $\radius(A \cup B)=\dist_{MM}(A,B)$
the condition (iii) is also satisfied
and, hence, $\dist_{MM}$ and
$\radius$ align.
We have that
\begin{theorem}
For every $k$, the $k$-clustering $\C$ built by 
\minimax \, satisfies
$$\max\{\radius(C)| C \in \C\} \le k^{1.59} \OPTAVGK.$$
\end{theorem}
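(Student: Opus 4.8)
The plan is to derive this statement as an immediate instance of Theorem~\ref{thm:general}, applied with the linkage distance $f=\dist_{MM}$ and the cohesion criterion $\cost=\radius$. Thus the whole task reduces to verifying that the pair $(\dist_{MM},\radius)$ aligns, i.e.\ that conditions (i)--(iii) hold for every pair of disjoint (nonempty) clusters $A$ and $B$; once this is done, Theorem~\ref{thm:general} yields $\max\{\radius(C)\mid C\in\C\}\le k^{1.59}\OPTAVGK$ with no further computation.

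Condition (ii) is immediate, since $\radius(A)=0$ whenever $|A|=1$ directly from the definition. For condition (i) I would establish both inequalities from the structure of $\dist_{MM}(A,B)=\min_{x\in A\cup B}\max_{y\in A\cup B}\dist(x,y)$. For the upper bound, fixing any single center $x_0$ already gives $\dist_{MM}(A,B)\le\max_{y\in A\cup B}\dist(x_0,y)\le\diamset(A\cup B)$, so the minimizing center can only do better. For the lower bound, let $x^\ast$ be an optimal center, so that $\dist_{MM}(A,B)=\max_{y}\dist(x^\ast,y)$; since $x^\ast$ lies in $A$ or in $B$, there is a point on the opposite side whose distance to $x^\ast$ is at most $\dist_{MM}(A,B)$, and this distance is a legal cross-pair distance, hence at least $\min\{\dist(a,b)\mid(a,b)\in A\times B\}$.

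The crux is condition (iii), and the observation driving it is the identity $\radius(A\cup B)=\dist_{MM}(A,B)$, which holds for $|A\cup B|>1$ simply because the two expressions are the same minimax taken over the common ground set $A\cup B$. This identity immediately gives $\radius(A\cup B)=f(A,B)$, which is exactly the term that the induction in the proof of Theorem~\ref{thm:general} relies on when it bounds $\cost(g\cup g')\le\max\{\cost(g),\cost(g'),f(g,g')\}$. I expect the delicate point to be the comparison of $\radius(A\cup B)$ against $\radius(A)$ and $\radius(B)$: unlike the diameter, the minimax radius of a union is not in general dominated by the radius of a part, so the argument should be organized so that the binding term is $f(A,B)$ through the identity above, rather than the radii of the parts. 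With conditions (i)--(iii) verified in this way, the theorem follows by invoking Theorem~\ref{thm:general}.
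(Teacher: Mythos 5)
Your proposal is correct and follows essentially the same route as the paper: verify that $\dist_{MM}$ and $\radius$ align, then invoke Theorem~\ref{thm:general}, with the identity $\radius(A \cup B)=\dist_{MM}(A,B)$ doing the real work for condition (iii). The one point where you are more careful than the paper is exactly the "delicate point" you flag: condition (iii) as literally stated, with a minimum, is \emph{false} for the pair $(\dist_{MM},\radius)$ --- for two singletons $A=\{a\}$, $B=\{b\}$ one has $\radius(A)=\radius(B)=0$ while $\radius(A\cup B)=\dist(a,b)>0$ --- so the paper's one-line claim that the identity yields (iii) only makes sense if (iii) is read with a maximum in place of the minimum (the same reading is needed for the paper's average-link argument, where a convex combination is bounded by the max, not the min, of its terms). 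That weaker max-form is precisely what the proof of Theorem~\ref{thm:general} actually uses in its first inequality, $\cost(g \cup g') \le \max\{\cost(g),\cost(g'),f(g,g')\}$, so your plan of making $f(A,B)$ the binding term via the identity, rather than comparing $\radius(A\cup B)$ against the radii of the parts, is exactly the right repair, and with your explicit verification of (i) and (ii) the theorem follows.
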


 \remove{
The reason  why the theorem holds is that the \complink's \, rule is only used to prove inequalities (\ref{eq:7apr24-1})-(\ref{eq:7apr24-3}) in Proposition \ref{prop:diameter-expansion} and inequalities (\ref{lin:thm1}) -(\ref{lin:thm3}) in Theorem \ref{thm:main1}.
 The former inequalities hold if (i) is valid and the latter hold if both (i) and (ii) are valid.
 }

\remove{
\section{Societal Consequences}
This paper presents work whose goal is to advance the field of Machine Learning. There are many potential societal consequences of our work, none which we feel must be specifically highlighted here.    
}


\remove{
\section{Conclusions}
We have presented a new and improved analysis for the 
\complink \, method.
We proved that for every $k$, the maximum diameter
of the $k$-clustering built by \complink \, is
$O( k^{1.30} \OPTDIMK)$, improving over 
$O( k^{1.59} \OPTDIMK)$, the previous
currently upper bound. In addition, we showed that
the maximum diameter is at most  $ k^{1.59} \OPTAVGK$,
which is also an improvement over the previous bound since
$\OPTAVGK \le \OPTDIMK$. 
The  analysis with regards to $\OPTAVGK$ allows
a separation between \complink \, and \singlelink \,
in terms of worst-case approximation,
what is not possible when $\OPTDIMK$ is considered.

We leave as an interesting open question the determination of  tight bounds
for the diameter  of \complink \, as a function of 
$\OPTAVGK$ and/or
$\OPTDIMK$
}

\bibliography{biblio}
\bibliographystyle{icml2024}

\onecolumn
\appendix
\remove{
\section{Well-Behaved Linkage Algorithm}
\label{sec:app-well-behaved}

Here, we show that \complink \, and \avglink \, are well-behaved.

Let $\A$ be a linkage algorithm. Moreover,  
let  $A$ and $B$ be clusters merged by $\A$ at some iteration
and let $C$ and $D$ be two clusters that were available for merging right before
this iteration.

If $\A$ is \complink,
\begin{align*}
\min \{dist(x,y) | (x,y) \in A \times B \} \le \\
\max \{dist(x,y) | (x,y) \in A \times B \} \le \\
\max \{dist(x,y) | (x,y) \in C \times D \} \le  \\
\diamset(C,D)
\end{align*}

If $\A$ is \avglink,

\begin{align*}
\min \{dist(x,y) | (x,y) \in A \times B \} \le \\
\frac{1}{|A|\cdot |B|} \sum_{x \in A }  \sum_{ y \in B } dist(x,y)  \le \\
\frac{1}{|C|\cdot |D|} \sum_{x \in C }  \sum_{ y \in D } dist(x,y)  \le \\
\diamset(C,D)
\end{align*}

}

\remove{
\section{Single-Linkage}
\label{sec:sinlgelink}
Here, we show that our approach  can 
be used to show that  the maximum diameter of the $k$-clustering produced by  \singlelink \, is $O(k \OPTDIM)$ diameter, recovering
the result of \citep{arutyunova_et_al:LIPIcs.APPROX/RANDOM.2021.18}.
In this case, we do not distinguish between singleton and regular families.
Thus, we modify Algorithm \ref{alg:FamilyGeneration1} by removing cases (a) and (b).
Now, in both cases (c) and (d) we do not check if $|F'|>1$.

Let $\C^*$ be a $k$-clustering such $\OPTDIM=\maxdiamset(C^*)$.  
When the families $F$ and $F'$ are merged in case (c) the diameter
of the family does not increase and, in case (d), the diameter of the new family
is at most $\diamset(F')+\diamset(F)+OPT$ because if
there are more than $k$ clusters then two points that belong to $C^*_i$,
for some $i$, lie in different clusters. Thus, by the \singlelink \, rule, the distance
between the two clusters that are merged is at most 
$\OPTDIM$. Since there are at most $k-1$ merges of type (d), the diameter of
any  family is at most $k (\OPTDIM+\OPTAVG)$. 
}

\section{Proof of Proposition 
\ref{prop:monotonic}}

In this section, we present the proof
of Proposition \ref{prop:monotonic} and then  we argue that  it implies  that the rule employed
 by \complink \, is equivalent
 to the rule that chooses
 at each iteration the two clusters
 $A$ and $B$ for which $\diamset(A \cup B)$ is minimum. This rule was analyzed in \citep{arutyunova2023upper}.

\label{sec:monotonic}
\begin{proof}
The proof is by induction. 
For $j=1$, $A_j$ and $A'_j$ are singletons so that
$\diamset(A_1 \cup A'_1)=dist(x,y) $, where $x$ and $y$
are the only points in $A_1$ and $A'_1$, respectively.

We assume by induction that the result holds for every $i<j$.
First we prove that $\diamset(A_j \cup A'_j) \ge \diamset(A_{j-1} \cup A'_{j-1} )$.
Note that 
\begin{equation}
\diamset(A_j \cup A'_j) = \max \{ \diamset(A_j), \diamset(A'_j),  
\max \{dist(x,y)|(x,y) \in A_j \times A'_j   \} \}
\label{eq:monotonic0}
\end{equation}
If $A_j= A_{j-1} \cup A'_{j-1}$ or $A'_j= A_{j-1} \cup A'_{j-1}$  we
conclude that $\diamset(A_j \cup A'_j) \ge \diamset(A_{j-1} \cup A'_{j-1} )$.
Otherwise, we have that 
\begin{align*}
 \diamset(A_j \cup A'_j) \ge \max\{dist(x,y) | (x,y) \in A_j \times A'_j \} \ge \\
\max\{dist(x,y) | (x,y) \in A_{j-1} \times A'_{j-1} \}= \diamset(A_{j-1} \cup A'_{j-1}),
\end{align*}
where the second inequality follows from the \complink \, rule
 and the last identity follows by induction.

It remains to show that 
\begin{equation}
\diamset(A_j \cup A'_j) =  
\max \{dist(x,y)|(x,y) \in A_j \times A'_j    \}
\label{eq:monotonic}
\end{equation}

First, we consider the case where either 
$A_j= A_{j-1} \cup A'_{j-1}$ or  $A'_j= A_{j-1} \cup A'_{j-1}$.
We assume w.l.o.g. that  $A_j= A_{j-1} \cup A'_{j-1}$.
Thus, $\diamset(A_j)=\diamset( A_{j-1} \cup A'_{j-1}) \ge \diamset(A'_j)$,
where the inequality holds 
because either $A'_j$ is a singleton or it was
obtained by merging two clusters before the iteration $j-1$ and,
in this case, the induction hypothesis guarantees the inequality.
Moreover, in this case,   
cluster $A'_j$ is available to be merged right before the $(j-1)$th merge, so 
 it follows from 
the \complink \, rule that
 
\begin{equation}
\label{eq:monotonic2}
\max \{dist(x,y)|(x,y) \in A'_{j} \times A_{j-1} \}
\ge \max \{dist(x,y)|(x,y) \in A_{j-1} \times A'_{j-1} \}
\end{equation} 
and 
\begin{equation}
\label{eq:monotonic3}
 \max \{dist(x,y)|(x,y) \in A'_{j} \times A'_{j-1} \}
\ge \max \{dist(x,y)|(x,y) \in A_{j-1} \times A'_{j-1} \}
\end{equation} 

Thus, 
\begin{align}
\diamset( A_{j} \cup
A'_{j} ) \ge   \label{line:monot-1} \\ 
 \max \{dist(x,y)|(x,y) \in A_{j} \times A'_{j} \}
= \label{line:monot0}\\
 \max \{dist(x,y)|(x,y) \in (A_{j-1} \cup A'_{j-1}) \times A'_{j} \}
\ge \label{line:monot05}\\
\max \{dist(x,y)|(x,y) \in A_{j-1} \times A'_{j-1} \} = \label{line:monot1} \\
\diamset(A_{j-1} \cup
A'_{j-1} ) = \\
 \diamset(A_{j}) \ge   \diamset(A'_{j}), \label{line:monot2}
\end{align}
where  (\ref{line:monot05}) follows from (\ref{eq:monotonic2}) and (\ref{eq:monotonic3}),
while
(\ref{line:monot1}) follows by induction.

Therefore, (\ref{eq:monotonic}) must hold,
otherwise the inequalities (\ref{line:monot-1})-(\ref{line:monot2})
 would contradict (\ref{eq:monotonic0}).

If neither $A_j= A_{j-1} \cup A'_{j-1}$ nor $A'_j= A_{j-1} \cup A'_{j-1}$  we
have that
\begin{align}
\diamset(A_j \cup A'_j) \ge \\
\max\{dist(x,y) | (x,y) \in A_{j} \times A'_{j} \} \ge \\ 
\max\{dist(x,y) | (x,y) \in A_{j-1} \times A'_{j-1} \} = \\ 
\diamset(A_{j-1} \cup A'_{j-1})  \ge \\
\max \{ \diamset(A_j), \diamset(A'_j) \} 
\end{align}
Again, (\ref{eq:monotonic}) must hold, otherwise we contradict (\ref{eq:monotonic0}).
\end{proof}


Now we show that   the rule employed
 by \complink \, is equivalent
 to the rule that chooses
 at each iteration the two clusters
 $A$ and $B$ for which $\diamset(A \cup B)$ is minimum. 

 We assume for the sake of reaching a contradiction that at some iteration 
  \complink\, merges clusters $A$ and
 $B$ while there were clusters $A'$ and $B'$, 
 with $\diamset(A' \cup B') < \diamset(A \cup B)$, that could be merged. In this case, we  conclude
 that 
 $\dist_{CL}(A',B') \le \diamset(A' \cup B')
< \diamset(A \cup B) = \dist_{CL}(A,B),$
where the last identity follows from Proposition \ref{prop:monotonic}.
However, this contradicts the choice of 
\complink.

\section{Proof of Lemma \ref{lem:2pureclusters}}
\label{sec:app-lem:2pureclusters}
The following propositions are helpful to prove
Lemma \ref{lem:2pureclusters}.
The first one
characterizes how $\pure_t$
evolves when two clusters are merged.

\begin{proposition} 
Let $g$ and $g'$ be the clusters merged at the iteration $t$  of Algorithm \ref{alg:Families2}.
Then, exactly one of the following cases happen:

\begin{enumerate}
\item Both $g$ and $g'$ are non-pure.
We have that 
 $\pure_{t}(H)=\pure_{t-1}(H)$  for every family $H$ in $G$.

\item $g$ is a pure cluster w.r.t. family $F$ and $g'$ is non-pure.
We have that $\pure_{t}(F)=\pure_{t-1}(F)-1$ and
 $\pure_{t}(H)=\pure_{t-1}(H)$  for every family $H \ne F$.

\item $g'$ is a pure cluster w.r.t. family $F'$ and $g$ is non-pure.
We have that  $\pure_{t}(F')=\pure_{t-1}(F')-1$ and
 $\pure_{t}(H)=\pure_{t-1}(H)$  for every family $H \ne F'$.

\item $g$ and $g'$ are  pure clusters w.r.t. families $F$ and $F'$, respectively. 

Then,  $\pure_{t}(F)=\pure_{t-1}(F)-1$,
 $\pure_{t}(F')=\pure_{t-1}(F')-1$
 and
 $\pure_{t}(H)=\pure_{t-1}(H)$  for every family $H \notin \{F,F'\}$.

Moreover, 
If $\pure_{t-1}(F) \ge 2$ 
and $\pure_{t-1}(F') \ge 2$ then 
$g \cup g'$ is not added to $\Ex$
by line \ref{line:additionL-1} of Algorithm \ref{alg:Families2}.

 
\end{enumerate}
\label{prop:families-evol}
\end{proposition}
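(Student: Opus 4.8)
The plan is to reduce the whole statement to two structural invariants about purity and then run a short case analysis on the merged pair $(g,g')$. First I would record the invariant that drives everything: throughout the execution of Algorithm~\ref{alg:Families2}, the families currently present in $G$ have pairwise disjoint point sets, and whenever a cluster $g$ is pure w.r.t. a current family $F$ we have $g\subseteq\Pts(F)$. Both follow by an easy induction on the iterations. The initial families $F_i$ arise from the disjoint blocks $T^*_i$; when some $F_C$ is created its point set lies inside $\bigcup_{H\in C}\Pts(H)$ while every $H\in C$ is simultaneously deleted from $G$, so disjointness is preserved; and the containment $g\subseteq\Pts(F)$ propagates through merges directly from the inductive clause of the definition of purity. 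Together these yield the uniqueness fact I really need: each cluster is pure w.r.t. at most one current family, since $g\subseteq\Pts(F)\cap\Pts(F')$ for two distinct current families would force $g=\emptyset$.

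Uniqueness makes the four cases well defined and mutually exclusive, because each of $g,g'$ is either non-pure or pure w.r.t. a single determined family. For the count identities I would invoke the definitional fact that $g\cup g'$ is pure w.r.t. $F$ if and only if both $g$ and $g'$ are pure w.r.t. $F$; combined with uniqueness, $g\cup g'$ is pure exactly when $g$ and $g'$ are pure w.r.t. the \emph{same} family, and non-pure otherwise. The bookkeeping is then mechanical. In Case~1 we remove two non-pure clusters and add a non-pure one, so no count moves; in Cases~2 and~3 exactly one pure cluster of $F$ (resp.\ $F'$) is consumed and replaced by a non-pure cluster, dropping that single count by one; and in Case~4, if $F\ne F'$ the union is non-pure so both counts drop by one, while if $F=F'$ two pure clusters are replaced by one pure cluster, again a net drop of one, so the stated identities hold uniformly.

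The delicate part is the final clause about line~\ref{line:additionL-1}, which I would handle with a monotonicity claim: for a fixed family $F$, once $F$ is created the sequence $t\mapsto\pure_t(F)$ never increases. This follows from the case analysis just proved, since every merge alters it by $0$ or $-1$, and fresh pure clusters are minted only at the creation of a brand-new family, never for an already-existing $F$ (the disjointness invariant guarantees that creating some $F_C$ does not touch the pure clusters of a surviving family outside $C$). Now suppose, in Case~4, that $\pure_{t-1}(F)\ge 2$ and $\pure_{t-1}(F')\ge 2$; I claim $g\notin\Ex$ and $g'\notin\Ex$, so the guard of line~\ref{line:additionL-1} fails. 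Let $s$ be the first iteration at which a pure cluster of $F$ is added to $\Ex$. This addition cannot be via line~\ref{line:test} (which exiles only initial singleton blocks, none of which is pure w.r.t.\ $F$) nor via line~\ref{line:additionL-1} (which would merge an $\Ex$-resident pure cluster of $F$, contradicting the minimality of $s$), so it occurs through line~\ref{line:addLr1} or~\ref{line:addLr2}, where by construction $\pure_s(F)=1$. Monotonicity then gives $\pure_r(F)\le 1$ for every $r\ge s$; since $\pure_{t-1}(F)\ge 2$, we must have $t-1<s$, so no pure cluster of $F$ sits in $\Ex$ at the start of iteration $t$, hence $g\notin\Ex$. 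The identical argument gives $g'\notin\Ex$, completing the clause.

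I expect the main obstacle to be exactly this $\Ex$-bookkeeping: one must pin down all three routes by which clusters enter $\Ex$ and verify that a pure cluster of a still-living family can arrive only through lines~\ref{line:addLr1}/\ref{line:addLr2}, together with checking that count-monotonicity genuinely survives the family-replacement step (line~\ref{line:FC-creation}), where purity is reassigned to $F_C$ and the absorbed families leave $G$. Everything preceding it reduces to the definitional identity that a merge is pure w.r.t.\ $F$ iff both halves are, and to the disjointness and containment invariants that make the four cases well defined.
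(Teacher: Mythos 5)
Your proposal is correct and follows essentially the same route as the paper's proof: mechanical case-by-case bookkeeping of the pure-cluster counts, followed by ruling out $g,g'\in\Ex$ by tracing the ways clusters can enter $\Ex$ (lines \ref{line:test}, \ref{line:additionL-1}, \ref{line:addLr1}, \ref{line:addLr2}). Your write-up is in fact more careful than the paper's, which leaves implicit the disjointness/uniqueness invariant and the monotonicity of $t\mapsto\pure_t(F)$ that you state explicitly; both are needed to justify the paper's terse claim that no cluster in $\Ex$ is pure w.r.t.\ $F$.
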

\begin{proof}
We argue for each of the cases of the statement:

\begin{enumerate}

\item This case holds because no pure cluster is affected when $g$ and $g'$ are merged.

\item In this case, $g$ does not count for $\pure_t(F)$,
because $g$ is merged, and $g \cup g'$ is not a pure
cluster. Thus, $\pure_t(F)=\pure_{t-1}(F)-1$.

\item The proof of this case is analogous to that of item 2.

\item 
If $F=F'$  then $g \cup g'$ is pure w.r.t. $F$.
Thus, $\pure_t(F)=\pure_{t-1}(F)-1$ because
$g$ and $g'$ counts only for $\pure_{t-1}(F)$
while $g \cup g'$ just count for $\pure_{t}(F)$.

If $F \ne F'$  then $g \cup g'$ is not pure.
Since $g$ counts for $\pure_{t-1}(F)$ but not
for $\pure_t(F)$ we have $\pure_t(F)=\pure_{t-1}(F)-1$.
By using the same reasoning we conclude that
$\pure_t(F')=\pure_{t-1}(F')-1$.

If $\pure_{t-1}(F) \ge 2$ we cannot have $g \in \Ex$
because $g$ is pure w.r.t. to $F$ and no
cluster in $\Ex$ is pure with respect to $F$.
In fact, any cluster $h \in \Ex$
is added to $\Ex$  
by either line \ref{line:addLr1} or line \ref{line:addLr2},
or $h=h' \cup (h-h')$, where $h'$ is a cluster
that was added to $\Ex$  
by either line \ref{line:addLr1} or line \ref{line:addLr2}.
Since $h'$ is not pure w.r.t. $F$ then $h$ is not pure w.r.t. $F$.
The same reasoning shows that if $\pure_{t-1}(F') \ge 2$, then $g' \notin \Ex$.
Therefore,  
$g \cup g' $ is not added to $\Ex$
by line \ref{line:additionL-1}.
\end{enumerate}
\end{proof}

\begin{proposition}
Every family is created by Algorithm \ref{alg:Families2}
containing  at least
two clusters.
Moreover, if $F_C$ is created by case (b) of Algorithm \ref{alg:Families2}
at  iteration $t$, then at the beginning of this iteration
exactly two families  that belong to $C$
 have exactly two pure clusters and 
all the other families in $C$ have at most one pure cluster.
\label{prop:FC}
\end{proposition}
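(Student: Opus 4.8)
The plan is to prove both assertions by a single induction on the iteration index $t$, carried out jointly with the induction behind Lemma \ref{lem:2pureclusters}: when I reason about iteration $t$ I will assume that the statement of Lemma \ref{lem:2pureclusters} holds at the \emph{beginning} of iteration $t$, which is precisely the inductive hypothesis available at that stage. It is convenient to call a family \emph{rich} at the beginning of iteration $t$ if $\pure_{t-1}(F)\ge 2$ and \emph{poor} otherwise. A preliminary fact that I would record first, and that follows from Proposition \ref{prop:families-evol} together with its proof, is that $\pure_t(F)$ is non-increasing in $t$ and drops by at most one per iteration; since a pure cluster of $F$ is placed in $\Ex$ (via lines \ref{line:addLr1}--\ref{line:addLr2}) only at the iteration where the pure count of $F$ falls to exactly one, it follows that whenever $\pure_t(F)\ge 2$ no cluster of $\Ex$ is pure w.r.t. $F$.

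I would treat the \emph{Moreover} claim first. Assume case (b) fires at iteration $t$ for a component $C$ with $|C|>1$; since the three cases are inspected only after the \emph{Else} branch, neither $g$ nor $g'$ is in $\Ex$, so an edge is added in line \ref{line:merge}. The points of $g$ lie in families contained in a single component present at the beginning of $t$ (the component of its family if $g$ is pure, and otherwise the component spanned by the merges that built the non-pure $g$), and likewise for $g'$; hence $C$ is the union of at most two such old components. By the inductive hypothesis of Lemma \ref{lem:2pureclusters}, a one-family old component contains one rich family and a many-family old component contains at least two, so $C$ contains at least two rich families at the beginning of $t$. On the other hand, Proposition \ref{prop:families-evol} shows that the merge lowers the pure count of at most two families, each by one, and case (b) requires $\pure_t(F)\le 1$ for every $F\in C$; therefore every family rich at the beginning of $t$ must be one of these at most two decreased families and must have had exactly two pure clusters. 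Matching the two bounds yields exactly two families of $C$ with exactly two pure clusters and every other family with at most one. Finally, lowering two distinct families at once is possible only in the fourth case of Proposition \ref{prop:families-evol}, so $g$ and $g'$ are pure clusters of these two rich families $R_1$ and $R_2$, respectively.

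Next I would show that \textbf{every family is created with at least two clusters}. For the initial families this is built in, as $F_i$ is formed only when $|T^*_i|>1$. For $F_C$ created in case (a), the unique family $F\in C$ with $\pure_t(F)>1$ provides at least two distinct pure clusters of $\C^t$; by the preliminary fact these are not in $\Ex$, and they sit inside $\bigcup_{H\in C}\Pts(H)$, so they pass into $F_C$ and give $|F_C|\ge 2$. For $F_C$ created in case (b) I would use the \emph{Moreover} claim: with $R_1,R_2$ the two rich families, say line \ref{line:addLr2} moves the surviving pure cluster of $R_1$ into $\Ex$. Then the surviving pure cluster $c$ of $R_2$ is left untouched (in case (b) only one family is processed) and was not in $\Ex$ before (since $R_2$ was rich), while $g\cup g'$ is non-pure and, because $\pure_{t-1}(R_1),\pure_{t-1}(R_2)\ge 2$, is not added to $\Ex$ by line \ref{line:additionL-1}. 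As $c$ and $g\cup g'$ are distinct and both lie in $\bigl(\bigcup_{H\in C}\Pts(H)\bigr)\setminus\Ex$, again $|F_C|\ge 2$.

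The step I expect to be the most delicate is the accounting around $\Ex$: I must certify that the particular clusters I exhibit inside $F_C$ are genuinely not excluded, which is exactly where the monotonicity of $\pure_t(F)$ and the fact that case (b) excludes only one family (line \ref{line:addLr2}) are used. A second point that needs care is the apparent circularity with Lemma \ref{lem:2pureclusters}; this is resolved by regarding both results as parts of one induction on $t$, so that the ``at least two rich families'' bound is borrowed legitimately from the inductive hypothesis and not from the lemma's conclusion.
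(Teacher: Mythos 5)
Your proof is correct in substance, and its endgame coincides with the paper's: in case (a) the unique family $F$ with $\pure_t(F)>1$ contributes two pure clusters to $F_C$; in case (b) you identify exactly two families with exactly two pure clusters each, observe that $g$ and $g'$ must be pure w.r.t.\ these two families, and then place $g\cup g'$ together with the surviving pure cluster of the family not touched by line \ref{line:addLr2} inside $F_C$. The genuine difference is how the key counting step (``exactly two families of $C$ with exactly two pure clusters'') is obtained. The paper proves Proposition \ref{prop:FC} by a \emph{self-contained} induction: assuming only the proposition itself at iteration $t-1$, it rules out a component with fewer such families by arguing that it (or each old component composing it) would already have triggered case (a), (b) or (c) at iteration $t-1$ and been removed from $G$, using the inductive hypothesis to handle components that are newly created families. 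You instead run a joint induction with Lemma \ref{lem:2pureclusters}, importing its guarantee at the beginning of iteration $t$ and matching that lower bound (at least two families with at least two pure clusters) against the upper bound from Proposition \ref{prop:families-evol} (at most two families lose a pure cluster per merge, each by one). Your route makes the counting cleaner and more transparent; the paper's route keeps Proposition \ref{prop:FC} unconditional, which is exactly what lets Proposition \ref{prop:2pureclusters-p1} cite it later with no bookkeeping. Your framework is non-circular only because the lemma's own inductive step invokes Proposition \ref{prop:FC} at strictly earlier iterations, so the mutual recursion is well-founded; you assert this compatibility but do not verify it, and a complete write-up would have to carry out the lemma's step inside the same induction.

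Two local flaws, neither fatal. First, you misread the control flow: cases (a)--(c) are examined at \emph{every} iteration, not only in the \textbf{Else} branch, so ``case (b) fires'' does not by itself give $g,g'\notin\Ex$. This is harmless because your own argument re-derives it: only item 4 of Proposition \ref{prop:families-evol} lowers two pure counts at once, so when (b) fires, $g$ and $g'$ are pure w.r.t.\ two families each having two pure clusters at the beginning of $t$, and your preliminary fact then forces $g,g'\notin\Ex$; moreover, if $g$ or $g'$ were in $\Ex$, no edge is created, $C$ is a single old component with at least two such families, at most one of which can lose a pure cluster, so (b) could not fire. Second, the claim that the points of a non-pure cluster outside $\Ex$ lie inside $\bigcup_{H\in C}\Pts(H)$ for a single component $C$ is precisely part (iii) of Proposition \ref{prop:clusters-structure}, which the paper establishes by a separate induction; your parenthetical justification glosses over this and should be promoted to a proved claim.
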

\begin{proof}
All the families created at line \ref{line:initial-families}  
have at least two pure clusters.
We use induction on the number of iterations.

At the beginning of iteration $1$ all connected
components of $G$ have only one family.
Let $g$ and $g'$ be the two clusters merged when $t=1$.
If a family $F_C$ is created (line \ref{line:FC-creation}) at this iteration,  then
the merging of $g$ and $g'$ must produce a connected component with two families.
Thus, we conclude that  
 $g$ is pure with respect to a family $F$ and 
$g'$ is pure w.r.t. to a family $F'$, with $F' \ne F$.
Assume w.l.o.g. that $|F'| \ge |F|$.
If  (a) occurs we must have $|F|=2$ and $|F'|>2$ and
if (b) occurs we must have $|F|=2$ and $|F'|=2$.
Furthermore, If (a) occurs $F_C$ will have at least two clusters, those
in $F' \setminus \{g'\}$. If (b) occurs $F_C$ will also have at least two clusters, 
$g \cup g'$ and the pure cluster in $(F \cup F') \setminus \{g,g'\}$
that is not added to $\Ex$ by line \ref{line:addLr2}.  
Thus, the result holds at the first iteration.

Let $t>1$. We assume by induction that 
the result holds for iteration $t-1$.
We analyze iteration $t$.
We split the proof into two cases:

Case 1) $F_C$ is created due to case (a) at iteration $t$. 

The definition of the case (a) assures that  there 
is a family $F$ in the connected component $C$ with $\pure_t(F)\ge 2$.
Thus, the
 pure clusters w.r.t. $F$ are added to $F_C$, so
 that $F_C$ is created with at least two clusters.

\medskip

Case 2) $F_C$ is created due to case (b) at iteration $t$. 

The definition of case (b) assures that  $|C| \ge 2$ and all families in $C$ have at most one pure cluster
after the merge of iteration $t$.
Moreover, Proposition \ref{prop:families-evol} assures that at most
two families have their number of pure clusters decreased when two clusters
are merged. Thus,  at least $|C|-2$ families that lie in $C$
 have 0 or 1  pure cluster at the beginning of iteration $t$,
otherwise, case (b) cannot occur.

We argue that 
  exactly two families that lie in $C$ have at least 2 pure clusters at the beginning
  of iteration $t$.
For the sake of a contradiction, assume that either 0 or 1 family that lies in $C$ has
 at least 2 pure clusters at the beginning of iteration $t$.
We consider two scenarios:
\begin{itemize}
\item The component $C$ does exist at the beginning of $t$,
that is, it was not produced by the union of two components at iteration $t$.
In this scenario, $C$ would satisfy either case (a) or case (b) at iteration $t-1$,
and $C$ would be removed from $G$.
Thus, this scenario cannot occur.

\item The component $C$ does not exist at the beginning of $t$.
In this scenario,  $C$ is the union of two connected components
of $G$ at the beginning of iteration $t$, that is, $C =  C' \cup C''$.
Let us assume $|C'| \ge |C''|$. If $|C'| \ge 2$,
then $|C'|$ would satisfy  either case (a) or case (b) at iteration $t-1$
and, as a consequence, would be removed from  $G$; this is not possible.
If $|C'|=|C''|=1$, 
then by our assumption one of these components, say $C'$,
has a family with at most one pure cluster at the beginning of $t$.
Then 
by induction, $C'$ does not correspond to a family created
at iteration $t-1$.
Thus, $C'$  satisfies 
 case (c) at iteration $t-1$,
so that $C'$  would be removed from $G$.
Thus, this scenario cannot occur as well.
\end{itemize}

Let $H$ and $H'$ be the families in $C$ that have at least 2 pure clusters
at the beginning of iteration $t$.
If one of them, say $H$, has more than 2 pure clusters, then $C$ does not satisfy case (b) because we would have $\pure_t(H) \ge 2$.
Thus, $\pure_{t-1}(H)=\pure_{t-1}(H') =2$ and
$\pure_{t-1}(H'')<2$ for every family $H''$ in $C \setminus \{H,H'\}$.

It remains to show that $F_C$ is created with at least two clusters.
Since (b) occurs, we must have that $g$ is pure w.r.t $H$ and $g'$ is pure
w.r.t. $H'$. Moreover, item 4 of Proposition \ref{prop:families-evol}
guarantees that $g \cup g' \notin \Ex$. Thus, $g \cup g'$ is added to $F_C$.
Moreover, either $H$ or $H'$ will have a pure cluster that is not
added to $\Ex$ by line \ref{line:addLr2} and, thus, this cluster will be added to $F_C$.
  \end{proof}

\begin{proposition}
Let $C$ be the connected component
associated with family $F_C$.
If $F_C$ is created by case (a) of Algorithm \ref{alg:Families2},
then $|C|-1$ families in $C$ have a pure cluster
added to $\Ex$ by line \ref{line:addLr1}.  
If $F_C$ is created by case (b) of Algorithm \ref{alg:Families2},
then $|C|-2$ families in $C$ have a pure cluster
added to $\Ex$ by  line \ref{line:addLr1} and  one family
has  a pure cluster
added to $\Ex$ by  line \ref{line:addLr2}.
In both cases, $C$ has exactly one family that adds no cluster to $\Ex$. 
\label{prop:LS-addition}
\end{proposition}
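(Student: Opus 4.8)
The plan is to track, for each family that is a node of $C$ at the moment $F_C$ is created, the unique iteration at which its number of pure clusters drops from $2$ to $1$, and to show that this is precisely the moment a pure cluster of that family is placed into $\Ex$. First I would record two structural facts used throughout. By Proposition \ref{prop:FC} every family is created with at least two pure clusters, and by Proposition \ref{prop:families-evol} the quantity $\pure_{\bullet}(H)$ is non-increasing in time and decreases by at most one unit per iteration (and only for the one or two families whose pure clusters participate in the current merge). Consequently, any family $H$ whose count ever reaches a value $\le 1$ must pass through the transition $\pure_{t'-1}(H)=2,\ \pure_{t'}(H)=1$ at exactly one iteration $t'$.

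Then I would establish the central claim: if $H$ is a node of $C$ at the creation iteration $t$ and its $2\to1$ transition occurs at some $t'<t$, then at iteration $t'$ a pure cluster of $H$ is added to $\Ex$ by line \ref{line:addLr1}. The key point is that case (b) cannot occur at $t'$. Indeed, by Lemma \ref{lem:2pureclusters} every connected component at the beginning of $t'$ contains a family with at least two pure clusters (two such families when it has more than one family); since a merge lowers the pure count of at most two families, all lying in the single component affected by the merge, every other component keeps a family at $\ge 2$ pure clusters, so only the affected component could trigger case (b). But $H$ lies in the affected component at $t'$, and if case (b) held there, a new family would be created at $t'$ and $H$ would be absorbed, contradicting that $H$ is still a separate node of $C$ at $t>t'$. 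Hence case (b) fails at $t'$, block \ref{line:block1} runs, and since $\pure_{t'-1}(H)>1$ and $\pure_{t'}(H)=1$ the unique remaining pure cluster of $H$ is added to $\Ex$.

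Next I would dispose of the creation iteration $t$ itself, splitting on the triggering case. In case (a), exactly one family $F$ of $C$ has $\pure_t(F)>1$; its count never reached $1$, so no pure cluster of $F$ was ever added to $\Ex$, whereas each of the other $|C|-1$ families reached $\pure\le1$ at some $t'\le t$ and thus had a pure cluster added by line \ref{line:addLr1} (transitions before $t$ by the central claim, any transition at $t$ by block \ref{line:block1}, which runs since (a) and (b) are exclusive). In case (b), Proposition \ref{prop:FC} guarantees that exactly two families $H,H'$ of $C$ hold two pure clusters at the start of $t$ and all others hold at most one, so those $|C|-2$ families had a pure cluster added earlier by line \ref{line:addLr1}. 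Since case (b) forces every family of $C$ to be at $\le 1$ after the merge, both $H$ and $H'$ must drop to $1$, and by item~4 of Proposition \ref{prop:families-evol} the merge joins a pure cluster of $H$ to a pure cluster of $H'$; line \ref{line:addLr2} then adds the surviving pure cluster of exactly one of them, say $H$, while $H'$ receives nothing.

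Collecting the contributions yields the statement: in case (a) the $|C|-1$ families contribute through line \ref{line:addLr1} and only $F$ contributes nothing; in case (b), $|C|-2$ contribute through line \ref{line:addLr1}, one through line \ref{line:addLr2}, and $H'$ is the unique non-contributor. I expect the main obstacle to be the central claim, specifically ruling out that case (b) is triggered by some \emph{other} component at an intermediate iteration; this rests on combining the invariant of Lemma \ref{lem:2pureclusters} with the "at most two families change" behaviour of Proposition \ref{prop:families-evol} and the fact that a node surviving in $C$ until iteration $t$ was never absorbed in the interim.
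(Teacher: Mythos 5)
Your proof is correct and takes essentially the same route as the paper's: both arguments rest on Proposition \ref{prop:FC} (every family is born with at least two pure clusters) and Proposition \ref{prop:families-evol} (pure counts drop by at most one unit per merge), and both split on whether $F_C$ arises from case (a) or case (b). Your ``central claim'' --- that case (b) cannot fire at the iteration where a surviving family's pure count drops to one, so that line \ref{line:addLr1} really does execute --- is precisely the justification the paper compresses into the phrase that such families ``reach line \ref{line:addLr1}''; spelling it out is a refinement of, not a departure from, the paper's argument.
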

\begin{proof}

By Proposition \ref{prop:FC}, every family is created with at least two pure clusters.
By Proposition \ref{prop:families-evol},
at each iteration
the number of pure clusters in a family
either remains the same or decreases by one unit.

If $F_C$ is created due to case (a), then exactly $|C|-1$
families in $C$  have at most one pure cluster.
Thus, all of them reach line \ref{line:addLr1} and the result holds.

If $F_C$ is created due to the case (b), then 
Proposition \ref{prop:FC} guarantees
that at the beginning of the iteration in which $F_C$ is created, $C$ has
 exactly two families, say $H$ and $H'$, with
exactly two pure clusters and
all others
with at most one pure cluster.
Therefore, 
every family in $C- \{H,H'\}$  reaches line \ref{line:addLr1} and,
thus, add a cluster to $\Ex$.
Furthermore, exactly one family in $\{H,H'\}$ adds a cluster to $\Ex$ in line
\ref{line:addLr2}.
\end{proof}

\begin{proposition} 
The total  number of clusters added to $\Ex$ by lines \ref{line:addLr1} and \ref{line:addLr2} 
of  Algorithm \ref{alg:Families2} is at most  $k$.
\label{prop:L-Bound}
\end{proposition}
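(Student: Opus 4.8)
The plan is to charge every cluster added to $\Ex$ through lines \ref{line:addLr1} and \ref{line:addLr2} to the family whose pure cluster is being added, to argue that no family is charged more than once, and then to bound the number of charged families by combining Proposition \ref{prop:LS-addition} with an elementary counting identity on the forest $D$.

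First I would note that each execution of line \ref{line:addLr1} or \ref{line:addLr2} adds the pure cluster of some family $H$ satisfying $\pure_{t-1}(H)>1$ and $\pure_t(H)=1$. By Proposition \ref{prop:families-evol} the value $\pure_t(H)$ never increases as $t$ grows, so after it first drops to $1$ the triggering condition $\pure_{t-1}(H)>1$ can never hold again. Consequently each family $H$ is charged at most once, and if we let $a(H)\in\{0,1\}$ indicate whether $H$ is ever charged, then the quantity we must bound is $\sum_H a(H)$, the sum taken over all families ever created by Algorithm \ref{alg:Families2}.

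Next I would read off the structure of the forest $D$. Its leaves are precisely the initial families $F_i$, of which there are at most $k$, since these are the only families that are never turned into a parent; its internal nodes are exactly the families $F_C$ produced in line \ref{line:FC-creation}, and by line \ref{line:parent} the children of $F_C$ are exactly the $|C|\ge 2$ families of the component $C$. In particular the children sets of the internal nodes form a partition of the non-root families. The key input here is Proposition \ref{prop:LS-addition}, which states that among the $|C|$ families of $C$ exactly one is never charged and the other $|C|-1$ are each charged once; equivalently $\sum_{H\in C} a(H)=|C|-1$.

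Finally I would assemble the count. Write $N$ for the total number of families, $L\le k$ for the number of leaves, $I$ for the number of internal nodes, $M$ for the number of non-root families, and $R$ for the number of roots, so that $N=L+I=M+R$. Summing the identity of Proposition \ref{prop:LS-addition} over all internal nodes and using that their children sets partition the non-root families gives $\sum_{H \text{ non-root}} a(H)=\sum_{F_C}(|C|-1)=M-I$, while $\sum_{H \text{ root}} a(H)\le R$ trivially. Adding the two and substituting $M+R=L+I$ yields $\sum_H a(H)\le (M-I)+R=L\le k$, as required. The only genuinely delicate point is the first step---establishing that no family is charged twice---which rests on the monotonicity of $\pure_t$ from Proposition \ref{prop:families-evol}; once Proposition \ref{prop:LS-addition} supplies the ``exactly one uncharged family per component'' statement, the remainder is the forest identity $M-I+R=L$.
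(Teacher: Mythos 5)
Your proof is correct and follows essentially the same route as the paper's: both charge each addition to $\Ex$ to the family whose pure cluster is added, invoke Proposition \ref{prop:LS-addition} to conclude that the children of each internal node of $D$ contribute $|children(v)|-1$ charged families while each root is charged at most once, and then apply the forest counting identity to bound the total by the number of leaves of $D$, which is at most $k$. The only cosmetic differences are that the paper carries out the count tree by tree over $D_1,\ldots,D_p$ rather than globally, and is terser about the fact that no family is charged twice, which you justify explicitly via the monotonicity of $\pure_t$ from Proposition \ref{prop:families-evol}.
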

\begin{proof}
Let $m_{>1}=|\{T^*_i| T^*_i \mbox{ has at least 2 points}\}|$ and
$m_{1}=|\{T^*_i| T^*_i  \mbox{ has exactly 1 point} \}|$.
Initially,    $m_1$ clusters are added to $\Ex$ (line \ref{line:test}).

Let $D_1,D_2, \ldots, D_p$ be the trees of the forest  $D$
at the end of the Algorithm \ref{alg:Families2}.
Moreover, let $int(D_i)$ and $leaves(D_i)$ be, respectively, the
set of internal nodes and leaves of $D_i$.
Fix an internal node $v$ in $D_i$.
Note that $v$ corresponds to a family $F_C$ for some connected
component $C$ of $G$ that has at least two families (line \ref{line:parent}). Each child of $v$ corresponds to a family in $C$ and 
 by Proposition \ref{prop:LS-addition} all of them but one add a cluster to $\Ex$.
 Hence,  we can associate
to  $v $, $|children(v)|-1$ clusters
that are added to $\Ex$. 
Thus, the number of 
 clusters added to $\Ex$ is at most 
\begin{align*}
p+\sum_{i=1}^p \left( \sum_{v \in int(D_i)} (|children(v)|-1) \right)= 
p+\sum_{i=1}^p \left( - |int(D_i)| + \sum_{v \in int(D_i)} |children(v)| \right) =\\
p+\sum_{i=1}^p \left(- |int(D_i)|+|D_i|-1 \right) =
\sum_{i=1}^p leaves(D_i)= |leaves(D)|=m_{>1},
\end{align*}
where the term $p$  is due to the roots of the trees $D_1,\ldots,D_p$
since they can also add pure clusters to $\Ex$.

Therefore, the total number of clusters in $\Ex$ never
exceeds $m_{>1}+m_{1}=k$.
\end{proof}

The next proposition characterizes the clusters
created by \complink.
 
\begin{proposition} 
At the beginning of iteration $t$ of Algorithm \ref{alg:Families2},
 each cluster $h \in \C^{t-1}$ satisfies exactly one of the following
possibilities:
\begin{itemize}
\item[(i)] $h 
\in   
\Ex$;
\item[(ii)] $h \notin \Ex$ and $h$ is pure  w.r.t a family in $G$;
\item[(iii)] $h \notin \Ex$, $h$ is non-pure 
and there is a component $C$ in $G$
such that $h \subseteq  \bigcup_{H \in C} \Pts(H)$. 
\end{itemize}
\label{prop:clusters-structure}
\end{proposition}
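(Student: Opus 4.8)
The plan is to establish the statement as an invariant of Algorithm~\ref{alg:Families2}, proved by induction on $t$. First I would observe that ``exactly one'' reduces to ``at least one'': the three possibilities are pairwise exclusive, since (i) requires $h\in\Ex$ while (ii) and (iii) require $h\notin\Ex$, and (ii) requires $h$ to be pure w.r.t.\ some family while (iii) requires $h$ to be non-pure, i.e.\ pure w.r.t.\ no family at all. For the base case $t=1$, the clustering $\C^{0}$ is made of singletons: each singleton of a one-point $T^*_i$ is placed in $\Ex$ at line~\ref{line:test} and satisfies (i), while each singleton $\{x\}$ with $x\in T^*_i$ and $|T^*_i|>1$ is a pure cluster of the family $F_i$, a node of $G$ by line~\ref{line:end-init}, and satisfies (ii).

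Before the inductive step I would strengthen the hypothesis with one auxiliary invariant that turns out to be exactly what the argument needs: \emph{every singleton component $\{F\}$ of $G$ contains no non-pure current cluster inside $\Pts(F)$} (equivalently, the non-excluded current clusters contained in $\Pts(F)$ are precisely the pure clusters of $F$). This is easily maintained: a non-pure cluster is only ever produced by a merge of clusters pure w.r.t.\ two \emph{different} families (or involving an already non-pure cluster), which inserts an edge at line~\ref{line:merge} and hence lives in a component of size at least two; and whenever a family $F_C$ is created (line~\ref{line:FC-creation}) the resulting singleton component $\{F_C\}$ absorbs every non-excluded cluster contained in $\bigcup_{H\in C}\Pts(H)$ into $F_C$, making them pure w.r.t.\ $F_C$. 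I would also use the fact, established inside the proof of Proposition~\ref{prop:families-evol}, that no cluster of $\Ex$ is pure w.r.t.\ a family currently in $G$; thus a merged cluster taken from $\Ex$ counts as non-pure in Proposition~\ref{prop:families-evol}, at most two families of $G$ have their $\pure$-count decreased, and, since by induction each of $g,g'$ meets the families of a single component, line~\ref{line:merge} merges at most two components into one. Together with Lemma~\ref{lem:2pureclusters} this shows that at most one component of $G$ is affected by the iteration; every other component keeps its families, its $\pure$-counts and the conditions of Lemma~\ref{lem:2pureclusters}, so its clusters retain their classification verbatim.

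For the inductive step it then remains to classify the new cluster $g\cup g'$ and to re-classify the clusters of the affected component. If $g$ or $g'$ lies in $\Ex$ (line~\ref{line:additionL-1}) then $g\cup g'$ is added to $\Ex$ and satisfies (i); otherwise $g$ and $g'$ satisfy (ii) or (iii) by induction, and I would check that $g\cup g'$ is either pure w.r.t.\ a common family (when $g,g'$ are pure w.r.t.\ the same $F$), hence (ii), or non-pure with all its points inside $\bigcup_{H\in C}\Pts(H)$ for the component $C$ enlarged by line~\ref{line:merge}, hence (iii) --- unless $C$ is immediately replaced by $F_C$, in which case $g\cup g'$ joins $F_C$ and becomes (ii). For the re-classification I would run through the exclusive cases: in cases (a) and (b) the family $F_C$ is defined (line~\ref{line:FC-creation}) to be the set of all non-excluded current clusters contained in $\bigcup_{H\in C}\Pts(H)$, so every surviving cluster of the old component either remains in $\Ex$ (satisfying (i)) or becomes pure w.r.t.\ the new node $F_C$ (satisfying (ii)), and nothing is orphaned; if no case occurs, only edges are added and every cluster keeps its classification, with a possibly larger component in (iii).

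The step I expect to be the main obstacle is case (c), where a singleton component $\{F\}$ is removed from $G$ without any replacement family. Here I must ensure that nothing is orphaned. The auxiliary invariant guarantees there is no non-pure cluster inside $\Pts(F)$, so only pure clusters of $F$ are at risk, and since $\pure_t(F)\le 1$ there is at most one such current cluster. To send it to $\Ex$ I would argue that $F$, being isolated after line~\ref{line:merge}, was already isolated beforehand, so Lemma~\ref{lem:2pureclusters} forces $\pure_{t-1}(F)\ge 2$; as case (c) excludes case (b), the block at line~\ref{line:block1} executes, and because $F$ satisfies $\pure_{t-1}(F)>1$ and $\pure_t(F)=1$ its lone surviving pure cluster is added to $\Ex$ at line~\ref{line:addLr1} and thus satisfies (i). The whole argument hinges on coupling this triggering condition with the auxiliary invariant; once both are in place the remaining cases are routine.
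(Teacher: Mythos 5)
Your proof is correct and follows essentially the same induction as the paper's: classify the new cluster $g\cup g'$ right after the merge, then track how the classification of every cluster survives the case-(a)/(b)/(c) processing. In fact you are more careful than the paper in the two places where its proof is terse: the paper simply asserts that a cluster satisfying (iii) lies in a component with $|C|\ge 2$ (your auxiliary invariant about singleton components is exactly the missing justification), and it asserts that in case (c) the lone pure cluster ``is added to $\Ex$'' without explaining why line \ref{line:addLr1} fires (your argument that an isolated family had $\pure_{t-1}(F)\ge 2$, so the count dropped from $>1$ to $1$, supplies this).

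One caution about dependencies. You invoke Lemma \ref{lem:2pureclusters}, but in the paper that lemma is obtained by combining Proposition \ref{prop:2pureclusters-p1} with Proposition \ref{prop:2pureclusters-p2}, and the proof of the latter uses Proposition \ref{prop:clusters-structure} --- the very statement you are proving. So citing the lemma wholesale would be circular within the paper's architecture. The fix is purely a matter of citation: the fact you actually need (every component of $G$ at the beginning of an iteration either is a singleton whose family has at least two pure clusters, or has two such families) is exactly Proposition \ref{prop:2pureclusters-p1}, whose proof relies only on Proposition \ref{prop:FC} and the case definitions, not on Proposition \ref{prop:clusters-structure}. Either cite that proposition instead of the lemma, or fold the component invariant into your induction hypothesis (a joint induction on $t$), which your strengthened-invariant setup already accommodates naturally.
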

\begin{proof}
At the beginning of Algorithm \ref{alg:Families2},  each cluster
$h$ satisfies (i) or (ii).
We assume by induction that this property
holds at the beginning of iteration $t$
and prove that it also holds 
at the beginning of iteration $t+1$.

We first argue that right after the merge of iteration $t$
every cluster in $\C^{t}$ satisfies one of the desired conditions  and,
then, we argue that these clusters still satisfy the desired  conditions
by the end of iteration $t$ or, equivalently, at the beginning of iteration
$t+1$.

Let $h$ be a cluster in $\C^{t}$.
If $h$ also belongs to 
to $\C^{t-1}$ then, by induction,
$h$ satisfies one of the conditions at the beginning of iteration $t$.
If $h$ satisfies  (i)  (resp.  (ii)), then
it also satisfies (i) (resp. (ii))  right after the merge because
the assumption that $h \in \C^t$ guarantees that $h$ is not merged at iteration
$t$. If $h$ satisfies (iii), then $h \subseteq
\bigcup_{H \in C} \Pts(H)$, for some connected
component of $G$. Hence,
$h$  satisfies the condition 
$h \subseteq \bigcup_{H \in C^{new}} \Pts(H)$,
after the merge, where $C^{new}$ is the component in $G$ that contains
the families of $C$ after the merge. 

If $h$ does not belong to   $\C^{t-1}$,
then  $h = g \cup g'$, where $g$ and $g'$ are the clusters
merged at iteration $t$. 
If $g \in \Ex$ or $g' \in \Ex$ then $h \in \Ex$, 
so it satisfies (i) after the merge.
If neither $g \in \Ex$ nor $g' \in \Ex$ then 
$h$ is not added to  $\Ex$. Moreover, 
if both $g$ and $g'$ are pure with respect to
the same family $H$, then $h$ is also pure w.r.t. $H$, so it satisfies
(ii).  Otherwise, $h$ is not pure and it satisfies (iii) when it is created.

We have just proved that every cluster in $\C^{t}$ satisfies
 one of the conditions of the proposition right after the merge.
 Now we show that each cluster 
 in $\C^{t}$ satisfies
 one of the conditions at the end of the iteration $t$.

Let $h \in \C^t$. We have the following cases:
\begin{itemize}
    \item $h$ satisfies (i) after the merge. Then,
it will satisfy (i) at the beginning of iteration $t+1$.

\item $h$ satisfies (ii) after the merge. 
Then,  $h$ is pure w.r.t. some family $H$. 
Let $C$ be the component where the family $H$ lies. 
If $C$  meets the conditions of cases (a) or  (b) then
$h$  will satisfy (ii)  at the beginning of iteration $t+1$.
because $h$ will be pure with respect to the new family $F_C$.
If $C$ meets the conditions of case (c) then $h$ is added
to $\Ex$, so it satisfies (i) at the beginning of iteration $t+1$.
If $C$ does not meet any of the cases, then
$h$  will satisfy either (i) or  (ii)  at the beginning of iteration $t+1$.

\item $h$ satisfies (iii) after the merge. 
Let $C$ be a component of $G$, right after the merge,  such that $h \subseteq
\bigcup_{H \in C} \Pts(H)$.
Note the $|C| \ge 2$ and, thus $C$ cannot meet case (c).
If $C$  meets either  case (a) or  (b) then
$h$  will satisfy (ii)  at the beginning of iteration $t+1$
because $h$ will be pure with respect to the new family $F_C$.
If  $C$  does not  meet (a) or (b), then
 $h$  will satisfy (iii)  at the beginning of iteration $t+1$.
\end{itemize}
\end{proof}

Now we
state and prove two propositions that, together, 
directly imply the correctness of Lemma  \ref{lem:2pureclusters}.

\begin{proposition}
Every connected component $C$ in $G$ satisfies one of the following conditions: 
(i) $|C|=1$ and the only family of $C$ has at least
two pure clusters  or (ii) $|C|>1$ and there exist  two  families in $C$
such that
each of them has at least two pure clusters.
\label{prop:2pureclusters-p1}
\end{proposition}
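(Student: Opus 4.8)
The plan is to prove the statement as an invariant maintained at the beginning of every iteration $t$, by induction on $t$. For the base case, I would note that right after initialization the graph $G$ has no edges, so each connected component is a single initial family $F_i$ with $|T^*_i|\ge 2$; all of its clusters are the singletons of the points of $T^*_i$ and are therefore pure, so the component has at least two pure clusters and satisfies~(i).

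For the inductive step, assume every component satisfies (i) or (ii) at the beginning of iteration $t$, and let $g,g'$ be the clusters merged. The first key step is to localize the effect of the iteration to a single component. By Proposition \ref{prop:families-evol}, at most two families (call them $F$ and $F'$) have their pure count decreased, and by Proposition \ref{prop:clusters-structure} the families containing a point of $g$ (respectively of $g'$) all lie inside one pre-merge component; hence the edges created in this iteration merge at most two pre-existing components into one, which I will call $C^*$. Both $F$ and $F'$ lie in $C^*$, and every other component keeps its families, edges and pure counts unchanged, so it still satisfies the invariant by the inductive hypothesis. Thus it suffices to verify the invariant for $C^*$ and its possible replacement.

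The second key step is a short case analysis on $C^*$ driven only by its size and the number $r$ of its families that still have at least two pure clusters, measured by $\pure_t$. If $|C^*|>1$ and $r\ge 2$, then $C^*$ satisfies (ii) and, by the definitions of the cases, none of (a), (b), (c) is triggered, so $C^*$ is left untouched; if $|C^*|=1$ and $r=1$, then $C^*$ satisfies (i) and again no case fires. In the remaining situations the invariant is temporarily violated, but exactly one case applies: $|C^*|>1$ with $r=1$ is case (a), $|C^*|>1$ with $r=0$ is case (b), and $|C^*|=1$ with $r=0$ is case (c). In case (c) the offending component is simply removed from $G$; in cases (a) and (b) it is replaced by the single new node $F_C$, which by Proposition \ref{prop:FC} is created with at least two pure clusters, so the new singleton component satisfies (i). I would also observe, for consistency, that whenever a merge connects two previously separate valid components \emph{without} decreasing any pure count, each contributed at least one family with two pure clusters, so $r\ge 2$ and (ii) holds automatically.

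The step I expect to be the main obstacle is checking that the additions to $\Ex$ performed in lines \ref{line:addLr1} and \ref{line:addLr2} do not silently break the invariant for $C^*$ when no new family is created. Here the point to establish is that these lines only move the last remaining pure cluster of a family whose count just dropped from strictly more than one to exactly one; such a family was already not counted in $r$, so zeroing it out leaves $r$ unchanged and cannot turn a valid $C^*$ into an invalid one. Since Proposition \ref{prop:families-evol} confines all count changes to $F$ and $F'$ inside $C^*$, no family outside $C^*$ is touched by these lines either. Combining this with the localization argument shows that every component satisfies (i) or (ii) at the beginning of iteration $t+1$, which completes the induction.
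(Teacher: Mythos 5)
Your overall strategy --- forward induction on iterations, with the algorithm's cases (a)/(b)/(c) repairing whichever component the merge breaks, and Proposition~\ref{prop:FC} certifying that the replacement family $F_C$ starts with two pure clusters --- is the same mechanism the paper uses; the paper merely packages it as a first-failing-iteration contradiction instead of a forward induction. However, one step of your argument fails as stated: the localization claim. You assert that the (at most two) families whose pure counts decrease both lie in the single merged component $C^*$, and that ``every other component keeps its families, edges and pure counts unchanged.'' This is only true in the branch of Algorithm~\ref{alg:Families2} where $g,g'\notin\Ex$, so that line~\ref{line:merge} actually creates edges. If $g\in\Ex$ (or $g'\in\Ex$), the algorithm executes line~\ref{line:additionL-1} instead and creates \emph{no} edges --- yet pure counts can still drop, because a cluster parked in $\Ex$ by line~\ref{line:addLr1} or~\ref{line:addLr2} at an earlier iteration is still pure w.r.t.\ its family. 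In that branch the two decremented families need not lie in one component and no components are merged at all, so the object $C^*$ you reason about does not capture all the affected families, and the inductive hypothesis alone does not cover the stray component.

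The repair is short and uses a fact you essentially state already, just aimed at the wrong place: if a cluster lies in $\Ex$ and is pure w.r.t.\ a family $H$, it can only have entered $\Ex$ (directly, or as a superset of such a cluster) via lines~\ref{line:addLr1}/\ref{line:addLr2}, at which moment $\pure(H)=1$; since pure counts never increase (Proposition~\ref{prop:families-evol}), $H$ has at most one pure cluster ever after. Hence, in the $\Ex$ branch, the decrement caused by the $\Ex$-cluster can never flip a family across the ``at least two pure clusters'' threshold, so the only component that can become invalid is the one containing the family of the non-$\Ex$ cluster, and your case analysis on $(|C|,r)$ applies verbatim to it. (Your separate worry about lines~\ref{line:addLr1}/\ref{line:addLr2} ``zeroing out'' a count is a non-issue under the paper's definition: $\pure_t(F)$ counts pure clusters in $\C^t$ regardless of $\Ex$-membership, so those lines change nothing.) It is also worth noting that the paper's proof avoids the localization question entirely: conditions (a), (b), (c) are exactly the negations of (i)/(ii), so any component that survives the case-handling satisfies the invariant by definition, and a newly created component satisfies (i) by Proposition~\ref{prop:FC}; no bookkeeping about which components a merge touches is needed.
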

\begin{proof}
For the sake of reaching  a contradiction, let 
$t$ be the first iteration for which there is a component 
 $C$ in $G$ that does not satisfy the conditions
of the lemma at the beginning of iteration $t$.
Note that $t \ge 2$.

If $|C|=1$ and the only family $F$ in $C$ does not have
at least 2 pure clusters, then $C$ cannot be  the component associated with the family $F_C$ 
created by either case (a) or (b) at
iteration $t-1$ because Proposition \ref{prop:FC}
guarantees that the component where $F_C$ lies satisfies condition (i).
Thus, $C$  satisfies the condition of case (c) at iteration
$t-1$ and then it  is removed from 
$G$ at iteration $t-1$, which contradicts its existence at the beginning of iteration $t$

If both $|C| \ge 2$ and $C$ has only one family with
at least 2 pure clusters, then $C$
would satisfy case (a) at iteration $t-1$ and it would be removed
from $G$,  which contradicts its existence at the beginning of iteration $t$.
Similarly, 
if $|C| \ge 2$ and it has no family with
at least 2 pure clusters then $C$
would satisfy case (b) at iteration $t-1$ and it would be removed
from $G$,  which again contradicts its existence at the beginning of iteration $t$

We have established the proposition.
\end{proof}

\begin{proposition}
If all the nodes/families of $G$ are
removed at iteration $t'$ then $t'=n-k$
\label{prop:2pureclusters-p2}
\end{proposition}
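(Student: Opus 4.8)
The plan is to reduce everything to the single inequality $|\C^{t'-1}| \le k+1$. Since the main loop runs only for $t \le n-k$ and $|\C^{t-1}| = n-t+1$ strictly decreases with $t$, we have $|\C^{t-1}| \ge k+1$ at every iteration; thus $|\C^{t'-1}| \le k+1$ would force $|\C^{t'-1}| = k+1$, i.e. $t' = n-k$, which is exactly what we want.

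First I would pin down the state of $G$ at the beginning of $t'$. Cases (a) and (b) replace a component by a single new node $F_C$, and case (c) is the only operation that deletes a component without replacement, so $G$ can become empty only through case (c) acting on a component with $|C|=1$. Moreover, the merge at the start of an iteration can only fuse components (never split them), so the only way the post-merge graph can consist of a single one-family component is for $G$ to already be a single one-family component $\{F\}$ at the beginning of $t'$. By Proposition \ref{prop:2pureclusters-p1}(i) this family satisfies $\pure_{t'-1}(F) \ge 2$; since a merge lowers a family's pure count by at most one (Proposition \ref{prop:families-evol}) while case (c) requires $\pure_{t'}(F) \le 1$, I obtain $\pure_{t'-1}(F)=2$ and $\pure_{t'}(F)=1$.

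Next I would count the clusters of $\C^{t'-1}$ via Proposition \ref{prop:clusters-structure}. The type-(ii) clusters are exactly the $\pure_{t'-1}(F)=2$ pure clusters of $F$, and I claim there are no type-(iii) clusters: right after $F$ is created, every non-excluded cluster contained in $\Pts(F)$ is pure w.r.t. $F$ by the definition of $F$ (line \ref{line:FC-creation}), and while the component of $F$ stays equal to $\{F\}$ no merge can create a non-pure cluster inside $\Pts(F)$ — merging two pure clusters of $F$ stays pure, merging a pure cluster with an excluded one lands in $\Ex$, and merging a pure cluster with a non-excluded cluster outside $\Pts(F)$ would create an edge enlarging the component. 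Hence $|\C^{t'-1}| = 2 + |\Ex_{t'-1}|$, and it remains to show $|\Ex_{t'-1}| \le k-1$. For this I would refine Proposition \ref{prop:L-Bound}: its proof charges the at most $m_{>1}$ clusters added by lines \ref{line:addLr1} and \ref{line:addLr2} to the nodes of $D$, each family contributing at most once (when its pure count drops from two to one), which already gives $|\Ex| \le m_1 + m_{>1} = k$. The sharpening is that $F$ is a root of $D$ (it is removed at $t'$ by case (c) without being made a child of any $F_C$) and, because its pure count is still $2$ at the start of $t'$, it contributes its single addition exactly at $t'$; hence the additions occurring strictly before $t'$ total at most $m_{>1}-1$, giving $|\Ex_{t'-1}| \le m_1 + (m_{>1}-1) = k-1$ and therefore $|\C^{t'-1}| \le k+1$.

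I expect the genuine work to lie not in this skeleton but in the two sharpenings: rigorously ruling out surviving type-(iii) clusters inside $\Pts(F)$ at the beginning of $t'$, and arguing that the unique surviving family has not yet made its contribution to $\Ex$, so that the global bound of $k$ from Proposition \ref{prop:L-Bound} can be tightened to $k-1$ at precisely this moment. Everything else follows from the monotonicity of pure counts and the bookkeeping already established in the preceding propositions.
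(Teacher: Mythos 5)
Your proof is correct and follows essentially the same route as the paper's: establish that $G$ is a single one-family component at the start of $t'$, count the clusters of $\C^{t'-1}$ via Proposition \ref{prop:clusters-structure} as the two pure clusters of $F$ plus at most $k-1$ clusters in $\Ex$ (sharpening Proposition \ref{prop:L-Bound} by the addition that $F$ triggers only at $t'$), and conclude $|\C^{t'-1}|\le k+1$, forcing $t'=n-k$. Your two deviations --- the structural argument that only case (c) can empty $G$ and that merges never delete nodes (where the paper instead argues by contradiction using pure clusters across two components and Proposition \ref{prop:families-evol}), and the explicit induction ruling out type-(iii) clusters inside $\Pts(F)$ (which the paper asserts without detail) --- are sound refinements within the same proof skeleton rather than a different approach.
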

\begin{proof}
We first argue that at the beginning of iteration $t'$,
$G$ has exactly one connected component.  
For the sake of reaching a contradiction, let us assume that $G$ has  two components
say $C$ and $C'$. 
By Proposition \ref{prop:2pureclusters-p1},  $C$   has one family,
say $F$, with $\pure_{t'-1}(F) \ge 2$.
Let $g$ and $g'$ be the clusters merged at iteration $t'$. 
If neither $g$ not $g'$ is pure with respect to $F$
we will have  $\pure_{t'}(F) \ge 2$ and the component
where $F$ lies after the merge will not satisfy the condition of 
case (c). Hence, there still be nodes in   $G$ by the end of $t'$.
Thus, one of the clusters, say $g$, is pure w.r.t. $F$. 
By using the same reasoning we conclude that 
$C'$   has at least one family,
say $F'$, with $\pure_{t'-1}(F') \ge 2$
and $g'$ is pure w.r.t. $F'$.
Since $g \cup g' \notin \Ex$ (item 4 of Proposition \ref{prop:families-evol})
then $g \notin \Ex$ and $g' \notin \Ex$, so that 
the merge of $g$ and $g'$ will create a component $C \cup C'$ in $G$
that has at least two families; this component does not 
not satisfy the conditions of 
case (c), so $G$ will have nodes by the end of iteration $t'$

We have proved that if $G$ 
has all its nodes removed at iteration $t'$, then
there is only one component in $G$ at the beginning of 
iteration $t'$. Let $C$ be this component: $C$ must have exactly one family,
say $F$, and  
$F$ must have at most two pure clusters, otherwise
case (c) is not reached and 
$G$ will still have nodes by the end of iteration $t'$.

Since $|C|=1$ no cluster satisfies 
condition (iii) of Proposition \ref{prop:clusters-structure}
and, thus, the total of clusters at the beginning of iteration 
$t'$ is given by the  number of clusters
 that satisfy either condition (i) or (ii)
 of Proposition \ref{prop:clusters-structure}.
 
Proposition \ref{prop:L-Bound} guarantees
that the number of clusters $h \in \Ex$ is at most $k$.
Thus, if $F$ has less than two pure clusters at the beginning of $t'$,
then the number of clusters $h$ that satisfies
(i) or (ii) of Proposition \ref{prop:clusters-structure}.
is at most $k+1$.

 If $F$ has two pure clusters at the beginning of $t'$
and $C$ satisfies case (c) at iteration $t'$, then
a pure cluster in $F$ is added to $\Ex$.
Thus,  Proposition \ref{prop:L-Bound} guarantees
that $\Ex$ has at most $k-1$ clusters at the beginning of iteration  $t'$.
Hence,  the number of clusters that 
that satisfies  (i) or (ii) of Proposition \ref{prop:clusters-structure}
is, again, at most $(k-1)+2=k+1$. 

Thus, the total  number at the beginning of iteration
$t'$ is at most $k+1$ and, hence,
  $t' = n-k$.
\end{proof}

\remove{
\begin{proposition}
For every $t \le n-k$ the graph $G$ does exist at the beginning of 
iteration $t$.
\label{prop:2pureclusters-p2}
\end{proposition}
\begin{proof}
Let $t'$ be the smallest number for which
$G$ does not exist at the beginning of iteration
$t'$. 

We first argue that at the beginning of iteration $t'-1$,
$G$ has exactly one connected component.  
For the sake of reaching a contradiction, let us assume that $G$ has  two components
say $C$ and $C'$. 
By Proposition \ref{prop:2pureclusters-p1},  $C$   has one family,
say $F$, with $\pure_{t'-2}(F) \ge 2$.
Let $g$ and $g'$ be the clusters merged at iteration $t'-1$. 
If neither $g$ not $g'$ is pure with respect to $F$
we will have  $\pure_{t'-1}(F) \ge 2$ and the component
where $F$ lies after the merge will not satisfy the condition of 
case (c), so $G$ will exist at the beginning of iteration $t'$.
Thus, one of the clusters, say $g$, is pure w.r.t. $F$. 
By using the same reasoning we conclude that 
$C'$   has at least one family,
say $F'$, with $\pure_{t'-2}(F') \ge 2$
and $g'$ is pure w.r.t. $F'$.
Since $g \cup g' \notin \Ex$ (item 4 of Proposition \ref{prop:families-evol})
then $g \notin \Ex$ and $g' \notin \Ex$, so that 
the merge of $g$ and $g'$ will create a component $C \cup C'$ in $G$
that has at least two families; this component does not 
not satisfy the conditions of 
case (c), so $G$ will exist at the beginning of iteration $t'$.

We have proved that if $G$ does not exist at the beginning of iteration $t'$ then
there is only one component in $G$ at the beginning of 
iteration $t'-1$. Let $C$ be this component: $C$ must have exactly one family,
say $F$, and  
$F$ must have at most two pure clusters, otherwise
case (c) is not reached and 
$G$ will exist at the beginning of iteration $t'$.

Since $|C|=1$ no cluster satisfies 
condition (iii) of Proposition \ref{prop:clusters-structure}
and, thus, the total of clusters at the beginning of iteration 
$t'-1$
is given by the  number of clusters
 that satisfy either condition (i) or (ii)
 of Proposition \ref{prop:clusters-structure}.
 
Proposition \ref{prop:L-Bound} guarantees
that the number of clusters $h \in \Ex$ is at most $k$.
Thus, if $F$ has less than two pure clusters at the beginning of $t'-1$,
then the number of clusters $h$ that satisfies
(i) or (ii) of Proposition \ref{prop:clusters-structure}.
is at most $k+1$.

 If $F$ has two pure clusters at the beginning of $t'-1$
and $C$ satisfies case (c) at iteration $t'-1$, then
a pure cluster in $F$ is added to $\Ex$.
Thus,  Proposition \ref{prop:L-Bound} guarantees
that $\Ex$ has at most $k-1$ clusters at the beginning of iteration  $t'-1$.
Hence,  the number of clusters that 
that satisfies  (i) or (ii) of Proposition \ref{prop:clusters-structure}
is, again, at most $(k-1)+2=k+1$. 

Thus, the total  number at the beginning of iteration
$t'-1$ is at most $k+1$ and, hence,
  $t' \ge n-k+1$.
\end{proof}
}
\noindent {\em Proof of Lemma  \ref{lem:2pureclusters}.}
It follows directly from Propositions 
\ref{prop:2pureclusters-p1} and \ref{prop:2pureclusters-p2}.

\section{Useful inequalities}

\begin{proposition} Let $p=\log_2 3  -1$ and let $a,b,x,y$ real numbers with 
 $0 \le a,b$ and $x,y \ge 1$.
Moreover, let $a x^p \ge b y^p$.
 Then,
 $$  a x^p + 2 by^p \le (a+b)(x+y)^p$$
\label{prop:calculations-avg}
\end{proposition}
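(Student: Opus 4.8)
The plan is to first \emph{homogenize} the inequality so as to eliminate the joint scale of $x$ and $y$. Note $p=\log_2 3-1=\log_2(3/2)\in(0,1)$. Since $x+y>0$ and $p>0$, I would divide both sides by $(x+y)^p$ and set $u=x/(x+y)$ and $v=y/(x+y)$, so that $u,v>0$ and $u+v=1$; the target becomes $a u^p+2b v^p\le a+b$, while the hypothesis $ax^p\ge by^p$ becomes exactly $au^p\ge bv^p$. A pleasant by-product of this reduction is that the assumption $x,y\ge 1$ plays no role here; only $x,y>0$ is used. The degenerate case $b=0$ is immediate, since then the claim reads $ax^p\le a(x+y)^p$, which holds because $x\le x+y$; so I would henceforth assume $b>0$.

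Next I would introduce the shorthand $s=au^p$ and $t=bv^p$, so the hypothesis reads $s\ge t\ge 0$, and since $a=su^{-p}$ and $b=tv^{-p}$, the goal $s+2t\le a+b$ rearranges to
\[ s\,(u^{-p}-1)+t\,(v^{-p}-2)\ge 0. \]
Because $u\in(0,1)$ and $p>0$ we have $u^{-p}-1>0$, so the first term is always nonnegative; the entire difficulty is concentrated in the possibly negative coefficient $v^{-p}-2$.

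The case split then writes itself. If $v^{-p}\ge 2$, both coefficients are nonnegative and the inequality is trivial. The interesting case is $v^{-p}<2$, and this is precisely where the hypothesis $s\ge t$ must be spent: using $s\ge t$ together with $u^{-p}-1>0$ gives $s(u^{-p}-1)\ge t(u^{-p}-1)$, which collapses the left-hand side to $t\,(u^{-p}+v^{-p}-3)$. The proof is then finished by the sub-inequality $u^{-p}+v^{-p}\ge 3$, which I expect to be the crux. This follows from the convexity of $z\mapsto z^{-p}$ on $(0,\infty)$ via Jensen: $\tfrac12(u^{-p}+v^{-p})\ge\bigl(\tfrac{u+v}{2}\bigr)^{-p}=(1/2)^{-p}=2^{p}=\tfrac32$, the last equality being the defining identity $2^{p}=2^{\log_2 3-1}=3/2$. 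This is exactly the point at which the specific value $p=\log_2 3-1$ is forced, and it makes the balanced configuration $u=v=1/2$, $a=b$ the equality case, consistent with the tightness of the underlying recursion.
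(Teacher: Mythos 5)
Your proof is correct, and it reaches the paper's conclusion by a noticeably different route. The paper works with $f(a,b,x,y)=(a+b)(x+y)^p-ax^p-2by^p$, observes $\partial f/\partial a=(x+y)^p-x^p>0$, and uses this monotonicity to push $a$ down to the constraint boundary $a=b(y/x)^p$; substituting there reduces everything to the two-variable inequality $(x^p+y^p)(x+y)^p\ge 3x^py^p$, which it proves by multiplying two AM--GM inequalities, $x^p+y^p\ge 2(xy)^{p/2}$ and $(x+y)^p\ge 2^p(xy)^{p/2}$, and invoking $2^{p+1}=3$. You instead homogenize to $u+v=1$, linearize in $s=au^p$, $t=bv^p$, and spend the hypothesis $s\ge t$ only in the case $v^{-p}<2$, reducing to $u^{-p}+v^{-p}\ge 3$, which you prove by Jensen applied to the convex map $z\mapsto z^{-p}$. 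These two crux inequalities are in fact equivalent: dividing $(x^p+y^p)(x+y)^p\ge 3x^py^p$ by $(xy)^p$ after normalizing $x+y=1$ gives exactly $u^{-p}+v^{-p}\ge 3$, and both proofs of it rest on the same identity $2^p=3/2$ that forces the exponent $\log_2 3$. What your version buys: it is calculus-free (no appeal to where a minimum sits), the case analysis makes transparent exactly where the hypothesis $ax^p\ge by^p$ is needed and where it is not, and the homogenization shows that the assumption $x,y\ge 1$ in the statement is superfluous (only $x,y>0$ is used). What the paper's version buys is brevity: once the boundary reduction is accepted, the AM--GM step finishes in two lines.
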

\begin{proof}
Let $$f(a,b,x,y)=(a+b)(x+y)^p -
 ax^p - 2 by^p .$$ We have that
 $$ \frac{\partial f}{\partial a} =(x+y)^p- x^p >0.$$

Since $a x^p \ge b y^p$, in the minimum of $f$,
we must have $a = b (y/x)^p$.
When $a= b (y/x)^p$,
we have that
$$f(a,b,x,y)=b \left ( \left( \frac{y}{x} \right) ^p+1)(x+y)^p -
 3 by^p \right ) = \\ b \left ( (y^p+x^p)(x+y)^p - 3 x^py^p \right)$$
Since $b \ge 0$ it is enough to prove  that
$(y^p+x^p)(x+y)^p - 3 x^py^p \ge 0$
 
By the AGM inequality
$$ x^p+y^p \ge 2 (x^{p/2})(y^{p/2})$$
and
$$ (x+y)^p \ge ( 2(xy)^{1/2})^p $$

By multiplying these inequalities we get

$$( x^p+y^p)(x+y)^p  \ge 2^p 2 (x^py^p)=3x^py^p$$
 \end{proof}

\begin{proposition}  The following holds
$$ \frac{\log 6}{\log 4} = \max \left \{ \frac{\log (2i-2)}{  \log i} | i \mbox{ is an integer larger than } 1 \right \} $$  
\label{prop:calculo-alpha}
\end{proposition}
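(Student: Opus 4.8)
The plan is to study the continuous extension $g(x) := \frac{\ln(2x-2)}{\ln x}$ on $(1,\infty)$, whose value at an integer $x=i$ is exactly $\frac{\log(2i-2)}{\log i}$ (the ratio is independent of the logarithm base), and to show that $g$ is \emph{unimodal}: strictly increasing up to a single peak $x^\ast$ and strictly decreasing afterwards, with $x^\ast \in (4,5)$. Granting this, among integers $g$ rises through $i=2,3,4$ and falls from $i=5$ onward, so the maximum over integers $i>1$ equals $\max\{g(4),g(5)\}$, and it only remains to compare these two explicit quantities.

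To establish unimodality I would differentiate. Since $\frac{d}{dx}\ln(2x-2)=\frac{1}{x-1}$, after clearing the positive factor $x(x-1)$ from the numerator of $g'$ the sign of $g'(x)$ on $(1,\infty)$ coincides with the sign of
$$h(x) := x\ln x - (x-1)\ln(2x-2).$$
A second differentiation gives the clean expression $h'(x)=\ln\frac{x}{2(x-1)}$, which is positive on $(1,2)$ and negative on $(2,\infty)$, vanishing only at $x=2$. Hence $h$ increases on $(1,2)$ and decreases on $(2,\infty)$, attaining its maximum $h(2)=\ln 2>0$; together with $\lim_{x\to 1^+}h(x)=0$ and $\lim_{x\to\infty}h(x)=-\infty$, this forces $h$ to have a unique root $x^\ast>2$ with $h>0$ on $(1,x^\ast)$ and $h<0$ on $(x^\ast,\infty)$. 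Thus $g$ is increasing on $(1,x^\ast)$ and decreasing on $(x^\ast,\infty)$, as claimed. Evaluating $h(4)=4\ln 4-3\ln 6>0$ and $h(5)=5\ln 5-4\ln 8<0$ then locates the peak in $(4,5)$, so that $\{2,3,4\}\subset(1,x^\ast)$ and $\{5,6,\dots\}\subset(x^\ast,\infty)$.

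Finally I would compare $g(4)=\frac{\ln 6}{\ln 4}$ with $g(5)=\frac{\ln 8}{\ln 5}$. Writing $\ln 8=3\ln 2$, $\ln 4=2\ln 2$ and $\ln 6=\ln 2+\ln 3$, the desired inequality $g(4)>g(5)$ is equivalent to
$$(\ln 2+\ln 3)\ln 5 > 6(\ln 2)^2,$$
which I would verify using sufficiently tight rational bounds on $\ln 2,\ln 3,\ln 5$. This last step is the main obstacle: the two sides differ by only about $10^{-3}$, so a coarse numerical comparison cannot separate $g(4)$ from $g(5)$, and one must control the logarithms to several digits of accuracy. Once this strict inequality is confirmed, $g(4)=\frac{\log 6}{\log 4}$ is the maximum over all integers $i>1$, completing the proof.
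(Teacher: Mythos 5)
Your proposal is correct, and it takes a genuinely different route from the paper. The paper's proof is purely finite: it asserts by manual inspection that $\frac{\log(2i-2)}{\log i}\le\frac{\log 6}{\log 4}$ for every integer $1<i\le 10$, and then kills the tail $i\ge 11$ with the crude bound $\frac{\log(2i-2)}{\log i}<1+\frac{1}{\log_2 i}\le 1+\frac{1}{\log_2 11}\le\frac{\log 6}{\log 4}$. Your calculus argument (sign of $g'$ governed by $h(x)=x\ln x-(x-1)\ln(2x-2)$, then $h'(x)=\ln\frac{x}{2(x-1)}$, hence unimodality with a unique peak $x^\ast$) replaces both the nine-case inspection and the tail estimate by structure: once $h(4)>0>h(5)$ is known, only the single comparison $\frac{\ln 6}{\ln 4}>\frac{\ln 8}{\ln 5}$ remains. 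A pleasant bonus of your route that you did not exploit: the two sign evaluations are exact integer comparisons, since $h(4)=\ln 4^4-\ln 6^3=\ln\frac{256}{216}>0$ and $h(5)=\ln 5^5-\ln 8^4=\ln\frac{3125}{4096}<0$, so no approximation is needed there. As for what each approach buys: the paper's is elementary (no derivatives) but hides roughly ten numerical inequalities inside ``inspect manually,'' including the $i=5$ case, which is exactly your delicate inequality $(\ln 2+\ln 3)\ln 5>6(\ln 2)^2$ with its margin of about $10^{-3}$; your approach isolates that one tight inequality as the only genuinely numerical fact needed, at the cost of a short calculus lemma. Your final step, left as a plan, is no less rigorous than the paper's unexplained inspection of the same quantity, but to be complete you should exhibit the bounds; four decimal digits suffice: from $\ln 2<0.6932$, $\ln 3>1.0986$, $\ln 5>1.6094$ one gets $(\ln 2+\ln 3)\ln 5>1.7917\cdot 1.6094>2.8835$ while $6(\ln 2)^2<6\cdot(0.6932)^2<2.8832$, which closes the argument.
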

\begin{proof}
We can inspect manually that
$ \log(2i-2) / \log i \le \frac{\log 6}{\log 4},$ for 
every integer smaller than 11.
For $i \ge 11$ we have
 that
$$  \frac{\log(2i-2)}{\log i} < \frac{\log (2i)}{\log i} = 
1+ \frac{1} {\log i} \le 1+ \frac{ 1} {\log 11}
\le \frac{\log 6}{\log 4} $$.
\end{proof}

\begin{proposition} 
Let $p$ be a real number that satisfies $p \ge \log_i (2i-2),$ for every $i>1$.
Moreover, let $\ell$ be a positive number larger than 1 
and let  $1 \le a_1 \le a_2 \ldots \le a_\ell $.
Then, $$ a_\ell^p+ a_{\ell-1}^p + \sum_{i=1}^{\ell-2} 2a_i^p  \le
\left ( \sum_{i=1}^\ell a_i\right )^p$$
\label{prop:calculations2}
\end{proposition}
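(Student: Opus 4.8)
The plan is to strip the inequality down to a convexity statement on the ordered simplex and then simply evaluate at the vertices. Writing $w_i=1$ for $i\in\{\ell-1,\ell\}$ and $w_i=2$ for $i\le \ell-2$, the claim is $\sum_{i=1}^\ell w_i a_i^p \le \bigl(\sum_{i=1}^\ell a_i\bigr)^{p}$. Since both sides are homogeneous of degree $p$, I would set $S=\sum_i a_i$, put $b_i=a_i/S$, and divide through by $S^p$; this reduces the statement to showing $M(b):=\sum_{i=1}^\ell w_i b_i^{p}\le 1$ for every $b$ in the ordered simplex $\Delta=\{b:\ 0\le b_1\le\cdots\le b_\ell,\ \sum_i b_i=1\}$. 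In particular the hypothesis $a_i\ge 1$ plays no role beyond positivity and is subsumed by this rescaling.

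Next I would note that $M$ is convex on $\Delta$, being a nonnegative combination of the convex maps $t\mapsto t^{p}$ (here $p\ge \log_2 2=1$ by hypothesis with $i=2$). Hence its maximum over the polytope $\Delta$ is attained at a vertex. The structural fact I would establish is that $\Delta$ is the cross-section of the simplicial cone $\{0\le b_1\le\cdots\le b_\ell\}$, whose extreme rays are the vectors $u_m=(0,\dots,0,1,\dots,1)$ with a block of $q=\ell-m+1$ trailing ones; normalizing these gives exactly $\ell$ vertices $v^{(q)}$, $q=1,\dots,\ell$, where $v^{(q)}$ has its last $q$ coordinates equal to $1/q$ and the rest equal to $0$. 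Concretely, reading off $b=\sum_{m}(b_m-b_{m-1})\,u_m$ (with $b_0=0$) exhibits every $b\in\Delta$ as an explicit convex combination $b=\sum_q\lambda_q v^{(q)}$, so $M(b)\le \max_q M(v^{(q)})$ follows from Jensen's inequality applied to the convex $M$, keeping the argument elementary.

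The heart of the matter is the evaluation of $M$ at these vertices, and this is exactly where the hypothesis on $p$ enters. At $v^{(1)}=(0,\dots,0,1)$ only one weight-one coordinate is nonzero, so $M(v^{(1)})=1$. For $2\le q\le\ell$, among the $q$ nonzero coordinates (all equal to $1/q$) precisely two carry weight $1$ (the positions $\ell,\ell-1$, both of which lie in the nonzero block once $q\ge 2$) and the remaining $q-2$ carry weight $2$, so
\[
M\bigl(v^{(q)}\bigr)=\bigl(2+2(q-2)\bigr)\Bigl(\tfrac1q\Bigr)^{p}=\frac{2q-2}{q^{p}}.
\]
Thus $M(v^{(q)})\le 1$ is equivalent to $q^{p}\ge 2q-2$, i.e.\ to $p\ge \log_q(2q-2)$, which holds for every integer $q$ with $2\le q\le\ell$ by hypothesis. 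Combined with $M(v^{(1)})=1$, this gives $M(v^{(q)})\le 1$ at every vertex, whence $M\le 1$ on all of $\Delta$ and the desired inequality follows after undoing the normalization.

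I expect the main obstacle to be the clean identification of the vertices of the ordered simplex together with the bookkeeping that shows the per-vertex bound collapses \emph{exactly} to $p\ge\log_q(2q-2)$; once this correspondence is pinned down the remaining steps are routine. It is worth remarking that the all-ones configuration corresponds, after normalization, to the interior point $b_i=1/\ell$, whereas the binding constraints come from the \emph{vertices} $v^{(q)}$, so the reduction to extreme points is what makes the hypothesis tight rather than merely sufficient.
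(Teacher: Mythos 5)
Your proof is correct, but it takes a genuinely different route from the paper's. The paper argues by calculus on the difference $f(\mathbf{a})=\left(\sum_{i=1}^{\ell} a_i\right)^p-\left(a_\ell^p+a_{\ell-1}^p+\sum_{i=1}^{\ell-2}2a_i^p\right)$: it shows via the sign of partial derivatives that the minimum of $f$ is attained when successively more of the coordinates are forced equal (first $a_{\ell-2}=a_{\ell-1}=a_\ell$, then $a_t=a_{t-1}$ step by step), reducing everything to the all-equal configuration, where nonnegativity is exactly $p\ge\log_\ell(2\ell-2)$. You instead homogenize, maximize the convex function $M(b)=\sum_i w_i b_i^p$ over the ordered simplex, and check the finitely many extreme points $v^{(q)}$, each of which yields precisely the constraint $p\ge\log_q(2q-2)$. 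Your route is more elementary — no differentiation, just Jensen applied to the explicit decomposition $b=\sum_m(b_m-b_{m-1})u_m$ — and it makes transparent why the hypothesis is needed for every integer $i>1$ rather than only $i=\ell$; the same family of integers appears in the paper's proof too, but only implicitly, through the bound $p\ge\log\bigl(2(\ell-t)\bigr)/\log(\ell-t+1)$ invoked at each reduction step. One small correction to your closing remark: after normalization the all-ones configuration is $b=(1/\ell,\ldots,1/\ell)=v^{(\ell)}$, which \emph{is} a vertex of the ordered simplex (it is interior only to the standard simplex), so it is itself one of the binding extreme points rather than a point to be contrasted with them; this does not affect the validity of your argument.
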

\begin{proof}
Let $\avec=(a_1,\ldots,a_\ell)$.
We define
$$f(\avec):= \left( \sum_{i=1}^{\ell} a_i \right )^p -
\left( a_{\ell}^p+ a_{\ell-1}^p + \sum_{i=1}^{\ell-2} 2a_i^p \right).$$
We need to show that $f(\avec) \ge 0$, for every valid $\avec$.

First, we consider the case $\ell=2$.
In this case, 
$$ \frac{\partial f}{\partial a_2}= p(a_1+a_2)^{p-1} -  p  a_2^{p-1} >0 $$
Thus, in the minimum of $f$ we must have $a_1=a_2$.
When this happens,
$$f(\avec)=(2a_1)^{p}-2a_1^p  > 0.$$  

Now, we consider the case $\ell>2$.
For $t \le \ell-2$, we
define 
$$f^t(\avec):=\left( \sum_{i=1}^{t-1} a_i + (\ell-t+1)a_t \right )^p -
\left( 2(\ell - t) a_t^p+  \sum_{i=1}^{t-1} 2a_i^p \right)$$
Note that $f^t(\avec)=f(\avec')$
where $a'_i=a_i$ for $i <t$ and $a'_i=a_t$ for $i \ge	 t$. 
We show that the minimum of $f$ is equal to the minimum of $f^t$.


For $j > \ell-2$, 
$$ \frac{\partial f}{\partial a_j}=
p( \sum_{i=1}^\ell a_i)^{p-1} -  p a_j^{p-1} > 0.$$ 
Hence, in the minimum of $f$ we have that $a_{\ell-2}=a_{\ell-1}=a_{\ell}$.
Thus, the minimum of $f$ equals the minimum of $f^{\ell-2}$.
Now, we show that the minimum of $f^t$ is equal to the minimum of
$f^{t-1}$ for $t \le \ell-2$.

We have that
\begin{align*}
\frac{\partial f^t}{\partial a_t} = (\ell-t+1) p\left ( \sum_{i=1}^{t-1} a_i + (\ell-t+1)a_t   \right )^{p-1} - 2(\ell - t) p a_t^{p-1} > \\ 
 p (\ell-t+1)^p a_t^{p-1} - 2(\ell - t) p a_t^{p-1} \ge \\
0,
\end{align*}
where the second inequality holds because
the definition of $p$ assures that 
$p \ge \frac{ \log 2 (\ell - t) }{  \log (\ell-t+1)}.$
Hence, in the minimum of $f^t$ we have that $a_t=a_{t-1}$, so that the minimum of $f^t$ equal the minimum of $f^{t-1}$.
Therefore, we can conclude that the minimum of $f$ is equal to the minimum 
of $f^1$.

Now we note that 
$$f^1(\avec) = (\ell a_1)^p-(2\ell-2)(a_1)^p
\ge (2\ell-2)(a_1)^p - (2\ell-2)(a_1)^p=0,$$
where the inequality holds because $p \ge \log_\ell (2\ell-2)$
\end{proof}

\remove{
\section{Proof of Theorem \ref{thm:general}}
The proof is very similar to that
of Theorem \ref{thm:main1}, so
we only explain the differences.
First, in  Algorithm \ref{alg:FamilyGeneration1}
we replace \complink \, with \link$_{f}$.
Note that we maintain the target
clustering $\T$.

Proposition \ref{prop:reg-family-number}
does not depend on the distance
function employed to decide
which clusters shall be merged at each iteration, so it also holds.

In Proposition \ref{prop:diameter-expansion},
the \complink's rule is just used to
prove inequalities (\ref{eq:7apr24-1})-(\ref{eq:7apr24-3}). 
However, these inequalities
are valid if the function $f$
satisfies condition (i).
In fact, in the proof
of Proposition \ref{prop:diameter-expansion} 
$a'$ and $b'$ are arbitrarily chosen
points from $g$ and $g'$, respectively.
Here, we chose 
 $a' \in g$ and $b' \in g'$  so that they satisfy
$dist(a',b')=\min \{dist(x,y)|(x,y) \in g \times g'\}$.  Thus,
inequalities (\ref{eq:7apr24-1})-(\ref{eq:7apr24-3}) hold
$$ dist(a',b') \le f(g,g) \le f(h,h') 
\le \diamset(h \cup h') \le \\
\diamset(F),
$$
where the first and the third
inequality hold due to condition (i)
while the second holds due
to the choice of \link$_f$.

Finally, in Theorem \ref{thm:main1}. the \complink's rule
is used to justify 
equations (\ref{lin:thm1})-(\ref{lin:thm3}).
We have that
\begin{align}
\cost(g \cup g') \le  \\
\max\{ \cost(g),\cost(g'),f(g,g') \} \le \\
\max\{ \cost(g),\cost(g'),f(h,h') \} \le \\
\max\{ \cost(g),\cost(g'),\diamset(h \cup h') \} \le \\
\max\{ \cost(g),\cost(g'),\diamset(F) \}  \le \\
\le k^{1.59} \OPTDIMK,
    \end{align}
where the first inequality holds
due to condition (ii),
the second due to the choice
of \link$_{f}$,  the third
due to condition (i) and the last one follows
from induction and Proposition \ref{prop:diameter-expansion}.

}

\end{document}